\documentclass{lcs2}
\usepackage{amsthm}
\usepackage{xspace}
\usepackage{tabularx}
\usepackage{multirow}

\newtheorem{proposition}{Proposition}
\newtheorem{theorem}{Theorem}

\newtheorem{corollary}{Corollary}
\theoremstyle{definition}
\newtheorem{definition}{Definition}[section]

\DeclareRobustCommand{\HalluPrism}{\texttt{HalluPrism}\xspace}
\definecolor{promptheader}{HTML}{664E8E}
\definecolor{promptbody}{HTML}{F6F5F9}
\newtcolorbox{promptbox}[2][]{
  enhanced,
  breakable,
  colback=promptbody,
  colframe=promptheader,
  coltitle=white,
  fonttitle=\bfseries,
  title=#2,
  arc=3mm,
  boxrule=1pt,
  left=6pt,
  right=6pt,
  top=6pt,
  bottom=6pt,
  #1
}

\papertitle{\HalluPrism{}: When Multimodal Uncertainty Should Diagnose, Not Decide}
\papershorttitle{\HalluPrism}
\papershortauthors{A. Prakash, S. Dasgupta, and T. Chakraborty}
\paperauthors{%
  Aman Prakash \textsuperscript{\tiny 1}\quad
  Sourish Dasgupta \textsuperscript{\tiny 2}\quad
  Tanmoy Chakraborty \textsuperscript{\tiny 1}%
}
\paperaffil{%
  \textsuperscript{1} LCS2 Lab, Indian Institute of Technology Delhi, New Delhi, India\\
  \textsuperscript{2} KDM Lab, Dhirubhai Ambani University, Gujarat, India%
}
\paperemails{{\ttfamily amanprakash.connect@gmail.com, sourish\_dasgupta@dau.ac.in, tanchak@iitd.ac.in}}
\paperkeywords{Multimodal Large Language Models, Multimodal Uncertainty, Hallucination Diagnosis, Behavioral Probing, Cross-Modal Grounding, Selective Prediction}

\paperabstract{%
Multimodal Large Language Models (MLLMs) can assign similar confidence to answers that fail for different reasons. We propose \HalluPrism{}, a behavioral diagnostic that re-runs an answer after visual degradation, blank-image replacement, and grounding or relation checks. These targeted probes yield a signature over visual-perturbation sensitivity $(V)$, image-removal confidence retention $(L)$, and grounding/relation-probe instability $(A)$. Across 58K+ examples from four benchmarks and four MLLMs, image-removal confidence retention is most prevalent, while grounding/relation-probe instability better separates failure families. Only 18 of 48 source-target checks are diagonally aligned, so the coordinates should be interpreted jointly rather than as independent causal sources. With the dataset fixed, the joint signature improves failure-family AUROC from $0.634$ to $0.769$ on HallusionBench and from $0.707$ to $0.817$ on VizWiz, with smaller gains on POPE and VSR. In pooled XGBoost analysis, AUROC rises from $0.78$ with scalar confidence to $0.95$ with $(V,L,A)$ and $0.97$ when confidence is added. The same signature does not automatically improve correctness ranking. The three tested direct scalarizations can harm it. These results separate failure diagnosis from abstention scoring: multimodal uncertainty should characterize failure structure before it is used to decide whether to abstain or correct.%
}

\appendixtocon
\appendixtocname{Appendix Contents}

\begin{document}
\makelabtitle

\section{Introduction}
\label{sec:introduction}

A visually-grounded answer can be unsafe even when a Multimodal Large Language Model (MLLM) assigns high confidence to it. Consider a street-scene image where a traffic cone is to the right of a parked car, and the user asks, ``\textit{Is the traffic cone to the left of the car}?'' The model may answer ``\textit{yes}'' with high confidence. This single confident answer can hide different risks. The cone may be small, blurred, or partly occluded, suggesting weak visual evidence. The model may still answer ``\textit{yes}'' after the image is replaced with a blank canvas, showing answer persistence without the image, which may be consistent with language-prior reliance on the question alone. It may also recognize both the cone and the car while reversing the left-right relation, suggesting cross-modal binding failure. In such cases, while scalar uncertainty can help decide whether the model should abstain, it cannot explain the failure source.

Existing hallucination benchmarks already show that multimodal errors are heterogeneous, covering object hallucination, visual illusion, attribute error, spatial error, and answerability failure \citep{li2023evaluating,guan2024hallusionbench,wang2023amber,liu2023mmbench}. Confidence, calibration, and selective prediction methods usually rank likely correctness or decide whether the model should abstain \citep{chow1970optimum,kadavath2022language,kuhn2023semantic,geng2023survey}, while recent MLLM uncertainty work moves closer to hallucination detection through evidential conflict, hidden-state reliability, and uncertain visual tokens \citep{huang2025evidentialconflict,mushtaq2025harmony,seo2025visualtokenuncertainty}. However, scalar abstention is not an internal model guarantee. It is rather a deployment guardrail whose validity depends on designer-chosen scores, thresholds, validation sets, and risk assumptions that may fail for new open-world instances. We, therefore, argue that diagnosis should be the core object of multimodal uncertainty --- before answering or correcting, the system should characterize \textit{whether behavior is consistent with visual sensitivity, image-removal confidence retention, or cross-modal binding instability}.

\begin{figure*}[t]
  \centering
  \includegraphics[width=\textwidth]{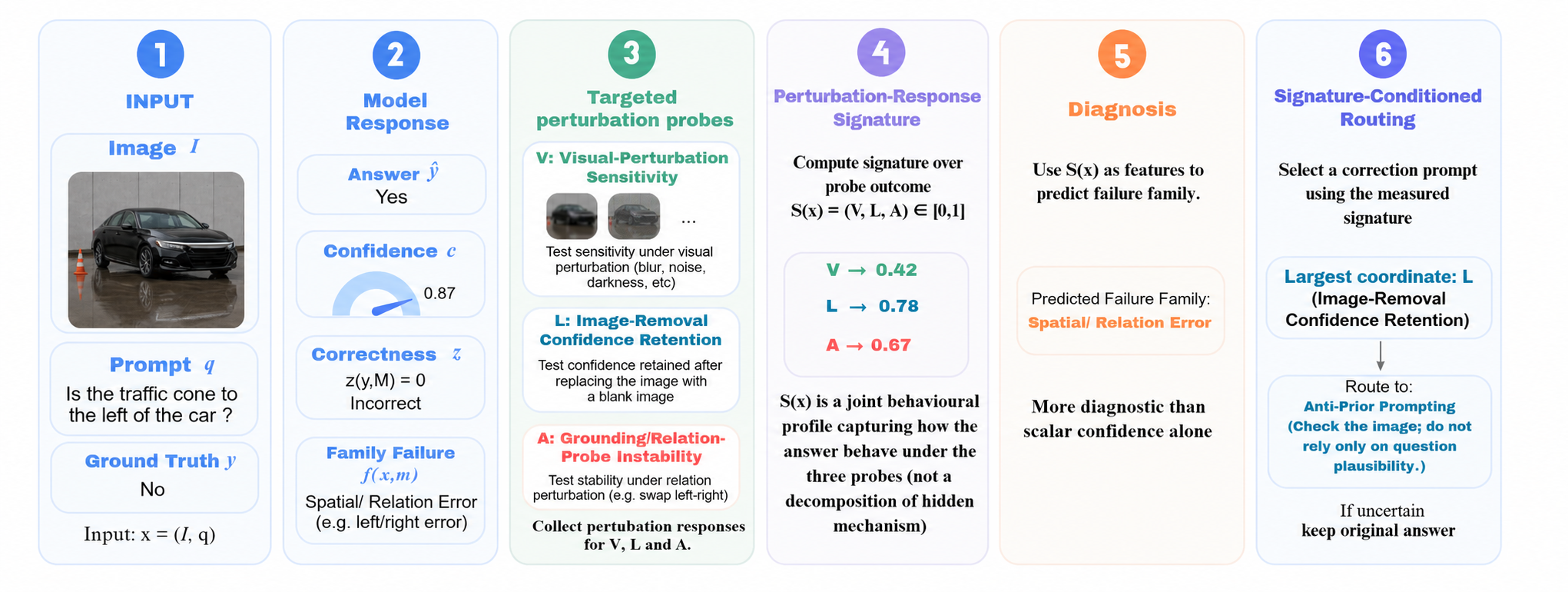}
    \caption{\textbf{An overview of the {\tt HalluPrism} perturbation-response protocol}. Here $L$ denotes image-removal confidence retention; language-prior reliance is only a possible interpretation of this response. The scores are behavioral probes, not a causal decomposition. Full prompt templates and probe parameters are given in Appendix~\ref{app:prompt_catalog}.}
    \label{fig:multimodal_probes}
\end{figure*}


We propose \HalluPrism{}, a perturbation-response diagnostic for MLLM uncertainty. Given an image, a prompt, and a generated answer, \HalluPrism{} computes scalar confidence $c$ and a perturbation-response signature $s(x)=(V,L,A)$ for visual-perturbation sensitivity $(V)$, image-removal confidence retention $(L)$, and grounding/relation-probe instability $(A)$. In the traffic-cone example, $V$ increases when blur, crop, darkness, or noise changes the answer or its confidence, $L$ increases when confidence is retained relative to the clean-image prediction after blank-image replacement, and $A$ increases when grounding or relation perturbations expose instability in the left-right binding. We additionally use a stricter image-removal control that requires the original answer itself to persist after image removal. The signature $s(x)$ is a perturbation-response profile, not a decomposition of hidden mechanisms. Hence, a high $L$ does not prove that the model never used the image, while high $V$ or $A$ records sensitivity rather than an isolated internal cause.

We use \HalluPrism{} as a \textit{joint} behavioral signature for diagnosis and correction-path selection. It first acts as a diagnostic feature for failure-family prediction. It then selects a correction path for risky predictions: high visual-perturbation sensitivity selects \textit{visual re-grounding} of the cone and car, high image-removal confidence retention selects \textit{anti-prior prompting} that asks the model to check the image rather than rely on question-only plausibility, and high grounding/relation-probe instability selects \textit{relation checking} of whether the cone is to the left or right of the car. We call this signature-conditioned choice of correction prompt \textit{intervention routing}.

Across $58{,}440$ examples from Gemma 4, LLaVA-Mistral/Vicuna, and Qwen3-VL on HallusionBench, POPE, VSR, and VizWiz-VQA, we find four patterns. First, raw prevalence does not have diagnostic value. Although $L$ dominates $99.7\%$ of samples, $A$ is seen to be the strongest tree-based feature for failure-family prediction. Second, the scores are not independent labels. If $V$, $L$, and $A$ were indeed source-specific, then visual, language, and alignment-ruined inputs would have mainly moved corresponding matching coordinates. However, across $4\times4\times3$ ruined-condition checks, only $18/48$ such events pass. Visual inputs pass in $11/16$ settings, while language and alignment inputs pass in $2/16$ and $5/16$. Thus, $(V,L,A)$ is a joint Perturbation-Response Signature, not independent causes. Third, this joint signature remains highly diagnostic. With the dataset fixed, $(V,L,A)$ improves AUROC over scalar confidence from $0.634$ to $0.769$ on HallusionBench and from $0.707$ to $0.817$ on VizWiz, with smaller gains on POPE and VSR. In the pooled XGBoost analysis, AUROC improves from $0.78$ with scalar confidence to $0.95$ with $(V,L,A)$, and to $0.97$ with confidence added.

Fourth, as a secondary application, the signature guides safer correction. Matched routing selects visual re-grounding, anti-prior prompting, or relation checking, instead of one generic prompt for all examples. On Qwen3-VL/VSR, generic prompting drops accuracy from $87.4\%$ to $76.2\%$ and breaks $15.5\%$ of correct answers. Matched routing maintains $87.6\%$ accuracy and lowers breakage to $2.7\%$.

\noindent\textbf{Contributions.} (i) We argue that perturbation-response diagnosis should be the core of multimodal uncertainty. (ii) We propose \HalluPrism{}, a perturbation-response signature over visual-perturbation sensitivity, image-removal confidence retention, and grounding/relation-probe instability. (iii) We show that controlled probes expose interventional entanglement rather than clean source separability. (iv) We show that this entangled signature improves failure-family diagnosis, while naive scalarization can hurt abstention ranking.\footnote{The source code is available at \url{https://github.com/gitgoap/HalluPrism}.}

\section{Related Work}
\label{sec:related_work}

\noindent \textbf{Hallucination Evaluation.} Vision-language hallucination work shows that multimodal errors are heterogeneous. Captioning and Visual Question Answering connect object hallucination and visual mis-reasoning to language priors and weak grounding \citep{rohrbach2018object,goyal2017making}. Benchmarks such as POPE, HallusionBench, AMBER, MME, and MMBench make this explicit across object presence, visual illusion, attributes, relations, perception, reasoning, and instruction following \citep{li2023evaluating,guan2024hallusionbench,wang2023amber,fu2023mme,liu2023mmbench}. \HalluPrism{} instead measures a behavioral Perturbation-Response Signature over visual-perturbation sensitivity, image-removal confidence retention, and grounding/relation-probe instability.

\paragraph{Uncertainty and Abstention.} Reject-option and selective-classification work treats uncertainty as a decision object. A model answers when risk is low and abstains otherwise \citep{chow1970optimum,geifman2017selective,geifman2019selectivenet}. Calibration asks whether confidence matches empirical correctness \citep{guo2017calibration}. Language-model uncertainty uses self-estimation, token probabilities, semantic equivalence, and sample inconsistency as reliability signals \citep{kadavath2022language,kuhn2023semantic,geng2023survey,manakul2023selfcheckgpt}. Multimodal uncertainty adds visual abstention, evidential conflict, hidden-state reliability, and visual-token uncertainty \citep{eisenschlos2024selectively,srinivasan2024selective,huang2025evidentialconflict,mushtaq2025harmony,seo2025visualtokenuncertainty}. Our critique is narrower. Abstention has statistical value under its assumptions. However, a thresholded score is warranted by validation transfer and need not diagnose risk in a new open-world instance.

\paragraph{Probing and Attribution.} Behavioral testing studies model behavior through controlled input changes. CheckList, contrast sets, and counterfactually augmented data expose local failures under small edits \citep{ribeiro2020checklist,gardner2020contrastsets,kaushik2020counterfactually}. Attribution methods instead use visual-region localization, attention flow, or image-token attention analysis \citep{selvaraju2017gradcam,jain2019attention,abnar2020quantifying,gong2024damro}. \HalluPrism{} follows the behavioral-testing view. It asks how the output changes under pressure on visual evidence, question-only priors, or cross-modal binding.

\paragraph{Mitigation and Routing.} Hallucination mitigation often applies a fixed correction rule during decoding or after generation. Visual Contrastive Decoding, OPERA, M3ID, Woodpecker, and Volcano reduce hallucination through visual contrast, decoding penalties, image-grounded amplification, or post-hoc revision \citep{leng2023mitigating,huang2023opera,favero2024multimodal,yin2023woodpecker,lee2023volcano}. Recent inference-time methods refine this direction through inter-modality calibration, attention calibration, and visual-aware sparsification \citep{li2025imccd,fazli2025caac,zhuang2025vasparse}. \HalluPrism{} is diagnosis-first. It profiles a risky prediction and then selects a matched prompt-based correction.

\section{Methodology}
\label{sec:halluprism}
We study multimodal uncertainty as a \textit{pre-decision} diagnostic object. The goal is not to build a better scalar abstention score. We instead test whether a prediction carries a behavioral Perturbation-Response Signature that can diagnose failure family and route correction without assuming clean causal separability. The design therefore separates three objects that are often conflated: correctness ranking, failure-source diagnosis, and correction-path choice. This separation is important because the same Perturbation-Response Signature can be useful for explaining a failure, can be harmful under the direct scalarizations tested here, and useful again when choosing a safer correction path. A diagnostic representation can therefore improve failure typing without necessarily improving an answer-vs.-abstain score.

All cited formal results appear in Appendix~\ref{app:formal_support}.

\paragraph{Prediction and Perturbation-Response Signatures.}
\label{subsec:prediction_source_signatures}
Consider an image $I$ where a traffic cone is to the right of a parked car, and a prompt $q$ asks whether the cone is to the left of the car. If an MLLM $M$ answers ``\textit{yes}'' to $x=(I,q)$ while the reference answer $y$ is ``\textit{no}'', the output is incorrect\footnote{We normalize answers before comparison, mapping surface variants such as ``yes'', ``Yes.'', or ``yes, it is'' to the same canonical task label.}. We write correctness as $z(x,M)=\mathbf{1}[\mathrm{norm}(\hat{y})=\mathrm{norm}(y)]$. Incorrect outputs receive a coarse failure-family label $f(x,M)\in\mathcal{F}$, such as \texttt{object-hallucination}, \texttt{spatial/relation-error}, \texttt{attribute-error}, or \texttt{answerability-failure}. A scalar confidence estimator maps the input-output instance to $c(x,M)\in[0,1]$. For a tokenized answer $\hat{y}=(t_1,\ldots,t_N)$, we use the length-normalized confidence probability:
\begin{equation}
\begin{split}
& c(\hat{y}\mid I,q)=\exp\left(\frac{1}{N}\sum_{i=1}^{N}\log p(t_i\mid t_{<i},I,q)\right).
\end{split}
\label{eq:scalar_confidence}
\end{equation}
For answerability-focused datasets, we use the dataset-specific normalized answer probability. This scalar score ranks likely correctness. However, it does not say whether the answer depends on weak visual evidence, survives without the image, or fails because the image-text relation is misbound. Hence, scalar confidence is a correctness-ranking surrogate, not a grounded reliability certificate. This non-identifiability is formalized in Proposition~\ref{prop:scalar_nonidentifiability}. We attach a $[0,1]$-bounded Perturbation-Response Signature to the same prediction:
\begin{equation}
\begin{split}
& s(x,M)=(V(x,M),L(x,M),A(x,M)).
\end{split}
\label{eq:source_signature}
\end{equation}
Here $V$ measures visual-perturbation sensitivity, $L$ measures image-removal confidence retention, and $A$ measures grounding/relation-probe instability. These scores are behavioral probe responses, not hidden causal mechanisms. They remain useful because failure families may occupy different regions of the joint $(V,L,A)$ space.

\paragraph{Perturbation Probes.}
\label{subsec:perturbation_probes}
We measure each score by asking how the same prediction behaves under controlled input pressure. The clean answer $\hat{y}$ is the baseline. A probe changes one part of the input and produces $\tilde{y}$. We compare answers with $\mathrm{flip}(\hat{y},\tilde{y})$, which equals $1$ when the answer changes, and confidence shifts with $\Delta c(\hat{y},\tilde{y})=|c(\hat{y})-c(\tilde{y})|$. Flips capture discrete instability. Confidence shifts capture softer changes. The \textbf{visual score} $V$ measures sensitivity to degraded visual evidence. We keep the prompt fixed and use Gaussian blur ($\sigma\!=\!15$), center crop ($50\%$ per dimension, resized back), reduce brightness to $30\%$ of original image, and Gaussian noise ($\mu\!=\!0,\sigma\!=\!25$). For the $k$-th visual perturbation answer $\tilde{y}^{v}_{k}$:
\begin{equation}
\begin{split}
& V=0.6\cdot\frac{1}{K}\sum_{k=1}^{K=4}\mathrm{flip}(\hat{y},\tilde{y}^{v}_{k}) \\
& \qquad +0.4\cdot\min\left(2\cdot\frac{1}{K}\sum_{k=1}^{K=4}\Delta c(\hat{y},\tilde{y}^{v}_{k}),1\right).
\end{split}
\label{eq:visual_score}
\end{equation}
The larger flip weight makes changed answers the primary signal of visual fragility. The \textbf{image-removal score} $L$ is operationalized as confidence retention after removing image evidence. We replace the image with a uniform blank image $I_{\mathrm{blank}}$ and keep the prompt fixed. Let $\tilde{y}^{\ell}$ be the blank-image answer. We define:

\begin{equation}
\begin{split}
& L \equiv L_{\mathrm{conf}}
=\min\left(\frac{c(\tilde{y}^{\ell})}{c(\hat{y})+\epsilon},1\right).
\end{split}
\label{eq:language_score}
\end{equation}

A high $L=L_{\mathrm{conf}}$ means that confidence is largely retained after image removal, even when the blank-image answer differs from the clean answer. It is a behavioral marker of confidence retention, not a causal claim that the model ignores the image. Since Eq.~\ref{eq:language_score} can saturate, we use the full $(V,L,A)$ vector for diagnosis. Appendix~\ref{app:probe_sensitivity} reports answer-persistence, match-only, and denominator-stable controls.

\begin{equation}
\begin{split}
& L_{\mathrm{persist}}
=\mathbf{1}[\mathrm{match}(\hat{y},\tilde{y}^{\ell})]
\cdot L_{\mathrm{conf}}.
\end{split}
\label{eq:language_persist}
\end{equation}

The stricter score is nonzero only when the original answer itself persists after image removal. Neither score alone is a causal claim that the model ignores the image. In the main results, $L$ denotes $L_{\mathrm{conf}}$ unless stated otherwise. Appendix~\ref{app:probe_sensitivity} reports the corresponding controls.

The \textbf{alignment score} $A$ measures sensitivity to image-text binding pressure. We use grounding re-checking for all datasets and relation swapping when an explicit relation is available. VSR supports relation swaps because its captions contain relations such as \textit{left}, \textit{right}, \textit{above}, and \textit{behind}. In the cone-car example, a left-of question is swapped into the corresponding right-of question. A relation-sensitive model should not preserve the same normalized answer under both prompts. Let $\tilde{y}^{a}_{\mathrm{ground}}$ be the grounding-prompt answer and $\tilde{y}^{a}_{\mathrm{rel}}$ the relation-swapped answer when available. We define:
\begin{equation}
\begin{split}
& P_g=0.5\cdot\mathrm{flip}(\hat{y},\tilde{y}^{a}_{\mathrm{ground}})\\
& \qquad +\min(\Delta c(\hat{y},\tilde{y}^{a}_{\mathrm{ground}}),0.5), \\
& P_r=\mathbf{1}[\mathrm{available}]\cdot
\mathbf{1}[\mathrm{norm}(\tilde{y}^{a}_{\mathrm{rel}})=\mathrm{norm}(\hat{y})], \\
& A=\begin{cases}
\min((1-\lambda)P_g+\lambda P_r,1), & \mathrm{if\ available},\\
\min(P_g,1), & \mathrm{otherwise},
\end{cases}\\
& \lambda=0.6.
\end{split}
\label{eq:alignment_score}
\end{equation}
The relation component receives larger weight because it gives a sharper binding stress. The expected answer should change under an explicit relation swap. The grounding component gives a weaker universal alignment probe. We therefore avoid comparing raw $A$ magnitudes across relation-bearing and non-relation-bearing datasets. Cross-dataset diagnosis uses dataset and model controls, while routing uses within-setting normalized dominance. Appendix~\ref{app:alignment_sensitivity} reports the no-VSR control.

\paragraph{Interpreting And Using Perturbation-Response Signatures.}
\label{subsec:diagnosis_abstention_routing}
Once the probes produce $(V,L,A)$, we separate three questions. Which coordinate is largest? Does a targeted intervention mainly move its intended coordinate? Does the full signature help diagnosis, abstention, or routing? A large coordinate is not a causal assignment. Raw dominance only counts the largest coordinate, $s^{*}_{\mathrm{raw}}(x,M)=\arg\max_{Q\in\{V,L,A\}} Q(x,M)$. Because one coordinate can saturate, we use within-setting normalized dominance for routing: $s^{*}_{\mathrm{norm}}(x,M)=\arg\max_{Q\in\{V,L,A\}} z_Q(x,M)$, where $z_Q(x,M)=(Q(x,M)-\mu_Q)/(\sigma_Q+\epsilon)$. This prevents a saturated coordinate, such as $L$, from dominating every routing decision. We next test whether the scores behave as separable labels. Let $\mathcal{T}_V$, $\mathcal{T}_L$, and $\mathcal{T}_A$ denote visual-ruined, language-ruined, and alignment-ruined conditions. These are stronger than the local probes used to compute the scores. For target $R$ and coordinate $Q\in\{V,L,A\}$:
\begin{equation}
\begin{split}
& \Delta_{R,Q}(x,M)=Q(\mathcal{T}_R(x),M)-Q(x,M).
\end{split}
\label{eq:intervention_response_main}
\end{equation}
A cleanly separable signature would satisfy magnitude-based diagonal dominance:
\begin{equation}
\begin{split}
& |\Delta_{R,R}(x,M)|>\max_{Q\neq R}|\Delta_{R,Q}(x,M)|.
\end{split}
\label{eq:diagonal_condition_main}
\end{equation}
We test this condition rather than assume it. Definition~\ref{def:diagonal_separability} states the condition. Proposition~\ref{prop:probe_entanglement} formalizes the key limitation. If a source-targeted intervention changes another coordinate at least as much as the intended coordinate, the intervention cannot be read as source-specific. We then evaluate three uses of Perturbation-Response Signatures. Failure-family diagnosis asks whether $s(x,M)$ predicts $f(x,M)$ better than scalar confidence. The role separation follows from Proposition~\ref{prop:diagnosis_not_abstention} and Theorem~\ref{thm:role_separation}: a signature can help explain the kind of error without improving correctness ranking. Abstention scalarization asks whether a source-aware score $\tilde{c}(x,M)=g(c,V,L,A)$ improves answer-versus-abstain ranking. Proposition~\ref{prop:pairwise_inversion} and Corollary~\ref{cor:saturation_ranking} state the ranking risk when source penalties are not aligned with correctness. Routing asks whether the signature helps choose a safer correction path. Proposition~\ref{prop:routing_decomposition} and Corollary~\ref{cor:break_rate_importance} motivate reporting fix rate and break rate with accuracy. For abstention, we test three scalarizations:
\begin{equation}
\begin{split}
& c_{\max}=c\cdot(1-\max(V,L,A)),\\
& c_w=c\cdot(1-0.3V-0.3L-0.4A),\\
& c_{\mathrm{geo}}=c\cdot((1-V)(1-L)(1-A))^{1/3}.
\end{split}
\label{eq:scalarizations}
\end{equation}
For intervention routing, the generic baseline applies the same reasoning prompt to every input. The matched policy maps $s^{*}_{\mathrm{norm}}(x,M)$ to visual re-grounding, anti-prior prompting, or relation checking. For example, an alignment-dominant cone-car prediction is routed to relation checking rather than a generic reasoning prompt. We report accuracy, fix rate, and break rate:
\begin{equation}
\begin{split}
& \mathrm{Fix}=\frac{\#\{\mathrm{incorrect}\rightarrow\mathrm{correct}\}}{\#\{\mathrm{incorrect}\}},\\
& \mathrm{Break}=\frac{\#\{\mathrm{correct}\rightarrow\mathrm{incorrect}\}}{\#\{\mathrm{correct}\}}.
\end{split}
\label{eq:fix_break_main}
\end{equation}
The protocol uses one forward pass, four visual perturbation passes, one blank-image pass, and 1-2 alignment passes, depending on whether relation perturbation is available. No model is retrained.

\begin{table*}[ht]
\centering
\setlength{\tabcolsep}{8pt} 
\small
\begin{tabular}{l l l l} 
\toprule
\textbf{Dataset} & \textbf{N Per} & \textbf{Primary Stress} & \textbf{Source Probe Role} \\ 
\midrule
HallusionBench & 951   & Visual illusion, attribute error   & Visual and alignment pressure \\
POPE           & 9,000 & Object-presence hallucination      & Blank-image and object-prior pressure \\
VSR            & 340   & Spatial and relation binding       & Relation-swap alignment pressure \\
VizWiz-VQA     & 4,319 & Visual insufficiency, answerability& Visual degradation and blank-image pressure \\
\midrule
\textbf{Models} & \multicolumn{3}{l}{Gemma 4, LLaVA-Mistral, LLaVA-Vicuna, and Qwen3-VL under deterministic decoding.} \\ 
\bottomrule
\end{tabular}
\caption{Evaluation suite and roles. Evaluation roles focus on mixed failure-family diagnosis for HallusionBench, object hallucination/abstention for POPE, relation error/routing breakdown for VSR, and answerability diagnosis for VizWiz-VQA. Each model-dataset output receives scalar confidence $c$ and Perturbation-Response Signature $(V, L, A)$. Full dataset roles, split details, probe availability, and failure-family mappings are given in Appendix~\ref{app:dataset_details}.}
\label{tab:evaluation_suite}
\end{table*}


\section{Experimental Setup}
\label{sec:experimental_setup}

We evaluate whether multimodal uncertainty should be studied as Perturbation-Response diagnosis and correction-path selection rather than only as answer-versus-abstain scoring. For each model output, we compute scalar confidence $c$ and Perturbation-Response Signature $s(x)=(V,L,A)$. We test failure diagnosis, abstention, and intervention routing under matched perturbation protocols.

\noindent \textbf{Datasets and Models.} We evaluate Gemma 4, Qwen3-VL, LLaVA-Mistral, and LLaVA-Vicuna on HallusionBench, POPE, Visual Spatial Reasoning (VSR), and VizWiz-VQA \citep{gemma4report,bai2025qwen3vl,liu2023llava15,guan2024hallusionbench,li2023evaluating,liu2023visual,gurari2018vizwiz}. The suite covers visually grounded hallucination, object-presence hallucination, spatial relation binding, and answerability. We use the image-paired HallusionBench subset, all POPE splits, the VSR zero-shot development set, and the VizWiz-VQA validation split. We use greedy decoding with temperature $0$, disabled sampling, and a $128$-token cap. Split, filtering, checkpoint, and license details appear in Appendices~\ref{app:dataset_details} and~\ref{app:artifact_license_data_statement}.

\noindent \textbf{Diagnostic and Decision Tasks.} Each prediction receives correctness and coarse failure-family labels. Correct predictions map to \textit{no failure}. Incorrect predictions follow benchmark structure. POPE maps to object-presence failure, VSR to spatial or relation error, VizWiz-VQA to answerability failure, and HallusionBench to visually grounded hallucination categories. This tests broad failure-family separation rather than an exhaustive taxonomy. We train logistic regression, XGBoost, and LightGBM with stratified $5$-fold cross-validation over $[c]$, $[V,L,A]$, and $[V,L,A,c]$ \citep{chen2016xgboost,ke2017lightgbm}. We report diagnosis AUROC and Macro-F1, source profiles, abstention AUROC and risk-coverage, and routing accuracy, fix rate, and break rate. These metrics keep failure typing, answer-versus-abstain ranking, and correction effects separate. Appendix~\ref{app:failure_family_mapping} gives the full mapping.

\noindent \textbf{Controls, Abstention, and Routing.} Since failure families and datasets are partly aligned, we add dataset-ID and within-dataset controls. We also test the two main score-design concerns. For $L$, Appendix~\ref{app:probe_sensitivity} compares the clipped-ratio score with match-only and denominator-stable variants. For $A$, Appendix~\ref{app:alignment_sensitivity} removes VSR, the only relation-swap-eligible dataset. Abstention compares scalar confidence with Eq.~\ref{eq:scalarizations}. Routing compares no intervention, one generic reasoning prompt, and matched intervention. Matched routing selects image enhancement, anti-prior instructions, or grounding/relation checking from normalized source dominance. Exact templates and enhancement parameters appear in Appendix~\ref{app:intervention_routing_policies}. We summarize benchmark roles in Table~\ref{tab:evaluation_suite}.

\noindent \textbf{Human Audit.} We run a blind audit over $200$ examples, with $50$ from each dataset. Three annotators assign visual evidence insufficiency, language-prior reliance, alignment error, mixed, unclear, or no failure. Annotators do not see $V$, $L$, $A$, scalar confidence, or predicted source. We adjudicate $36$ disagreement cases. Full-label Fleiss' $\kappa$ is $0.236$, and source versus non-source Fleiss' $\kappa$ is $0.48$. We use this audit only as a limited semantic plausibility check; it is not ground truth, training supervision, score calibration, or evidence of causal source recovery. Appendix~\ref{app:human_audit} gives the full protocol and tables.

\begin{table*}[t]
\centering
\scalebox{0.98}
{
\small
\begin{tabular}{lll}
\toprule
\textbf{Evidence Block} & \textbf{Setting} & \textbf{Main Value} \\
\midrule
Mean image-removal confidence retention & All model-dataset settings & $L$ ranges from $0.91$ to $1$ \\
Blank-image match & All model-dataset settings & LP match ranges from $37.6\%$ to $96.5\%$ \\
Highest visual-perturbation sensitivity & VizWiz settings & $V$ ranges from $0.19$ to $0.30$ \\
Highest alignment instability & Gemma 4/VSR, Qwen3-VL/VSR & $A=0.68$ and $A=0.61$ \\
Raw dominant source & $58{,}440$ pooled samples & $L/A/V=58{,}287/151/2$ \\
\midrule
Magnitude-based diagonal pass count & All intervention checks & $18/48$ pass \\
Visual-ruined target & Across $16$ settings & $11/16$ pass \\
Language-ruined target & Across $16$ settings & $2/16$ pass \\
Alignment-ruined target & Across $16$ settings & $5/16$ pass \\
Qwen3-VL/HallusionBench & Visual-ruined deltas & $\Delta V=-0.09$, $\Delta L=+0.01$, $\Delta A=-0.01$ \\
Qwen3-VL/HallusionBench & Language-ruined deltas & $\Delta V=-0.05$, $\Delta L=-0.09$, $\Delta A=+0.18$ \\
\bottomrule
\end{tabular}
}
\caption{\textbf{Source-profile and Entanglement Summary}. LP match is exact-answer agreement under blank-image replacement. Diagonal pass counts test whether the intended source has the largest absolute response under the corresponding ruined condition. Full profiles and intervention deltas are given in Appendix Tables~\ref{tab:app_source_profiles_full}-\ref{tab:app_identifiability_qwen_vl}.}
\label{tab:source_entanglement_main}
\end{table*}

\begin{table*}[t]
\centering
\setlength{\tabcolsep}{4pt}
\renewcommand{\arraystretch}{0.95}

\begin{minipage}[t]{0.43\textwidth}
\centering
\small
\begin{tabular}{lccc}
\toprule
\multicolumn{4}{c}{\textbf{Failure-Family Diagnosis}} \\
\midrule
\multirow{2}{*}{\textbf{Classifier}} & \multirow{2}{*}{\textbf{Scalar}} & \multirow{2}{*}{$\mathbf{V+L+A}$} & \multirow{2}{*}{$\mathbf{V+L+A+c}$} \\
 &  &  &  \\
\midrule
Logistic Regression & $0.75$ & $0.83$ & $\mathbf{0.87}$ \\
XGBoost & $0.78$ & $0.95$ & $\mathbf{0.97}$ \\
LightGBM & $0.78$ & $0.95$ & $\mathbf{0.97}$ \\
\bottomrule
\end{tabular}

\end{minipage}
\hfill
\begin{minipage}[t]{0.43\textwidth}
\centering
\small
\begin{tabular}{lccc}
\toprule
\multicolumn{4}{c}{\textbf{Tree-Based Feature Importance}} \\
\midrule
\textbf{Feature} & \textbf{XGB} & \textbf{LGBM} & \textbf{Role} \\
\midrule
$A$ & $\mathbf{0.50}$ & $\mathbf{0.36}$ & Alignment \\
$L$ & $0.20$ & $0.15$ & Language prior \\
$c$ & $0.15$ & $0.27$ & Confidence \\
$V$ & $0.15$ & $0.22$ & Visual fragility \\
\bottomrule
\end{tabular}
\end{minipage}

\vspace{3pt}

\small
\begin{tabular}{llccc}
\toprule
\multicolumn{5}{c}{\textbf{Dataset-ID Controls}} \\
\midrule
\textbf{Target} & \textbf{Baseline} & \textbf{Source Add-On} & \textbf{$\Delta$ AUROC} & \textbf{$\Delta$ Macro-F1} \\
\midrule
Raw family
& $D+M+c$
& $+\,V+L+A$
& $+0.005$
& $+0.078$ \\
Outcome family
& $D+M+c$
& $+\,V+L+A$
& $+0.006$
& $+0.040$ \\
\bottomrule
\end{tabular}

\caption{Failure-family diagnosis, feature importance, and dataset-ID controls. Classifier rows report pooled weighted one-vs-rest AUROC over $58{,}440$ examples. Feature rows report pooled tree-based feature importance. Dataset-ID controls use LightGBM and report gains after adding source features beyond dataset identity $D$, model identity $M$, and scalar confidence $c$. Full-precision diagnosis and control results are given in Appendix Tables~\ref{tab:app_failure_diagnosis_full}-\ref{tab:app_within_dataset_diagnosis}.}
\label{tab:diagnosis_importance_main}
\end{table*}

\begin{table}[t]
\centering
\small
\setlength{\tabcolsep}{4pt}
\begin{tabular}{lccc}
\toprule
\textbf{Control} & \textbf{Setting} & \textbf{AUROC} & \textbf{Macro-F1} \\
\midrule
$L_{\mathrm{conf}}$ & clipped ratio & $0.828$ & $0.333$ \\
$L_{\mathrm{match}}$ & match only & $0.823$ & $0.346$ \\
$L_{\mathrm{stable}}$ & floored denominator & $0.823$ & $0.349$ \\
\midrule
$c$ & excluding VSR & $0.761$ & $0.329$ \\
$(V,L,A)$ & excluding VSR & $0.839$ & $0.422$ \\
\bottomrule
\end{tabular}
\caption{Probe-design controls. The image-removal rows replace only the $L$ coordinate inside the pooled $(V,L,A)$ diagnosis feature set. The no-VSR control removes the only relation-swap-eligible dataset and compares scalar confidence with $(V,L,A)$ under the same non-VSR scope. Details are in Appendix Tables~\ref{tab:app_language_saturation_controls}-\ref{tab:app_alignment_availability_control}}
\label{tab:probe_design_controls_main}
\end{table}

\begin{table*}[t]
\centering
\setlength{\tabcolsep}{14pt}
\small
\begin{tabular}{llcc}
\toprule
\multicolumn{4}{l}{\textbf{Abstention: Scalar Confidence Vs. Best Source-Aware Scalarization}} \\
\midrule
\textbf{Model Family} & \textbf{Metric} & \textbf{Scalar Confidence} & \textbf{Best Source-Aware} \\
\midrule
Gemma 4 & AUROC & $0.81$ & $0.77\ (0.04\downarrow)$ \\
LLaVA-Mistral & AUROC & $0.84$ & $0.81\ (0.03\downarrow)$ \\
LLaVA-Vicuna & AUROC & $0.83$ & $0.78\ (0.05\downarrow)$ \\
Qwen3-VL & AUROC & $0.81$ & $0.73\ (0.08\downarrow)$ \\
\end{tabular}
\setlength{\tabcolsep}{10pt}
\small
\begin{tabular}{llccc}
\toprule
\multicolumn{5}{l}{\textbf{Routing: No Intervention Vs. Generic Prompting Vs. Matched Routing}} \\
\midrule
\textbf{Setting} & \textbf{Metric} & \textbf{No Intervention} & \textbf{Generic} & \textbf{Matched} \\
\midrule
Pooled & Accuracy & $76.6\%$ & $75.7\%$ & $\mathbf{75.9\%}$ \\
Highest accuracy & Count & $6/16$ & $3/16$ & $\mathbf{7/16}$ \\
Generic vs. matched & Accuracy wins & -- & $6/16$ & $\mathbf{10/16}$ \\
Generic vs. matched & Lower break rate & -- & $6/16$ & $\mathbf{10/16}$ \\
Qwen3-VL/VSR & Accuracy & $87.4\%$ & $76.2\%$ & $\mathbf{87.6\%}$ \\
Qwen3-VL/VSR & Break & -- & $15.5\%$ & $\mathbf{2.7\%}$ \\
\bottomrule
\end{tabular}
\caption{\textbf{Decision-facing behavior}. Abstention compares scalar confidence against the best source-aware scalarization for each model family. Routing compares no intervention, generic prompting, and matched routing. The \textit{Highest accuracy} row counts which policy obtains the best accuracy in each model-dataset setting. The pairwise rows compare generic prompting and matched routing across the same $16$ settings. Full-precision results appear in Appendix Tables~\ref{tab:app_abstention_scalarization_variants}-\ref{tab:app_intervention_win_counts}.}
\label{tab:decision_behavior_main}
\end{table*}

\section{Results}
\label{sec:results}

We evaluate $(V,L,A)$ for failure typing and matched correction, not as a scalar confidence replacement. The supporting appendix evidence follows the same order: source profiles in Tables~\ref{tab:app_source_profiles_full}-\ref{tab:app_dominant_source_counts}, entanglement in Tables~\ref{tab:app_identifiability_pass_counts}-\ref{tab:app_identifiability_qwen_vl}, diagnosis in Tables~\ref{tab:app_family_source_means}-\ref{tab:app_source_stress_slices}, human audit in Tables~\ref{tab:app_human_audit_template}-\ref{tab:app_human_audit_classifier}, abstention in Tables~\ref{tab:app_abstention_scalarization_variants}-\ref{tab:app_risk_coverage_pooled}, and routing in Tables~\ref{tab:app_intervention_full}-\ref{tab:app_intervention_win_counts}.

\paragraph{Prevalence is Not Diagnostic Value.}
\label{subsec:source_profiles}
We observe from Table~\ref{tab:source_entanglement_main} that $L$ is high across settings, with means from $0.91$ to $1$ and image-removal answer persistence from $37.6\%$ to $96.5\%$. VizWiz gives the highest visual scores ($0.19$ to $0.30$), while Gemma 4/VSR and Qwen3-VL/VSR give the highest alignment scores ($0.68$ and $0.61$). The raw largest-coordinate count is almost entirely $L$-dominant. Overall, $58{,}287/58{,}440$ samples are $L$-dominant, versus $151$ for $A$ and $2$ for $V$. This $99.7\%$ dominance refers to the confidence-retention score. Under the stricter answer-persistence score $L_{\mathrm{persist}}$, $L$ dominance falls to $51.6\%$, while $A$ and $V$ dominate $27.9\%$ and $20.4\%$ of examples, respectively. This is why we treat $L$ as a confidence-retention marker and use the full vector, not raw $L$ dominance, for diagnosis.
Appendix Tables~\ref{tab:app_source_profiles_full}-\ref{tab:app_dominant_source_counts} give the full source profiles and raw dominance counts. Appendix Table~\ref{tab:app_source_stress_slices} gives matched source-stress slices that support the same reading.

\paragraph{Interventions Expose Entanglement.}
\label{subsec:entanglement}
We test whether source-targeted ruined inputs isolate their intended coordinate under Definition~\ref{def:diagonal_separability}. We observe from Table~\ref{tab:source_entanglement_main} that only $18/48$ checks pass magnitude-based diagonal dominance: visual $11/16$, language $2/16$, and alignment $5/16$. Qwen3-VL/HallusionBench gives the main intuition. Visual ruin decreases $V$ by $0.09$ with small $L/A$ changes, while language ruin decreases $L$ by $0.09$ and increases $A$ by $0.18$. Appendix Table~\ref{tab:app_identifiability_pass_counts} gives the pass counts, and Appendix Tables~\ref{tab:app_identifiability_gemma_vl}-\ref{tab:app_identifiability_qwen_vl} give the model-specific signed deltas. These failures match Proposition~\ref{prop:probe_entanglement}. Hence, $(V,L,A)$ is a joint perturbation-response signature rather than three causal labels. This is, however, not a failure of the probe design. It is the diagnostic setting we want to expose: a visually fragile answer can also be image-removal sensitive or alignment-sensitive.

\paragraph{Signatures Diagnose Failure Families.}
Although the scores are entangled, their joint pattern remains diagnostic. Since failure-family labels and benchmark identity are partly aligned, we first test dataset and model controls. We observe from Table~\ref{tab:diagnosis_importance_main} that adding source features beyond these identifiers improves raw-family Macro-F1 from $0.71$ to $0.78$, and the stricter outcome-family setting from $0.31$ to $0.35$. Appendix Tables~\ref{tab:app_dataset_controlled_raw_family}-\ref{tab:app_dataset_controlled_outcome_family} give these controls. Fixed-dataset results are strongest on HallusionBench and VizWiz, weaker on POPE, and nearly neutral on VSR, as reported in Appendix Table~\ref{tab:app_within_dataset_diagnosis}. Specifically, $(V,L,A)$ improves AUROC over scalar confidence from $0.634$ to $0.769$ on HallusionBench, from $0.707$ to $0.817$ on VizWiz, from $0.823$ to $0.839$ on POPE, and from $0.722$ to $0.724$ on VSR. These results bound the claim: Perturbation-Response Signatures are most useful when multiple failure modes or answerability-sensitive distinctions remain, not as a uniformly transferable rule across all benchmarks. In an additional 10-split dataset/model-controlled analysis, adding $(V,L,A)$ beyond dataset ID, model ID, and scalar confidence improves benchmark-family Macro-F1 by $0.170$, with a positive gain in all $10/10$ splits. Leave-one-model-out Macro-F1 improves from $0.558$ to $0.716$.

We also test whether the result is carried by probe-design choices. We observe from Table~\ref{tab:probe_design_controls_main} that replacing the clipped-ratio $L$ with match-only or denominator-stable variants keeps diagnosis stable. Removing VSR, the only relation-swap-eligible dataset, also preserves the gain of $(V,L,A)$ over scalar confidence. Thus, the result is not explained by the clipped-ratio form of $L$ or by the VSR-only relation-swap branch. In the main pooled tree-based setting, XGBoost improves from $0.78$ AUROC with scalar confidence to $0.95$ with $(V,L,A)$, and to $0.97$ with confidence added. Appendix Table~\ref{tab:app_failure_diagnosis_full} gives the full diagnosis results. The gain is not merely prevalence-driven: the alignment coordinate contributes $0.50$ of pooled XGBoost importance, as detailed in Appendix Table~\ref{tab:app_feature_importance_full}. Appendix Table~\ref{tab:app_family_source_means} reports source means by failure family. A blind human audit gives bounded semantic support: clear-source labels are mostly alignment ($71/107$), and $(V,L,A)$ improves clear-source AUROC over scalar confidence from $0.43$ to $0.60$. Appendix Tables~\ref{tab:app_human_audit_template}-\ref{tab:app_human_audit_classifier} give the audit protocol, agreement, confusion matrix, and audit-side classifier.
\paragraph{Visual-Probe Robustness.}
We additionally test whether the diagnosis result depends on one visual perturbation or on the selected perturbation strength. Removing blur, crop, brightness, or noise individually keeps diagnosis AUROC between $0.822$ and $0.824$, with $89.8\%$--$97.8\%$ normalized source-assignment agreement. On the same $10{,}540$ outputs, increasing perturbation strength from weak to default to strong increases mean $V$ from $0.132$ to $0.196$ to $0.252$ and the visual answer-flip rate from $0.174$ to $0.263$ to $0.342$. Using $L_{\mathrm{persist}}$, diagnosis AUROC remains $0.785$, $0.786$, and $0.783$, compared with $0.632$ for scalar confidence. All paired-bootstrap intervals for these AUROC and Macro-F1 gains exclude zero. Full results are reported in Appendix~\ref{app:visual_robustness}.

\paragraph{Naive Scalarization Can Hurt.}
The diagnostic gain does not mean that the three direct source-aware scalarizations tested here should improve abstention. We observe from Table~\ref{tab:decision_behavior_main} that weighted, geometric, and max-penalty scalarizations underperform in AUROC. Qwen3-VL drops from $0.81$ scalar AUROC to $0.73$ for the best source-aware score. Appendix Table~\ref{tab:app_abstention_scalarization_variants} gives aggregate scalarization results, Appendix Table~\ref{tab:app_abstention_per_dataset} gives per-dataset comparisons, and Appendix Table~\ref{tab:app_risk_coverage_pooled} gives risk-coverage values. This does not certify scalar confidence as grounded. Rather, it shows that a diagnostic feature can be useful for explaining the kind of failure while still being misaligned with answer-versus-abstain ranking. In other words, source-aware uncertainty should not be naively scalarized for abstention. These experiments do not rule out learned, calibrated, or validation-trained scalarizers. As formalized in Appendix Proposition~\ref{prop:pairwise_inversion} and Corollary~\ref{cor:saturation_ranking}, source penalties and near-saturated coordinates can reverse useful confidence orderings.

\paragraph{Diagnosis-Triggered Routing.}
We treat routing as a secondary downstream application rather than as validation of the source coordinates. We test whether Perturbation-Response Signatures help choose safer correction paths. Generic prompting applies one reasoning prompt to every example, while matched routing selects visual re-grounding, anti-prior prompting, or relation checking from normalized source dominance. We observe from Table~\ref{tab:decision_behavior_main} that matched routing is not a universal accuracy booster. It beats generic prompting in $10/16$ settings, with pooled accuracy $75.9\%$ versus $75.7\%$ for generic prompting and $76.6\%$ without intervention. Its value is clearer in break-rate control. On Qwen3-VL/VSR, matched routing avoids the generic accuracy drop from $87.4\%$ to $76.2\%$ and lowers breakage from $15.5\%$ to $2.7\%$, while maintaining $87.6\%$ accuracy. It also lowers breakage on Gemma 4/HallusionBench and LLaVA-Mistral/VizWiz ($15.1\%\!\to\!4.9\%$ and $21.4\%\!\to\!17.4\%$). Appendix Table~\ref{tab:app_intervention_full} gives the full routing results, and Appendix Table~\ref{tab:app_intervention_win_counts} gives model-level win counts. This supports routing when generic correction risks damaging correct predictions. The practical use is therefore not universal correction, but avoiding avoidable damage when a generic correction prompt is too blunt. Since baseline accuracy exceeds $0.5$, Corollary~\ref{cor:break_rate_importance} gives larger weight to avoiding broken correct answers than to an equal-sized repair gain. As an additional control, we evaluate a probe-informed policy against generic CoT on $6{,}561$ model-output instances. Relative to generic CoT, the probe-informed policy improves pooled accuracy by $3.54$ percentage points (95\% CI $[2.82,\,4.25]$) and reduces break rate by $5.12$ points (95\% CI $[4.12,\,6.09]$). It improves accuracy in all four evaluated settings, but is not universally better than no intervention; VizWiz remains a boundary case. 


\section{Conclusion}
\label{sec:conclusion}

In this paper, we present \HalluPrism{} as a perturbation-response diagnostic for multimodal uncertainty in MLLMs. Across 58K+ examples, perturbation-response signatures improve failure-family diagnosis, while matched routing reduces harmful breakage in selected high-risk settings. The same evidence shows a boundary: better failure typing does not imply better abstention scoring. In our experiments, the three tested direct signature-based scalarizations do not improve correctness ranking and can perform worse than scalar confidence; learned or calibrated scalarizers remain an open direction. Perturbation-response diagnosis should therefore characterize whether behavior is consistent with visual sensitivity, image-removal confidence retention, or alignment instability, and help determine whether a generic correction is safe.


\clearpage
\section*{Limitations}
\label{sec:limitations}


\HalluPrism{} is a behavioral diagnostic layer, not a causal explanation of hidden mechanisms inside a Multimodal Large Language Model (MLLM). 
The scores are defined using a specific set of probes and perturbation strengths. Our ablation and severity analyses show that the main diagnostic trends remain stable across the settings tested here, while broader probe choices remain an avenue for future evaluation.
Because the evaluated benchmarks emphasize different failure families, we additionally report controlled and fixed-dataset analyses. The magnitude of the diagnostic gain varies across datasets, and cross-dataset generalization remains an important direction for future study.
 The evaluation covers four public benchmarks and four open-weight MLLM families, not video, multi-image reasoning, medical, or document-heavy expert settings, or latency-critical deployments. The implementation requires $7$ to $8$ forward passes per example. Routing uses lightweight prompts and gives conditional actionability, not universal mitigation. The human audit is intentionally small and checks semantic plausibility, not causal source recovery. 
Our abstention analysis focuses on three direct hand-designed scalarizations; learned or calibrated scalarization strategies are outside the scope of the present study.
The no-free-lunch argument does not reject selective prediction, calibration, risk control, or conformal prediction for their stated statistical targets. It states that these guarantees require validation-to-deployment assumptions and, by themselves, do not provide instance-grounded reliability.

\section*{Ethics Statement}
\HalluPrism{} is intended for model auditing and diagnostic analysis, not for standalone safety certification or deployment approval. Its scores are behavioral Perturbation-Response Signatures from controlled probes and should not be treated as causal explanations of hidden model mechanisms. The main ethical risk is over-trust, where a compact source label is mistaken for a complete reliability guarantee. We therefore keep diagnosis, scalarization, fix rate, and break rate separate. The experiments use public multimodal benchmarks and publicly accessible model families. We do not collect new user images or train a deployed model. Since VizWiz-VQA contains images and questions from blind or low-vision users, any release should avoid raw-image redistribution unless allowed by the original license. The human audit uses benchmark examples only, and annotators do not see model-derived source scores or routing decisions.

\section*{Acknowledgment}
Tanmoy Chakraborty acknowledges the support of the Rajiv Khemani Young Faculty Chair Professorship in Artificial Intelligence and the ICMR Extramural Research (Centre for Advanced Research) Project Grant (EM/DEV/CAR-2025/00725). We also thank Ayush Kumar, Kumari Shristee, and Aryan Raj for their valuable assistance with the data annotation process.

\small
\bibliographystyle{IEEEtranN}
\bibliography{refs}

@misc{gemma4report,
  author = {{Google DeepMind}},
  title  = {{Gemma 4 Model Card}},
  year   = {2026},
  month  = apr,
  url    = {https://ai.google.dev/gemma/docs/core/model_card_4},
  note   = {Accessed: 2026-05-20}
}

@inproceedings{rohrbach2018object,
    title = "Object Hallucination in Image Captioning",
    author = "Rohrbach, Anna  and
      Hendricks, Lisa Anne  and
      Burns, Kaylee  and
      Darrell, Trevor  and
      Saenko, Kate",
    editor = "Riloff, Ellen  and
      Chiang, David  and
      Hockenmaier, Julia  and
      Tsujii, Jun{'}ichi",
    booktitle = "Proceedings of the 2018 Conference on Empirical Methods in Natural Language Processing",
    month = oct # "-" # nov,
    year = "2018",
    address = "Brussels, Belgium",
    publisher = "Association for Computational Linguistics",
    url = "https://aclanthology.org/D18-1437/",
    doi = "10.18653/v1/D18-1437",
    pages = "4035--4045"
}

@INPROCEEDINGS{goyal2017making,
  author={Goyal, Yash and Khot, Tejas and Summers-Stay, Douglas and Batra, Dhruv and Parikh, Devi},
  booktitle={2017 IEEE Conference on Computer Vision and Pattern Recognition (CVPR)}, 
  title={Making the V in VQA Matter: Elevating the Role of Image Understanding in Visual Question Answering}, 
  year={2017},
  volume={},
  number={},
  pages={6325-6334},
  doi={10.1109/CVPR.2017.670}}

@inproceedings{li2023evaluating,
    title = "Evaluating Object Hallucination in Large Vision-Language Models",
    author = "Li, Yifan  and
      Du, Yifan  and
      Zhou, Kun  and
      Wang, Jinpeng  and
      Zhao, Xin  and
      Wen, Ji-Rong",
    editor = "Bouamor, Houda  and
      Pino, Juan  and
      Bali, Kalika",
    booktitle = "Proceedings of the 2023 Conference on Empirical Methods in Natural Language Processing",
    month = dec,
    year = "2023",
    address = "Singapore",
    publisher = "Association for Computational Linguistics",
    url = "https://aclanthology.org/2023.emnlp-main.20/",
    doi = "10.18653/v1/2023.emnlp-main.20",
    pages = "292--305"
}

@InProceedings{guan2024hallusionbench,
    author    = {Guan, Tianrui and Liu, Fuxiao and Wu, Xiyang and Xian, Ruiqi and Li, Zongxia and Liu, Xiaoyu and Wang, Xijun and Chen, Lichang and Huang, Furong and Yacoob, Yaser and Manocha, Dinesh and Zhou, Tianyi},
    title     = {HallusionBench: An Advanced Diagnostic Suite for Entangled Language Hallucination and Visual Illusion in Large Vision-Language Models},
    booktitle = {Proceedings of the IEEE/CVF Conference on Computer Vision and Pattern Recognition (CVPR)},
    month     = {June},
    year      = {2024},
    pages     = {14375-14385}
}

@misc{wang2023amber,
      title={AMBER: An LLM-free Multi-dimensional Benchmark for MLLMs Hallucination Evaluation}, 
      author={Junyang Wang and Yuhang Wang and Guohai Xu and Jing Zhang and Yukai Gu and Haitao Jia and Jiaqi Wang and Haiyang Xu and Ming Yan and Ji Zhang and Jitao Sang},
      year={2024},
      eprint={2311.07397},
      archivePrefix={arXiv},
      primaryClass={cs.CL},
      url={https://arxiv.org/abs/2311.07397}, 
}

@INPROCEEDINGS {gurari2018vizwiz,
author = { Gurari, Danna and Li, Qing and Stangl, Abigale J. and Guo, Anhong and Lin, Chi and Grauman, Kristen and Luo, Jiebo and Bigham, Jeffrey P. },
booktitle = { 2018 IEEE/CVF Conference on Computer Vision and Pattern Recognition (CVPR) },
title = {{ VizWiz Grand Challenge: Answering Visual Questions from Blind People }},
year = {2018},
volume = {},
ISSN = {},
pages = {3608-3617},
doi = {10.1109/CVPR.2018.00380},
url = {https://doi.ieeecomputersociety.org/10.1109/CVPR.2018.00380},
publisher = {IEEE Computer Society},
address = {Los Alamitos, CA, USA},
month =Jun}

@article{liu2023visual,
    title = "Visual Spatial Reasoning",
    author = "Liu, Fangyu  and
      Emerson, Guy  and
      Collier, Nigel",
    journal = "Transactions of the Association for Computational Linguistics",
    volume = "11",
    year = "2023",
    address = "Cambridge, MA",
    publisher = "MIT Press",
    url = "https://aclanthology.org/2023.tacl-1.37/",
    doi = "10.1162/tacl_a_00566",
    pages = "635--651"
}

@misc{kadavath2022language,
      title={Language Models (Mostly) Know What They Know}, 
      author={Saurav Kadavath and Tom Conerly and Amanda Askell and Tom Henighan and Dawn Drain and Ethan Perez and Nicholas Schiefer and Zac Hatfield-Dodds and Nova DasSarma and Eli Tran-Johnson and Scott Johnston and Sheer El-Showk and Andy Jones and Nelson Elhage and Tristan Hume and Anna Chen and Yuntao Bai and Sam Bowman and Stanislav Fort and Deep Ganguli and Danny Hernandez and Josh Jacobson and Jackson Kernion and Shauna Kravec and Liane Lovitt and Kamal Ndousse and Catherine Olsson and Sam Ringer and Dario Amodei and Tom Brown and Jack Clark and Nicholas Joseph and Ben Mann and Sam McCandlish and Chris Olah and Jared Kaplan},
      year={2022},
      eprint={2207.05221},
      archivePrefix={arXiv},
      primaryClass={cs.CL},
      url={https://arxiv.org/abs/2207.05221}, 
}

@misc{kuhn2023semantic,
      title={Semantic Uncertainty: Linguistic Invariances for Uncertainty Estimation in Natural Language Generation}, 
      author={Lorenz Kuhn and Yarin Gal and Sebastian Farquhar},
      year={2023},
      eprint={2302.09664},
      archivePrefix={arXiv},
      primaryClass={cs.CL},
      url={https://arxiv.org/abs/2302.09664}, 
}

@inproceedings{geng2023survey,
    title = "A Survey of Confidence Estimation and Calibration in Large Language Models",
    author = "Geng, Jiahui  and
      Cai, Fengyu  and
      Wang, Yuxia  and
      Koeppl, Heinz  and
      Nakov, Preslav  and
      Gurevych, Iryna",
    editor = "Duh, Kevin  and
      Gomez, Helena  and
      Bethard, Steven",
    booktitle = "Proceedings of the 2024 Conference of the North American Chapter of the Association for Computational Linguistics: Human Language Technologies (Volume 1: Long Papers)",
    month = jun,
    year = "2024",
    address = "Mexico City, Mexico",
    publisher = "Association for Computational Linguistics",
    url = "https://aclanthology.org/2024.naacl-long.366/",
    doi = "10.18653/v1/2024.naacl-long.366",
    pages = "6577--6595"
}

@inproceedings{manakul2023selfcheckgpt,
    title = "{S}elf{C}heck{GPT}: Zero-Resource Black-Box Hallucination Detection for Generative Large Language Models",
    author = "Manakul, Potsawee  and
      Liusie, Adian  and
      Gales, Mark",
    editor = "Bouamor, Houda  and
      Pino, Juan  and
      Bali, Kalika",
    booktitle = "Proceedings of the 2023 Conference on Empirical Methods in Natural Language Processing",
    month = dec,
    year = "2023",
    address = "Singapore",
    publisher = "Association for Computational Linguistics",
    url = "https://aclanthology.org/2023.emnlp-main.557/",
    doi = "10.18653/v1/2023.emnlp-main.557",
    pages = "9004--9017"
}

@inproceedings{eisenschlos2024selectively,
    title = "Selectively Answering Visual Questions",
    author = "Eisenschlos, Julian  and
      Maina, Hern{\'a}n  and
      Ivetta, Guido  and
      Benotti, Luciana",
    editor = "Ku, Lun-Wei  and
      Martins, Andre  and
      Srikumar, Vivek",
    booktitle = "Findings of the Association for Computational Linguistics: ACL 2024",
    month = aug,
    year = "2024",
    address = "Bangkok, Thailand",
    publisher = "Association for Computational Linguistics",
    url = "https://aclanthology.org/2024.findings-acl.250/",
    doi = "10.18653/v1/2024.findings-acl.250",
    pages = "4219--4229"
}

@inproceedings{srinivasan2024selective,
    title = "Selective ``Selective Prediction'': Reducing Unnecessary Abstention in Vision-Language Reasoning",
    author = "Srinivasan, Tejas  and
      Hessel, Jack  and
      Gupta, Tanmay  and
      Lin, Bill Yuchen  and
      Choi, Yejin  and
      Thomason, Jesse  and
      Chandu, Khyathi",
    editor = "Ku, Lun-Wei  and
      Martins, Andre  and
      Srikumar, Vivek",
    booktitle = "Findings of the Association for Computational Linguistics: ACL 2024",
    month = aug,
    year = "2024",
    address = "Bangkok, Thailand",
    publisher = "Association for Computational Linguistics",
    url = "https://aclanthology.org/2024.findings-acl.767/",
    doi = "10.18653/v1/2024.findings-acl.767",
    pages = "12935--12948"
}

@INPROCEEDINGS{selvaraju2017gradcam,
  author={Selvaraju, Ramprasaath R. and Cogswell, Michael and Das, Abhishek and Vedantam, Ramakrishna and Parikh, Devi and Batra, Dhruv},
  booktitle={2017 IEEE International Conference on Computer Vision (ICCV)}, 
  title={Grad-CAM: Visual Explanations from Deep Networks via Gradient-Based Localization}, 
  year={2017},
  volume={},
  number={},
  pages={618-626},
  doi={10.1109/ICCV.2017.74}}

@inproceedings{jain2019attention,
    title = "{A}ttention is not {E}xplanation",
    author = "Jain, Sarthak  and
      Wallace, Byron C.",
    editor = "Burstein, Jill  and
      Doran, Christy  and
      Solorio, Thamar",
    booktitle = "Proceedings of the 2019 Conference of the North {A}merican Chapter of the Association for Computational Linguistics: Human Language Technologies, Volume 1 (Long and Short Papers)",
    month = jun,
    year = "2019",
    address = "Minneapolis, Minnesota",
    publisher = "Association for Computational Linguistics",
    url = "https://aclanthology.org/N19-1357/",
    doi = "10.18653/v1/N19-1357",
    pages = "3543--3556"
}

@inproceedings{abnar2020quantifying,
    title = "Quantifying Attention Flow in Transformers",
    author = "Abnar, Samira  and
      Zuidema, Willem",
    editor = "Jurafsky, Dan  and
      Chai, Joyce  and
      Schluter, Natalie  and
      Tetreault, Joel",
    booktitle = "Proceedings of the 58th Annual Meeting of the Association for Computational Linguistics",
    month = jul,
    year = "2020",
    address = "Online",
    publisher = "Association for Computational Linguistics",
    url = "https://aclanthology.org/2020.acl-main.385/",
    doi = "10.18653/v1/2020.acl-main.385",
    pages = "4190--4197"
}

@inproceedings{gong2024damro,
    title = "{DAMRO}: Dive into the Attention Mechanism of {LVLM} to Reduce Object Hallucination",
    author = "Gong, Xuan  and
      Ming, Tianshi  and
      Wang, Xinpeng  and
      Wei, Zhihua",
    editor = "Al-Onaizan, Yaser  and
      Bansal, Mohit  and
      Chen, Yun-Nung",
    booktitle = "Proceedings of the 2024 Conference on Empirical Methods in Natural Language Processing",
    month = nov,
    year = "2024",
    address = "Miami, Florida, USA",
    publisher = "Association for Computational Linguistics",
    url = "https://aclanthology.org/2024.emnlp-main.439/",
    doi = "10.18653/v1/2024.emnlp-main.439",
    pages = "7696--7712"
}

@misc{leng2023mitigating,
      title={Mitigating Object Hallucinations in Large Vision-Language Models through Visual Contrastive Decoding}, 
      author={Sicong Leng and Hang Zhang and Guanzheng Chen and Xin Li and Shijian Lu and Chunyan Miao and Lidong Bing},
      year={2023},
      eprint={2311.16922},
      archivePrefix={arXiv},
      primaryClass={cs.CV},
      url={https://arxiv.org/abs/2311.16922}, 
}

@InProceedings{huang2023opera,
    author    = {Huang, Qidong and Dong, Xiaoyi and Zhang, Pan and Wang, Bin and He, Conghui and Wang, Jiaqi and Lin, Dahua and Zhang, Weiming and Yu, Nenghai},
    title     = {OPERA: Alleviating Hallucination in Multi-Modal Large Language Models via Over-Trust Penalty and Retrospection-Allocation},
    booktitle = {Proceedings of the IEEE/CVF Conference on Computer Vision and Pattern Recognition (CVPR)},
    month     = {June},
    year      = {2024},
    pages     = {13418-13427}
}

@INPROCEEDINGS{favero2024multimodal,
  author={Favero, Alessandro and Zancato, Luca and Trager, Matthew and Choudhary, Siddharth and Perera, Pramuditha and Achille, Alessandro and Swaminathan, Ashwin and Soatto, Stefano},
  booktitle={2024 IEEE/CVF Conference on Computer Vision and Pattern Recognition (CVPR)}, 
  title={Multi-Modal Hallucination Control by Visual Information Grounding}, 
  year={2024},
  volume={},
  number={},
  pages={14303-14312},
  doi={10.1109/CVPR52733.2024.01356}}

@article{yin2023woodpecker,
  title={Woodpecker: Hallucination correction for multimodal large language models},
  author={Yin, Shukang and Fu, Chaoyou and Zhao, Sirui and Xu, Tong and Wang, Hao and Sui, Dianbo and Shen, Yunhang and Li, Ke and Sun, Xing and Chen, Enhong},
  journal={Science China Information Sciences},
  volume={67},
  number={12},
  pages={220105},
  year={2024},
  publisher={Springer}
}

@inproceedings{lee2023volcano,
    title = "Volcano: Mitigating Multimodal Hallucination through Self-Feedback Guided Revision",
    author = "Lee, Seongyun  and
      Park, Sue Hyun  and
      Jo, Yongrae  and
      Seo, Minjoon",
    editor = "Duh, Kevin  and
      Gomez, Helena  and
      Bethard, Steven",
    booktitle = "Proceedings of the 2024 Conference of the North American Chapter of the Association for Computational Linguistics: Human Language Technologies (Volume 1: Long Papers)",
    month = jun,
    year = "2024",
    address = "Mexico City, Mexico",
    publisher = "Association for Computational Linguistics",
    url = "https://aclanthology.org/2024.naacl-long.23/",
    doi = "10.18653/v1/2024.naacl-long.23",
    pages = "391--404"
}

@article{huang2025evidentialconflict,
title = {Visual hallucination detection in large vision-language models via evidential conflict},
journal = {International Journal of Approximate Reasoning},
volume = {186},
pages = {109507},
year = {2025},
issn = {0888-613X},
doi = {https://doi.org/10.1016/j.ijar.2025.109507},
url = {https://www.sciencedirect.com/science/article/pii/S0888613X25001483},
author = {Tao Huang and Zhekun Liu and Rui Wang and Yang Zhang and Liping Jing}
}

@INPROCEEDINGS{mushtaq2025harmony,
  author={Mushtaq, Erum and Fabian, Zalan and Bakman, Yavuz Faruk and Ramakrishna, Anil and Soltanolkotabi, Mahdi and Avestimehr, Salman},
  booktitle={2025 IEEE/CVF Conference on Computer Vision and Pattern Recognition Workshops (CVPRW)}, 
  title={HARMONY: Hidden Activation Representations and Model Output-Aware Uncertainty Estimation for Vision-Language Models}, 
  year={2025},
  volume={},
  number={},
  pages={1654-1659},
  doi={10.1109/CVPRW67362.2025.00154}}

@misc{fazli2025caac,
      title={Mitigating Hallucination in Large Vision-Language Models via Adaptive Attention Calibration}, 
      author={Mehrdad Fazli and Bowen Wei and Ahmet Sari and Ziwei Zhu},
      year={2025},
      eprint={2505.21472},
      archivePrefix={arXiv},
      primaryClass={cs.CV},
      url={https://arxiv.org/abs/2505.21472}, 
}

@misc{seo2025visualtokenuncertainty,
      title={On Epistemic Uncertainty of Visual Tokens for Object Hallucinations in Large Vision-Language Models}, 
      author={Hoigi Seo and Dong Un Kang and Hyunjin Cho and Joohoon Lee and Se Young Chun},
      year={2025},
      eprint={2510.09008},
      archivePrefix={arXiv},
      primaryClass={cs.CV},
      url={https://arxiv.org/abs/2510.09008}, 
}

@INPROCEEDINGS{zhuang2025vasparse,
  author={Zhuang, Xianwei and Zhu, Zhihong and Xie, Yuxin and Liang, Liming and Zou, Yuexian},
  booktitle={2025 IEEE/CVF Conference on Computer Vision and Pattern Recognition (CVPR)}, 
  title={VASparse: Towards Efficient Visual Hallucination Mitigation via Visual-Aware Token Sparsification}, 
  year={2025},
  volume={},
  number={},
  pages={4189-4199},
  doi={10.1109/CVPR52734.2025.00396}}

@misc{bai2025qwen3vl,
      title={Qwen3-VL Technical Report}, 
      author={Shuai Bai and Yuxuan Cai and Ruizhe Chen and Keqin Chen and Xionghui Chen and Zesen Cheng and Lianghao Deng and Wei Ding and Chang Gao and Chunjiang Ge and Wenbin Ge and Zhifang Guo and Qidong Huang and Jie Huang and Fei Huang and Binyuan Hui and Shutong Jiang and Zhaohai Li and Mingsheng Li and Mei Li and Kaixin Li and Zicheng Lin and Junyang Lin and Xuejing Liu and Jiawei Liu and Chenglong Liu and Yang Liu and Dayiheng Liu and Shixuan Liu and Dunjie Lu and Ruilin Luo and Chenxu Lv and Rui Men and Lingchen Meng and Xuancheng Ren and Xingzhang Ren and Sibo Song and Yuchong Sun and Jun Tang and Jianhong Tu and Jianqiang Wan and Peng Wang and Pengfei Wang and Qiuyue Wang and Yuxuan Wang and Tianbao Xie and Yiheng Xu and Haiyang Xu and Jin Xu and Zhibo Yang and Mingkun Yang and Jianxin Yang and An Yang and Bowen Yu and Fei Zhang and Hang Zhang and Xi Zhang and Bo Zheng and Humen Zhong and Jingren Zhou and Fan Zhou and Jing Zhou and Yuanzhi Zhu and Ke Zhu},
      year={2025},
      eprint={2511.21631},
      archivePrefix={arXiv},
      primaryClass={cs.CV},
      url={https://arxiv.org/abs/2511.21631}, 
}

@INPROCEEDINGS{liu2023llava15,
  author={Liu, Haotian and Li, Chunyuan and Li, Yuheng and Lee, Yong Jae},
  booktitle={2024 IEEE/CVF Conference on Computer Vision and Pattern Recognition (CVPR)}, 
  title={Improved Baselines with Visual Instruction Tuning}, 
  year={2024},
  volume={},
  number={},
  pages={26286-26296},
  doi={10.1109/CVPR52733.2024.02484}}

@misc{fu2023mme,
      title={MME: A Comprehensive Evaluation Benchmark for Multimodal Large Language Models}, 
      author={Chaoyou Fu and Peixian Chen and Yunhang Shen and Yulei Qin and Mengdan Zhang and Xu Lin and Jinrui Yang and Xiawu Zheng and Ke Li and Xing Sun and Yunsheng Wu and Rongrong Ji and Caifeng Shan and Ran He},
      year={2025},
      eprint={2306.13394},
      archivePrefix={arXiv},
      primaryClass={cs.CV},
      url={https://arxiv.org/abs/2306.13394}, 
}

@misc{liu2023mmbench,
      title={MMBench: Is Your Multi-modal Model an All-around Player?}, 
      author={Yuan Liu and Haodong Duan and Yuanhan Zhang and Bo Li and Songyang Zhang and Wangbo Zhao and Yike Yuan and Jiaqi Wang and Conghui He and Ziwei Liu and Kai Chen and Dahua Lin},
      year={2024},
      eprint={2307.06281},
      archivePrefix={arXiv},
      primaryClass={cs.CV},
      url={https://arxiv.org/abs/2307.06281}, 
}

@inproceedings{chen2016xgboost,
author = {Chen, Tianqi and Guestrin, Carlos},
title = {XGBoost: A Scalable Tree Boosting System},
year = {2016},
isbn = {9781450342322},
publisher = {Association for Computing Machinery},
address = {New York, NY, USA},
url = {https://doi.org/10.1145/2939672.2939785},
doi = {10.1145/2939672.2939785},
booktitle = {Proceedings of the 22nd ACM SIGKDD International Conference on Knowledge Discovery and Data Mining},
pages = {785–794},
numpages = {10},
location = {San Francisco, California, USA},
series = {KDD '16}
}

@misc{li2025imccd,
      title={Cross-Modal Attention Calibration for LVLM Hallucination Mitigation}, 
      author={Jiaming Li and Jiacheng Zhang and Zequn Jie and Lin Ma and Guanbin Li},
      year={2026},
      eprint={2501.01926},
      archivePrefix={arXiv},
      primaryClass={cs.CV},
      url={https://arxiv.org/abs/2501.01926}, 
}

@inproceedings{ke2017lightgbm,
 author = {Ke, Guolin and Meng, Qi and Finley, Thomas and Wang, Taifeng and Chen, Wei and Ma, Weidong and Ye, Qiwei and Liu, Tie-Yan},
 booktitle = {Advances in Neural Information Processing Systems},
 editor = {I. Guyon and U. Von Luxburg and S. Bengio and H. Wallach and R. Fergus and S. Vishwanathan and R. Garnett},
 pages = {},
 publisher = {Curran Associates, Inc.},
 title = {LightGBM: A Highly Efficient Gradient Boosting Decision Tree},
 url = {https://proceedings.neurips.cc/paper_files/paper/2017/file/6449f44a102fde848669bdd9eb6b76fa-Paper.pdf},
 volume = {30},
 year = {2017}
}

@InProceedings{guo2017calibration,
  title = 	 {On Calibration of Modern Neural Networks},
  author =       {Chuan Guo and Geoff Pleiss and Yu Sun and Kilian Q. Weinberger},
  booktitle = 	 {Proceedings of the 34th International Conference on Machine Learning},
  pages = 	 {1321--1330},
  year = 	 {2017},
  editor = 	 {Precup, Doina and Teh, Yee Whye},
  volume = 	 {70},
  series = 	 {Proceedings of Machine Learning Research},
  month = 	 {06--11 Aug},
  publisher =    {PMLR},
  url = 	 {https://proceedings.mlr.press/v70/guo17a.html}
}

@ARTICLE{chow1970optimum,
  author={Chow, C.},
  journal={IEEE Transactions on Information Theory}, 
  title={On optimum recognition error and reject tradeoff}, 
  year={1970},
  volume={16},
  number={1},
  pages={41-46},
  doi={10.1109/TIT.1970.1054406}}

@misc{geifman2017selective,
      title={Selective Classification for Deep Neural Networks}, 
      author={Yonatan Geifman and Ran El-Yaniv},
      year={2017},
      eprint={1705.08500},
      archivePrefix={arXiv},
      primaryClass={cs.LG},
      url={https://arxiv.org/abs/1705.08500}, 
}

@InProceedings{geifman2019selectivenet,
  title = 	 {{S}elective{N}et: A Deep Neural Network with an Integrated Reject Option},
  author =       {Geifman, Yonatan and El-Yaniv, Ran},
  booktitle = 	 {Proceedings of the 36th International Conference on Machine Learning},
  pages = 	 {2151--2159},
  year = 	 {2019},
  editor = 	 {Chaudhuri, Kamalika and Salakhutdinov, Ruslan},
  volume = 	 {97},
  series = 	 {Proceedings of Machine Learning Research},
  month = 	 {09--15 Jun},
  publisher =    {PMLR},
  url = 	 {https://proceedings.mlr.press/v97/geifman19a.html}
}

@inproceedings{ribeiro2020checklist,
    title = "Beyond Accuracy: Behavioral Testing of {NLP} Models with {C}heck{L}ist",
    author = "Ribeiro, Marco Tulio  and
      Wu, Tongshuang  and
      Guestrin, Carlos  and
      Singh, Sameer",
    editor = "Jurafsky, Dan  and
      Chai, Joyce  and
      Schluter, Natalie  and
      Tetreault, Joel",
    booktitle = "Proceedings of the 58th Annual Meeting of the Association for Computational Linguistics",
    month = jul,
    year = "2020",
    address = "Online",
    publisher = "Association for Computational Linguistics",
    url = "https://aclanthology.org/2020.acl-main.442/",
    doi = "10.18653/v1/2020.acl-main.442",
    pages = "4902--4912"
}

@inproceedings{gardner2020contrastsets,
    title = "Evaluating Models' Local Decision Boundaries via Contrast Sets",
    author = "Gardner, Matt  and
      Artzi, Yoav  and
      Basmov, Victoria  and
      Berant, Jonathan  and
      Bogin, Ben  and
      Chen, Sihao  and
      Dasigi, Pradeep  and
      Dua, Dheeru  and
      Elazar, Yanai  and
      Gottumukkala, Ananth  and
      Gupta, Nitish  and
      Hajishirzi, Hannaneh  and
      Ilharco, Gabriel  and
      Khashabi, Daniel  and
      Lin, Kevin  and
      Liu, Jiangming  and
      Liu, Nelson F.  and
      Mulcaire, Phoebe  and
      Ning, Qiang  and
      Singh, Sameer  and
      Smith, Noah A.  and
      Subramanian, Sanjay  and
      Tsarfaty, Reut  and
      Wallace, Eric  and
      Zhang, Ally  and
      Zhou, Ben",
    editor = "Cohn, Trevor  and
      He, Yulan  and
      Liu, Yang",
    booktitle = "Findings of the Association for Computational Linguistics: EMNLP 2020",
    month = nov,
    year = "2020",
    address = "Online",
    publisher = "Association for Computational Linguistics",
    url = "https://aclanthology.org/2020.findings-emnlp.117/",
    doi = "10.18653/v1/2020.findings-emnlp.117",
    pages = "1307--1323"
}

@inproceedings{
kaushik2020counterfactually,
title={Learning The Difference That Makes A Difference With Counterfactually-Augmented Data},
author={Divyansh Kaushik and Eduard Hovy and Zachary Lipton},
booktitle={International Conference on Learning Representations},
year={2020},
url={https://openreview.net/forum?id=Sklgs0NFvr}
}

\beginappendix
\section{Theoretical Analysis}
\label{app:formal_support}

We provide formal results that support the Perturbation-Response interpretation used in the main paper. The results do not claim that $V$, $L$, and $A$ recover hidden causal mechanisms inside a Multimodal Large Language Model (MLLM). They instead separate four roles: scalar confidence ranks correctness, Perturbation-Response Signatures support failure-family diagnosis, scalarization can distort abstention rankings, and routing must be evaluated through both fix and break rates. The setup is related to classical reject-option classification and selective prediction \citep{chow1970optimum,geifman2017selective,geifman2019selectivenet}.

\subsection{Scalar Confidence Does Not Identify Failure Source}
\label{app:scalar_nonidentifiability}

Let $X$ denote the input-output instance, $C=c(X)\in[0,1]$ denote scalar confidence, $Z\in\{0,1\}$ denote correctness, and $F\in\mathcal{F}$ denote the failure-family label, with $F=\mathrm{None}$ when $Z=1$. A source-diagnosis rule based only on scalar confidence is any measurable map $h:[0,1]\rightarrow\mathcal{F}\cup\{\mathrm{None}\}$.

\begin{proposition}[Scalar Non-Identifiability of Failure Source]
\label{prop:scalar_nonidentifiability}
A zero-error diagnosis rule $h(C)$ exists if and only if $F$ is measurable with respect to $C$. Equivalently, for almost every confidence value $c$, the conditional distribution $P(F\mid C=c)$ must place all mass on one label. If there exists a set $B\subseteq[0,1]$ with $P(C\in B)>0$ such that
\begin{equation}
\begin{split}
& \mathbb{E}\left[1-\max_{f\in\mathcal{F}\cup\{\mathrm{None}\}}P(F=f\mid C)\mid C\in B\right] > 0,
\end{split}
\end{equation}
then every scalar-confidence-only diagnosis rule has positive error on $B$.
\end{proposition}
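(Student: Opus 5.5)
The plan is to work with the regular conditional distribution $P(F=\cdot\mid C)$. This exists because $\mathcal{F}\cup\{\mathrm{None}\}$ is a finite (hence standard Borel) label space and $C$ takes values in $[0,1]$. Let $\mathcal{G}=\{\mathrm{None}\}\cup\mathcal{F}$. For any measurable $h:[0,1]\to\mathcal{G}$, the tower property gives the error decomposition
\begin{equation*}
P(h(C)\neq F)=\mathbb{E}\bigl[1-P(F=h(C)\mid C)\bigr].
\end{equation*}
Since $P(F=h(C)\mid C)\le \max_{f\in\mathcal{G}}P(F=f\mid C)$ pointwise, this yields the Bayes lower bound
\begin{equation*}
P(h(C)\neq F)\ge \mathbb{E}\bigl[1-\max_{f}P(F=f\mid C)\bigr].
\end{equation*}
The same bound holds after restricting to the event $\{C\in B\}$. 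Both directions of the equivalence and the final claim will be read off from this decomposition.

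For the ``if'' direction, suppose $F$ is $\sigma(C)$-measurable. The Doob--Dynkin lemma then gives a Borel $h$ with $F=h(C)$ almost surely, so the error is zero. For the ``only if'' direction, zero error means $F=h(C)$ a.s., and $h(C)$ is $\sigma(C)$-measurable. Hence $F$ agrees a.s.\ with a $\sigma(C)$-measurable variable, which is measurability up to null sets. I will state measurability modulo $P$-null sets, since that is the only sense in which the equivalence can hold.

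For the ``equivalently'' clause, first assume $F=h(C)$ a.s. Then $P(F=f\mid C)=\mathbf{1}[h(C)=f]$ a.s., so for almost every $c$ the conditional law is a point mass. Conversely, assume the conditional law is a point mass for a.e.\ $c$. Define $h(c)=\arg\max_f P(F=f\mid C=c)$, breaking ties by a fixed ordering of the finite set $\mathcal{G}$ so that $h$ is measurable. The decomposition then gives error $\mathbb{E}[1-\max_f P(F=f\mid C)]=0$.

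For the final claim, apply the restricted decomposition on $B$:
\begin{equation*}
P(h(C)\neq F\mid C\in B)\ge \mathbb{E}\bigl[1-\max_f P(F=f\mid C)\mid C\in B\bigr]>0.
\end{equation*}
This holds for every measurable $h$. Multiplying by $P(C\in B)>0$ shows the unconditional error contributed on $B$ is also positive.

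The main technical point is measure-theoretic rather than conceptual. We need $h$ to be measurable, which the argmax tie-breaking handles because $\mathcal{G}$ is finite. We also need the conditional probabilities to be a regular version, so that the pointwise inequality and the phrase ``for almost every $c$'' are meaningful. I expect no real obstacle beyond stating these conventions explicitly.
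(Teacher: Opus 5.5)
Your proposal is correct and follows the paper's proof: Doob--Dynkin (with its trivial converse) for the equivalence, and the pointwise Bayes bound $1-\max_f P(F=f\mid C)$, averaged over $C\in B$, for the positive-error claim. Your version is more careful than the paper's, which asserts the ``equivalently'' clause without argument and ignores that the converse only gives measurability modulo $P$-null sets.
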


\begin{proof}
If $F$ is measurable with respect to $C$, then there exists a measurable function $h$ such that $F=h(C)$ almost surely. This gives zero diagnosis error. Conversely, if a zero-error rule $h(C)$ exists, then $F=h(C)$ almost surely, and hence $F$ is measurable with respect to $C$. For the positive-error statement, condition on each confidence value. The best scalar rule at confidence value $C=c$ selects the label with largest conditional probability, so its conditional error is $1-\max_f P(F=f\mid C=c)$. Averaging this quantity over $C\in B$ gives the displayed expression. If this value is positive, every scalar-confidence-only diagnosis rule has positive error on $B$.
\end{proof}

\noindent\textbf{Implications.} Proposition~\ref{prop:scalar_nonidentifiability} explains why scalar confidence is an incomplete object for multimodal hallucination analysis. Even if scalar confidence is useful for abstention, it cannot diagnose whether an error is an object hallucination, a spatial error, an attribute error, or an answerability failure unless those labels are already determined by confidence. This motivates the failure-family diagnosis experiment, where $(V,L,A)$ is evaluated against scalar confidence for predicting the type of failure.

\subsection{Interventional Separability and Entanglement}
\label{app:interventional_separability}

Let $S(X)=(V(X),L(X),A(X))$ denote the Perturbation-Response Signature. Let $\mathcal{T}_V$, $\mathcal{T}_L$, and $\mathcal{T}_A$ denote visual, image-removal, and grounding/relation interventions. For $R,S\in\{V,L,A\}$, define the intervention response:
\begin{equation}
\begin{split}
& \Delta_{R,S}(X)=S(\mathcal{T}_R(X))-S(X).
\end{split}
\label{eq:app_delta_response}
\end{equation}
The expected response matrix is $D\in\mathbb{R}^{3\times 3}$, with entries:
\begin{equation}
\begin{split}
& D_{R,S}=\mathbb{E}[\Delta_{R,S}(X)].
\end{split}
\label{eq:app_expected_response}
\end{equation}

\begin{definition}[Diagonal Separability Under A Probe Family]
\label{def:diagonal_separability}
The Perturbation-Response Signature is diagonally separable under the probe family $\{\mathcal{T}_V,\mathcal{T}_L,\mathcal{T}_A\}$ if, for every $R\in\{V,L,A\}$,
\begin{equation}
\begin{split}
& D_{R,R}>\max_{S\neq R}D_{R,S}.
\end{split}
\label{eq:app_diagonal_separability}
\end{equation}
\end{definition}

\begin{proposition}[Off-Diagonal Magnitude Response Implies Probe-Level Entanglement]
\label{prop:probe_entanglement}
If Definition~\ref{def:diagonal_separability} fails for some source-targeting ruined condition $R$, then the probe family does not support a one-to-one interpretation in which condition $R$ primarily affects only source $R$. In particular, if $|D_{R,S}|\geq |D_{R,R}|$ for some $S\neq R$, then the observed response to $\mathcal{T}_R$ is at least as large in magnitude on an off-target source as on its intended source.
\end{proposition}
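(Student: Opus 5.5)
This proposition is essentially definitional, so the plan is to make the informal phrase ``one-to-one interpretation in which condition $R$ primarily affects only source $R$'' precise and show it is incompatible with the failure of Definition~\ref{def:diagonal_separability}. I will formalize that interpretation as the requirement that the intended coordinate carry the strictly largest expected response under $\mathcal{T}_R$. That is, $D_{R,R}>D_{R,S}$ for all $S\neq R$ in the signed reading of Eq.~\ref{eq:app_diagonal_separability}, or $|D_{R,R}|>|D_{R,S}|$ in the magnitude reading of Eq.~\ref{eq:diagonal_condition_main}. Any reasonable notion of ``primarily affects only $R$'' must at least imply this ranking condition, and that is the only property the argument uses.

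First, for the general claim, I will argue by contraposition. Suppose the probe family supported the one-to-one interpretation for condition $R$. Then by the formalization above, $D_{R,R}>\max_{S\neq R}D_{R,S}$. This is exactly the $R$-th clause of Definition~\ref{def:diagonal_separability}. So if that clause fails for $R$, the interpretation cannot hold for $R$. Failure of the definition for ``some $R$'' means failure of at least one such clause, which gives the first sentence.

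Second, for the ``in particular'' clause, I will unpack the expectations in Eq.~\ref{eq:app_expected_response}. If $|D_{R,S}|\geq|D_{R,R}|$ for some $S\neq R$, then by definition $|\mathbb{E}[S(\mathcal{T}_R(X))-S(X)]|\geq|\mathbb{E}[R(\mathcal{T}_R(X))-R(X)]|$. This is literally the statement that the average response to $\mathcal{T}_R$ on the off-target coordinate is at least as large in magnitude as on the intended one. I will also note that this violates the magnitude version of diagonal dominance, Eq.~\ref{eq:diagonal_condition_main} taken in expectation. Hence the empirical $18/48$ pass count instantiates the hypothesis.

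The only real obstacle is a mismatch between the definition and the ``in particular'' clause. Definition~\ref{def:diagonal_separability} uses signed entries, while the ``in particular'' clause uses magnitudes. The signed condition can hold while the magnitude condition fails, for example when a large negative off-diagonal response accompanies a small positive diagonal one. Conversely, the magnitude condition can hold while the signed one fails. I will handle this by stating that the one-to-one interpretation is taken in the magnitude sense, i.e.\ the intended coordinate has the largest absolute response. Under that choice the ``in particular'' hypothesis directly negates the interpretation, with no need to route through the signed definition. I will add a remark that the two notions coincide whenever all $D_{R,S}$ are nonnegative.
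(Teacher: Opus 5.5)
Your strategy is the same as the paper's. Its proof also treats the statement as definitional: it declares that a one-to-one source-selective interpretation \emph{means} magnitude-based diagonal dominance, then observes that $|D_{R,S}|\geq|D_{R,R}|$ violates it.

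The problem is that you use two different formalizations, and the one you settle on breaks your first step. Your contraposition needs the interpretation to imply the signed clause $D_{R,R}>\max_{S\neq R}D_{R,S}$ of Definition~\ref{def:diagonal_separability}. In your last paragraph, however, you fix the interpretation in the magnitude sense. Magnitude dominance does not imply signed dominance, as you note yourself.

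The paper's data realizes this. For Qwen3-VL/HallusionBench under the visual-ruined condition, $\Delta V=-0.085$, $\Delta L=+0.011$, $\Delta A=-0.009$. This passes the magnitude test but fails the signed clause, since $D_{V,V}<D_{V,L}$. So with a signed definition and a magnitude interpretation, the first sentence of the proposition is not merely unproved but false. Your assertion that every reasonable notion of ``primarily affects only $R$'' implies the signed ranking also fails: an intervention that moves $V$ by $-0.085$ and the other coordinates by about $0.01$ plainly acts primarily on $V$.

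The repair is to commit to one reading and use it in both halves. The natural choice, and the one the paper's proof implicitly adopts, is to read Definition~\ref{def:diagonal_separability} in the magnitude form of Eq.~\ref{eq:diagonal_condition_main}, taken in expectation: $|D_{R,R}|>\max_{S\neq R}|D_{R,S}|$. This is also the criterion behind the reported $18/48$ pass counts.

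Under that reading the argument closes:
\begin{itemize}
\item The interpretation for $R$ is exactly the $R$-th clause, so the first sentence follows by your contraposition.
\item The hypothesis $|D_{R,S}|\geq|D_{R,R}|$ for some $S\neq R$ is exactly the negation of that clause. The ``in particular'' part becomes a restatement of the failure case. Under the signed definition it would instead be a condition that neither implies nor is implied by failure.
\end{itemize}
Requiring both signed and magnitude dominance would also make both claims formally valid. But the signed clause would then reject interventions like the one above, which the paper counts as source-selective.
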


\begin{proof}
The condition for a one-to-one source-selective probe interpretation is magnitude-based diagonal dominance under Definition~\ref{def:diagonal_separability}. If $|D_{R,S}|\geq |D_{R,R}|$ for some $S\neq R$, then the off-target response has magnitude no smaller than the intended response. Hence the intervention response cannot be assigned uniquely to the intended source under this probe family.
\end{proof}

\noindent\textbf{Implications.} Proposition~\ref{prop:probe_entanglement} gives the formal basis for the interventional entanglement matrix. The empirical question is not whether every ruined condition validates a clean source label. The question is whether each source-targeting condition produces the largest absolute response on its intended coordinate or whether comparable off-target responses appear. When a visual-ruined condition changes alignment or behavior consistent with language-prior reliance, that response is informative rather than a nuisance. It supports the interpretation of $(V,L,A)$ as a joint Perturbation-Response Signature.

\subsection{Diagnosis And Abstention Are Different Prediction Problems}
\label{app:diagnosis_abstention_distinction}

Let $T$ denote the failure type conditional on being wrong, so that $T=F$ when $Z=0$. Let $C$ denote scalar confidence and let $S=(V,L,A)$ denote the Perturbation-Response Signature. Abstention requires ranking examples by $P(Z=1\mid\cdot)$. Failure diagnosis requires predicting $T$ or $F$.

\begin{proposition}[Diagnostic Information Need Not Improve Correctness Ranking]
\label{prop:diagnosis_not_abstention}
There exist distributions over $(C,\mathbf{S},Z,T)$ such that the Perturbation-Response Signature $\mathbf{S}$ is strictly more informative than scalar confidence $C$ for failure-type prediction among incorrect examples, while $C$ is at least as informative as $\mathbf{S}$ for correctness ranking. Here $C\in[0,1]$ denotes scalar confidence, $\mathbf{S}=(V,L,A)$ denotes the Perturbation-Response Signature, $Z\in\{0,1\}$ denotes correctness, and $T$ denotes the failure type conditional on $Z=0$. One sufficient condition is:
\begin{equation}
\begin{split}
& T \not\!\perp\!\!\!\perp \mathbf{S} \mid Z=0,\\
& T \perp\!\!\!\perp C \mid Z=0,\\
& P(Z=1\mid C,\mathbf{S})=P(Z=1\mid C).
\end{split}
\label{eq:app_diagnosis_condition}
\end{equation}
Under these conditions, $\mathbf{S}$ can improve failure-type diagnosis over $C$ among incorrect examples, although adding $\mathbf{S}$ cannot improve the Bayes-optimal correctness score beyond $C$.
\end{proposition}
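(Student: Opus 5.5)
The statement has two parts: an existence claim and a sufficient condition. I will handle the sufficient condition first, since the existence claim then only needs one distribution satisfying \eqref{eq:app_diagnosis_condition}.

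\emph{Correctness half.} Suppose $P(Z=1\mid C,\mathbf{S})=P(Z=1\mid C)$ almost surely. Then the Bayes-optimal correctness score on the augmented feature set, $\eta(C,\mathbf{S})=P(Z=1\mid C,\mathbf{S})$, equals $\eta(C)=P(Z=1\mid C)$, which is a function of $C$ alone. Any proper scoring loss, and likewise the induced ranking (AUROC, risk--coverage), is therefore the same whether $\mathbf{S}$ is added or not. More generally, any score built from $(C,\mathbf{S})$ has Bayes risk no better than $\eta(C)$. Since $\mathbf{S}$ alone is a coarsening of $(C,\mathbf{S})$, the tower property gives that $C$ is at least as informative as $\mathbf{S}$ for correctness, for example in expected Brier loss or log loss.

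\emph{Diagnosis half.} Restrict to the event $Z=0$ and compare Bayes errors, mirroring the conditional-error computation in the proof of Proposition~\ref{prop:scalar_nonidentifiability}. By $T\perp\!\!\!\perp C\mid Z=0$, the best $C$-only rule attains error $1-\max_t P(T=t\mid Z=0)$, so it does no better than the prior. The best $\mathbf{S}$-based rule has error $1-\mathbb{E}[\max_t P(T=t\mid\mathbf{S},Z=0)\mid Z=0]$, which is never larger, by Jensen's inequality applied to the convex map $p\mapsto\max_t p_t$.

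Dependence $T\not\!\perp\!\!\!\perp\mathbf{S}\mid Z=0$ alone only guarantees strict improvement in an information sense: mutual information $I(T;\mathbf{S}\mid Z=0)>0$ while $I(T;C\mid Z=0)=0$. It does not guarantee lower $0$--$1$ error, because the argmax label might never change. So I would state strictness in terms of mutual information, or log loss, and then exhibit an example where the $0$--$1$ error also strictly drops. This is the main subtlety; the rest is bookkeeping.

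\emph{Explicit distribution.} Take $C$ uniform on $\{0.3,0.8\}$ with $P(Z=1\mid C)=C$. Let $\mathbf{S}=(V,L,A)$ be generated as follows: $V=L=0$, and $A\in\{0,1\}$. On $Z=1$, $A\sim\mathrm{Bernoulli}(1/2)$ independently of $C$. On $Z=0$, set $T=\texttt{spatial}$ if $A=1$ and $T=\texttt{object}$ if $A=0$, with $A\sim\mathrm{Bernoulli}(1/2)$ independently of $C$. Then:
\begin{itemize}
\item $A$ is independent of $(C,Z)$, so $P(Z=1\mid C,\mathbf{S})=P(Z=1\mid C)$;
\item $T\perp\!\!\!\perp C\mid Z=0$;
\item $T$ is a deterministic function of $\mathbf{S}$ given $Z=0$.
\end{itemize}
The $\mathbf{S}$-based diagnosis error among incorrect examples is $0$, against $1/2$ for any $C$-based rule. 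Meanwhile $\mathbf{S}$ alone gives correctness AUROC $1/2$, against $>1/2$ for $C$. Finally, I would check that all coordinates lie in $[0,1]$, consistent with the paper's bounded signature.
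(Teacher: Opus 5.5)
Your proof is correct, and it differs from the paper's by being constructive and more careful about what kind of strictness the conditions deliver.

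\textbf{What the paper does.} The paper's proof reads the conclusions off the three conditions. The first two say $\mathbf{S}$ carries type information among incorrect examples while $C$ does not, so ``a classifier with access to $S$ can improve failure-type prediction \dots\ whenever the conditional distributions differ across failure types.'' The third makes the Bayes-optimal correctness score $P(Z=1\mid C)$ a function of $C$ alone. The paper leaves three things implicit:
\begin{enumerate}
\item It never exhibits a distribution satisfying the conditions, so the existence half is not actually proved.
\item For correctness it compares $(C,\mathbf{S})$ with $C$, not $\mathbf{S}$ with $C$ as the statement requires.
\item It does not distinguish information-theoretic strictness from strict $0$--$1$ improvement.
\end{enumerate}

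\textbf{Where your route differs.} You address each of these points:
\begin{enumerate}
\item Your explicit witness settles existence, with strict separation in every sense. There $A$ is independent of $(C,Z)$ and $T$ is a deterministic function of $A$ on $Z=0$. Diagnosis error is $0$ versus $1/2$. Correctness AUROC is about $0.75$ for $C$ versus $1/2$ for $\mathbf{S}$.
\item Your coarsening and tower argument gives the $\mathbf{S}$-versus-$C$ comparison explicitly.
\item Your Jensen bound on the convex map $p\mapsto\max_t p_t$ gives the weak Bayes-error inequality. You also correctly note that $T\not\!\perp\!\!\!\perp\mathbf{S}\mid Z=0$ guarantees $I(T;\mathbf{S}\mid Z=0)>0$ but not a lower $0$--$1$ error, because the posterior argmax may never change. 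This exposes a genuine imprecision in the paper's ``whenever the conditional distributions differ'' step, which recurs in the proof of Theorem~\ref{thm:role_separation}.
\end{enumerate}

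\textbf{Trade-off.} The paper's version buys brevity. Yours buys an actual proof of the existence claim and an accurate account of which notion of ``strictly more informative'' the sufficient condition supports on its own.
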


\begin{proof}
The first two conditions state that $S$ carries information about the failure type among incorrect examples, while $C$ does not. Therefore a classifier with access to $S$ can improve failure-type prediction over a classifier using $C$ alone whenever the conditional distributions differ across failure types. The third condition states that, after conditioning on $C$, the Perturbation-Response Signature provides no additional information about correctness. Hence the Bayes-optimal correctness ranking is a function of $C$ alone. Adding $S$ cannot improve that ranking.
\end{proof}

\noindent\textbf{Implications.} Proposition~\ref{prop:diagnosis_not_abstention} explains why the failure-diagnosis and abstention results can move in opposite directions. The Perturbation-Response Signature can substantially improve failure-family AUROC while still failing to improve correctness ranking. This supports the paper’s central distinction. Perturbation-Response Signatures are diagnostic representations. They are not guaranteed scalar confidence improvements.

\begin{theorem}[Role Separation Between Perturbation-Response Signatures And Scalar Confidence]
\label{thm:role_separation}
Suppose the conditions in Proposition~\ref{prop:diagnosis_not_abstention} hold. Then any Bayes-optimal abstention policy based on $(C,S)$ can be matched by a policy based on $C$ alone, while a Bayes-optimal failure-family classifier based on $(C,S)$ can strictly outperform every classifier based on $C$ alone.
\end{theorem}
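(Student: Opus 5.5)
The proof splits into the two claims of Theorem~\ref{thm:role_separation}, each reduced to one of the conditions in Eq.~\eqref{eq:app_diagnosis_condition}, with Proposition~\ref{prop:diagnosis_not_abstention} as the template.

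\emph{Abstention claim.} I would formalize a Bayes-optimal abstention policy based on $(C,S)$ as one that thresholds (or, more generally, ranks by) the posterior correctness score $\eta(C,S)=P(Z=1\mid C,S)$. This covers the Chow-style reject rule at any cost of abstention, and the risk-coverage or AUROC-optimal ranking. By the third condition, $\eta(C,S)=P(Z=1\mid C)=:\eta_0(C)$ almost surely. Any policy $\pi(C,S)=\mathbf{1}[\eta(C,S)\ge t]$ therefore coincides almost surely with $\pi_0(C)=\mathbf{1}[\eta_0(C)\ge t]$, so it has identical selective risk and coverage. If randomized tie-breaking on level sets is allowed, the tie-breaking can be absorbed by an independent randomization, since the expected loss depends only on $\eta$. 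Hence every Bayes-optimal $(C,S)$ policy is matched by a $C$-only policy.

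\emph{Diagnosis claim.} Let $\mathcal{L}_0=\min_h P(h(C)\ne F)$ and $\mathcal{L}_1=\min_g P(g(C,S)\ne F)$. The Bayes errors are
\[
\mathbb{E}[1-\max_f P(F=f\mid C)] \quad\text{and}\quad \mathbb{E}[1-\max_f P(F=f\mid C,S)]
\]
respectively, as in the proof of Proposition~\ref{prop:scalar_nonidentifiability}. Since $\mathcal{L}_1\le\mathcal{L}_0$ trivially, the work is to show strictness.

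Strictness holds exactly when, on a set of positive probability, the $(C,S)$-optimal label differs from every $C$-optimal label, i.e.\ the argmax changes. Mere dependence $T\not\perp S\mid Z=0$ does not guarantee this, because conditioning can shift probabilities without flipping the argmax. This is the main obstacle. I would handle it as the statement's phrase ``can strictly outperform'' permits: as an existence claim, consistent with Proposition~\ref{prop:diagnosis_not_abstention}'s ``there exist distributions''.

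I would exhibit a distribution satisfying Eq.~\eqref{eq:app_diagnosis_condition}. Take $C$ uniform on $[0,1]$ and $Z\mid C\sim\mathrm{Bernoulli}(C)$. Let $S$ be independent of $(C,Z)$ and uniform on two values $s_1,s_2$, and given $Z=0$ set $T=t_1$ if $S=s_1$ and $T=t_2$ if $S=s_2$. Then:
\begin{itemize}
\item The third condition holds because $S\perp(C,Z)$.
\item $T\perp C\mid Z=0$ holds because $S\perp C$.
\item $T$ is a deterministic function of $S$ among errors.
\end{itemize}
The $(C,S)$ classifier predicts None when $C>1/2$ and $T(S)$ otherwise, with error $E[\min(C,1-C)]=1/4$. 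The $C$-only classifier must additionally guess between $t_1$ and $t_2$ when it predicts an error, which costs an extra $\tfrac12 P(Z=0,C\le 1/2)>0$.

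For a general strictness criterion, I would add a remark: strict improvement holds if and only if $P\big(\arg\max_f P(F=f\mid C,S)\cap\arg\max_f P(F=f\mid C)=\emptyset\big)>0$.
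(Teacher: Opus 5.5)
Your proof is correct. Its abstention half is the paper's argument: under the third condition in Eq.~\ref{eq:app_diagnosis_condition}, the Bayes-optimal correctness score $P(Z=1\mid C,S)$ collapses to $P(Z=1\mid C)$. You state this more carefully, with almost-sure coincidence of the threshold rules and tie-breaking handled by independent randomization.

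The diagnosis half is where you genuinely depart from the paper. The paper infers strict improvement directly from the first two conditions (``$S$ changes the conditional distribution of $T$ among incorrect examples, while $C$ does not''), hedged only by ``when the conditional failure-type distributions differ.'' You are right that this inference fails under $0$--$1$ loss, because a shift in a conditional distribution need not change the Bayes decision. For example, take $S$ independent of $(C,Z)$, $P(Z=1\mid C)\le 0.1$, and, given $Z=0$, $T$ drawn from $S$ alone with $P(T=t_1\mid S,Z=0)\in\{0.6,0.9\}$. All three conditions hold, yet $t_1$ is Bayes-optimal for every $(C,S)$, so the two Bayes errors coincide. Reading ``can'' existentially and exhibiting a distribution with a strict gap is therefore not pedantry. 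It is the form of the claim that is actually true. The paper's shorter argument implicitly needs an extra decision-flip hypothesis that it never states.

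There are two minor corrections.

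\begin{enumerate}
\item In your example, $\tfrac12 P(Z=0,C\le 1/2)=3/16$ is the excess error of the particular $C$-only rule that keeps the threshold at $1/2$. Since $P(F=t_i\mid C)=(1-C)/2$, the Bayes-optimal $C$-only rule predicts $\mathrm{None}$ iff $C>1/3$ and has error $5/12$, so the true gap is $1/6$. Strictness is unaffected. It is cleanest to see pointwise: for $c<1/2$, $\max_f P(F=f\mid C=c)=\max(c,(1-c)/2)<1-c=\max_f P(F=f\mid C=c,S)$.
\item Your closing ``if and only if'' criterion is sufficient but not necessary. Suppose $P(F\mid C)$ has ties on a set of positive probability. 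Then different values of $S$ can select different members of the tied argmax, giving a strict gain even though every $(C,S)$-argmax meets the $C$-argmax. The exact condition is that, on a set of $C$-values of positive probability, no single label is $(C,S)$-optimal for almost every value of $S$.
\end{enumerate}
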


\begin{proof}
For abstention, the Bayes-optimal score is $P(Z=1\mid C,S)$. By the third condition in Eq.~\ref{eq:app_diagnosis_condition}, this equals $P(Z=1\mid C)$. Therefore a policy that uses only $C$ can match the optimal policy that uses $(C,S)$. For failure-family prediction, the first two conditions imply that $S$ changes the conditional distribution of $T$ among incorrect examples, while $C$ does not. Therefore the Bayes-optimal classifier with access to $S$ can achieve lower expected classification error than any classifier restricted to $C$ when the conditional failure-type distributions differ across Perturbation-Response regions.
\end{proof}

\noindent\textbf{Implications.} Theorem~\ref{thm:role_separation} states that a Perturbation-Response Signature may be valuable for explaining what kind of failure occurred without improving answer-versus-abstain decisions. This justifies evaluating failure-family diagnosis and abstention as separate tasks, rather than treating abstention as the only test of uncertainty usefulness.

\subsection{Why Naive Source-Aware Scalarization Can Hurt Abstention}
\label{app:scalarization_trap}

Many source-aware confidence rules have the multiplicative form:
\begin{equation}
\begin{split}
& \tilde{C}=C\cdot \phi(S),
\end{split}
\label{eq:app_multiplicative_scalarization}
\end{equation}
where $\phi(S)\in[0,1]$ is a source penalty. The weighted, geometric, and max-penalty scores in Section~\ref{subsec:diagnosis_abstention_routing} are instances of this pattern.

\begin{proposition}[Pairwise Inversion Under Source Penalties]
\label{prop:pairwise_inversion}
Consider two examples $i$ and $j$ with scalar confidences $C_i>C_j$. A multiplicative source-aware score $\tilde{C}=C\phi(S)$ reverses their ranking if and only if:
\begin{equation}
\begin{split}
& \frac{\phi(S_j)}{\phi(S_i)}>\frac{C_i}{C_j}.
\end{split}
\label{eq:app_pairwise_inversion}
\end{equation}
Therefore, source penalties can hurt abstention whenever penalty variation is not aligned with correctness and is large enough to overcome scalar-confidence margins.
\end{proposition}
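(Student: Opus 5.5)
The plan is a direct algebraic argument on the pair $(i,j)$, followed by a short existence argument for the ``therefore'' clause. Reversal of the ranking under $\tilde{C}=C\phi(S)$ means $\tilde{C}_j>\tilde{C}_i$, that is, $C_j\phi(S_j)>C_i\phi(S_i)$. I will treat the strict version, reading a tie as a non-reversal, and note this convention explicitly.

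First, fix the positivity conventions the statement implicitly needs. Since $C_i>C_j$, we have $C_i>0$; I will assume $C_j>0$ and $\phi(S_i)>0$ so that the ratios in Eq.~\ref{eq:app_pairwise_inversion} are defined. The degenerate cases are handled separately:
\begin{itemize}
\item if $\phi(S_i)=0$, then $\tilde{C}_i=0$ and reversal occurs exactly when $C_j\phi(S_j)>0$, which is the natural reading of the ratio being $+\infty$;
\item if $C_j=0$, then $\tilde{C}_j=0$, so no reversal is possible, consistent with $C_i/C_j=+\infty$.
\end{itemize}

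With positivity in place, dividing $C_j\phi(S_j)>C_i\phi(S_i)$ by the positive quantity $C_j\phi(S_i)$ preserves the inequality and gives exactly $\phi(S_j)/\phi(S_i)>C_i/C_j$. Every step is reversible, so the condition is both necessary and sufficient. Because $C_i/C_j>1$, any reversal requires $\phi(S_j)>\phi(S_i)$, meaning the lower-confidence example must be penalized strictly less. The size of the required penalty gap grows with the confidence margin.

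For the interpretive ``therefore'' clause, I will make precise what ``hurts abstention'' means. Suppose $Z_i=1$ and $Z_j=0$, so $C$ orders the pair correctly. If the penalties satisfy Eq.~\ref{eq:app_pairwise_inversion}, the pair becomes discordant under $\tilde{C}$. Since AUROC is the fraction of concordant (correct, incorrect) pairs, each such inversion lowers AUROC by $1/(n_+n_-)$ unless it is offset by repaired pairs. Repairs require the penalty to favor the correct example, so if $\phi(S)$ is independent of $Z$ given $C$ (unaligned penalties), inversions are not systematically offset. I will exhibit a two-point construction where $\phi(S)$ is anti-aligned with correctness to show that a strict AUROC drop is achievable.

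The main obstacle is expected to be this last step, because ``not aligned with correctness'' is informal. I would state it as a sufficient condition (anti-alignment on some positive-mass set of pairs straddling the correctness boundary) rather than attempt a general claim.
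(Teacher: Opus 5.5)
Your proposal is correct and follows the paper's proof. The paper likewise rewrites reversal as $C_j\phi(S_j)>C_i\phi(S_i)$, divides by $C_j\phi(S_i)$, and leaves the ``therefore'' clause as informal interpretation; your anti-aligned two-point construction is a sound addition and is the right formal content, since the conditional-independence heuristic by itself does not force an AUROC drop. Your positivity caveats also fill a real gap the paper skips: $\phi$ vanishes for the max-penalty and geometric scalarizations whenever a coordinate such as the clipped, near-saturated $L$ reaches $1$, so the paper's division is not always licensed.
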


\begin{proof}
The source-aware ranking reverses the scalar-confidence ranking when $\tilde{C}_j>\tilde{C}_i$. This is equivalent to $C_j\phi(S_j)>C_i\phi(S_i)$. Dividing both sides by $C_j\phi(S_i)$ gives Eq.~\ref{eq:app_pairwise_inversion}.
\end{proof}

\noindent\textbf{Implications.} Proposition~\ref{prop:pairwise_inversion} explains how source-aware scalarization can reduce abstention quality. A penalty based on $V$, $L$, and $A$ can reorder examples that scalar confidence ranks correctly. This matters especially when the penalty is weakly related to correctness. The result supports the source-aware confidence-trap analysis in Section~\ref{sec:results}.

\begin{corollary}[Saturated Source Scores Can Compress Useful Rankings]
\label{cor:saturation_ranking}
Suppose a source penalty $\phi(S)$ is strongly affected by a nearly saturated coordinate such as $L\approx 1$. If $\phi(S)$ varies across examples due to small changes in saturated source scores and those variations are weakly correlated with correctness, then the multiplicative score $\tilde{C}=C\phi(S)$ can introduce pairwise inversions relative to $C$ without adding useful correctness information.
\end{corollary}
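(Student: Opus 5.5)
The plan is to derive Corollary~\ref{cor:saturation_ranking} directly from Proposition~\ref{prop:pairwise_inversion}, by showing that small penalty fluctuations driven by a saturated coordinate can satisfy the inversion inequality for pairs with small confidence margins. I would instantiate the claim with the concrete scalarizations of Eq.~\ref{eq:scalarizations}. For $c_w$, write $\phi(S)=1-0.3V-0.3L-0.4A$. Near $L\approx 1$ this gives $\phi(S)\approx 0.7-0.3V-0.4A-0.3(L-1)$. For $c_{\max}$, $\phi(S)=1-\max(V,L,A)=1-L$ whenever $L$ dominates, which by Table~\ref{tab:source_entanglement_main} is $99.7\%$ of samples. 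In both cases the penalty is a small number whose relative variation is driven by the perturbation $\delta=1-L$.

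The key step is to bound the penalty ratio that appears in Eq.~\ref{eq:app_pairwise_inversion}. Take $L_i=1-\delta_i$ and $L_j=1-\delta_j$ with small $\delta$'s. For the max-penalty score the ratio is $\phi(S_j)/\phi(S_i)=\delta_j/\delta_i$. This ratio can be arbitrarily large even when $|\delta_j-\delta_i|$ is tiny, which is the compression effect: saturation shrinks $\phi$ toward $0$, so its relative variation explodes. For the weighted score the ratio is $1+O(|\delta_j-\delta_i|)$ plus contributions from $V$ and $A$. Inversion therefore occurs exactly for pairs whose confidence margin satisfies $C_i/C_j<\phi(S_j)/\phi(S_i)$. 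Such pairs exist with positive probability whenever $C$ has a continuous distribution, since margins $C_i/C_j\downarrow 1$ then occur with positive mass.

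Next I would formalize ``weakly correlated with correctness.'' I would assume that $\phi(S)$, or the $\delta$ fluctuation, is conditionally independent of $Z$ given $C$. By the argument in Proposition~\ref{prop:diagnosis_not_abstention}, $P(Z=1\mid C,\phi)=P(Z=1\mid C)$, so the reorderings carry no correctness information. Concordant pairs in which $C$ correctly ranks a correct example above an incorrect one are then flipped with the same probability as discordant pairs. Because $C$ has AUROC above $0.5$, concordant pairs outnumber discordant ones. The expected net effect on pairwise AUROC is therefore nonpositive, and it is strictly negative when the inversion probability is positive.

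The main obstacle is making ``can introduce inversions without useful information'' precise, since the corollary is stated informally. I would present a sufficient-condition version. It has three hypotheses: $\phi$ is independent of $Z$ given $C$; $P(\phi(S_j)/\phi(S_i)>C_i/C_j)>0$ for independent pairs; and $C$ has AUROC above $1/2$. The conclusion is that the AUROC of $\tilde C$ is strictly below the AUROC of $C$. The remaining steps then reduce to applying Proposition~\ref{prop:pairwise_inversion} pair by pair and averaging.
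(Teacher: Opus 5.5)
Your first two paragraphs are essentially the paper's own proof, made concrete. The paper invokes Proposition~\ref{prop:pairwise_inversion}, notes that near saturation small absolute changes in $\phi$ produce large relative changes (especially for max-based penalties), and stops at the qualitative conclusion that inversions can occur without correctness gain. The genuine gap is in your quantitative ``sufficient-condition version.'' The step ``concordant and discordant pairs are flipped with the same probability, and since $\mathrm{AUROC}(C)>1/2$ concordant pairs outnumber discordant ones, the net effect is nonpositive'' fails. Under $\phi\perp Z\mid C$, the two kinds of pairs are flipped with equal probability only \emph{conditionally on the unordered pair of confidence values}. That probability is a function $p(a,b)$ of the values $a>b$: it depends on the margin $a/b$ and on the conditional laws of $\phi$ given $C=a$ and $C=b$. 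Write $\eta(c)=P(Z=1\mid C=c)$, $f$ for the density of $C$, and $\pi_z=P(Z=z)$. Then
\[
\mathrm{AUROC}(\tilde C)-\mathrm{AUROC}(C)=-\frac{1}{\pi_0\pi_1}\int\!\!\int_{a>b}p(a,b)\bigl(\eta(a)-\eta(b)\bigr)f(a)f(b)\,da\,db ,
\]
whereas $\mathrm{AUROC}(C)>1/2$ only says that the \emph{unweighted} integral ($p\equiv 1$) is positive. Inversions concentrate on small margins, so the weighting can reverse the sign. Concretely, put $C$ in two clusters, $[0.15,0.25]$ and $[0.8,0.9]$. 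Let $\eta=0.1$ on the lower cluster and let $\eta$ decrease from $0.95$ to $0.7$ on the upper one. Let $\phi\in\{1/2,1\}$ be a fair coin independent of $(C,Z)$. Cross-cluster pairs (ratio $>2$) are never inverted, and they already give $\mathrm{AUROC}(C)>1/2$. Within the upper cluster every pair is inverted with probability $1/4$, and discordant pairs dominate there, so $\mathrm{AUROC}(\tilde C)>\mathrm{AUROC}(C)$. All three of your hypotheses hold, but the conclusion fails.

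The repair is to replace $\mathrm{AUROC}(C)>1/2$ by monotonicity of $\eta$. If $\eta$ is nondecreasing, the displayed difference is $\le 0$. It is strictly negative when $p(a,b)\bigl(\eta(a)-\eta(b)\bigr)>0$ on a set of pairs of positive mass, e.g.\ $\eta$ strictly increasing where inversions occur. Alternatively, conditional independence makes $\eta(C)$ the Bayes-optimal correctness score given $(C,\phi)$. A nondecreasing $\eta$ lets $C$ attain the AUROC of $\eta(C)$, so no function of $(C,\phi)$ can beat $C$; this is the precise content of ``no useful correctness information.''

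There are also two smaller corrections. First, the weighted penalty does not fit the corollary's hypothesis. Near $L\approx 1$ the $L$-term is an almost constant $-0.3$, so the variation in $\phi$ comes from $V$ and $A$, as your own $1+O(|\delta_j-\delta_i|)$ remark shows; this contradicts ``in both cases \dots driven by $\delta$.'' The genuine instances are $c_{\max}$ and $c_{\mathrm{geo}}$, whose penalty ratio is $(\delta_j/\delta_i)^{1/3}$ times a factor depending only on $V$ and $A$. This matches the paper's ``especially for nonlinear or max-based penalties'' and the weighted score being the least harmful in Table~\ref{tab:app_abstention_scalarization_variants}. Second, $L_{\mathrm{conf}}$ is clipped at $1$, so $\delta=0$ has positive mass. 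Then $\phi_{\max}=0$ and $\delta_j/\delta_i$ is undefined. Read the inversion criterion as $C_j\phi(S_j)>C_i\phi(S_i)$, and count the resulting ties at $\tilde C=0$ with weight $1/2$, i.e.\ replace $p$ by $p+t/2$ in the display, where $t$ is the tie probability.
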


\begin{proof}
By Proposition~\ref{prop:pairwise_inversion}, ranking inversions occur whenever the penalty ratio exceeds the scalar-confidence ratio. Near saturation, small absolute variations in a penalty function can still produce relative penalty differences across examples, especially for nonlinear or max-based penalties. If those penalty differences are weakly correlated with correctness, they can reverse confidence orderings without improving correctness ranking. Therefore the scalarized score can compress or distort useful confidence rankings.
\end{proof}

\noindent\textbf{Implications.} Corollary~\ref{cor:saturation_ranking} connects the formal ranking argument to the empirical source profiles. The image-removal confidence-retention score is near saturation in many model-dataset settings. Multiplicative source penalties therefore risk flattening or distorting the confidence ranking. This explains why Perturbation-Response Signatures can improve failure diagnosis while source-aware scalar confidence can underperform scalar confidence in selective prediction.

\subsection{Routing Depends On Fix And Break Rates}
\label{app:routing_fix_break}

Let $a$ be the baseline accuracy before intervention. For a policy $\pi$, let $\mathrm{Fix}_{\pi}$ be the fraction of originally incorrect examples corrected by the intervention and let $\mathrm{Break}_{\pi}$ be the fraction of originally correct examples made incorrect.

\begin{proposition}[Accuracy Decomposition For Intervention Routing]
\label{prop:routing_decomposition}
The post-intervention accuracy of policy $\pi$ is:
\begin{equation}
\begin{split}
& \mathrm{Acc}_{\pi}=a+(1-a)\mathrm{Fix}_{\pi}-a\mathrm{Break}_{\pi}.
\end{split}
\label{eq:app_routing_accuracy}
\end{equation}
For two policies $\pi_1$ and $\pi_2$, policy $\pi_1$ has higher post-intervention accuracy than $\pi_2$ if and only if:
\begin{equation}
\begin{split}
& (1-a)(\mathrm{Fix}_{\pi_1}-\mathrm{Fix}_{\pi_2}) \\
& \qquad > a(\mathrm{Break}_{\pi_1}-\mathrm{Break}_{\pi_2}).
\end{split}
\label{eq:app_routing_condition}
\end{equation}
\end{proposition}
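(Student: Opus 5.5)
The plan is a counting argument over the evaluated examples, followed by a one-line algebraic rearrangement. Fix a set of $n$ examples and a policy $\pi$. Partition the examples by their baseline correctness into the originally correct set $\mathcal{C}$, with $|\mathcal{C}|=an$, and the originally incorrect set $\mathcal{I}$, with $|\mathcal{I}|=(1-a)n$. After the intervention, every originally correct example either stays correct or breaks. Every originally incorrect example either gets fixed or stays incorrect. These four transition types exhaust all cases because correctness is binary and is evaluated by the same normalized match $z(x,M)$ before and after intervention.

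\textbf{Step 1 (the decomposition).} Using the definitions in Eq.~\ref{eq:fix_break_main}, the number of broken examples is $\mathrm{Break}_{\pi}\cdot an$. The number of fixed examples is $\mathrm{Fix}_{\pi}\cdot(1-a)n$. The post-intervention count of correct examples is therefore
\[
an-\mathrm{Break}_{\pi}\,an+\mathrm{Fix}_{\pi}(1-a)n .
\]
Dividing by $n$ gives Eq.~\ref{eq:app_routing_accuracy}. For the same identity at the population level, replace counts with probabilities:
\[
\mathrm{Acc}_{\pi}=P(Z=1)P(Z'=1\mid Z=1)+P(Z=0)P(Z'=1\mid Z=0),
\]
where $Z'$ denotes post-intervention correctness. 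Then use $P(Z'=1\mid Z=1)=1-\mathrm{Break}_{\pi}$ and $P(Z'=1\mid Z=0)=\mathrm{Fix}_{\pi}$.

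\textbf{Step 2 (the comparison).} Apply Step 1 to $\pi_1$ and $\pi_2$ over the same baseline predictions, so that $a$ is common to both. Subtracting gives
\[
\mathrm{Acc}_{\pi_1}-\mathrm{Acc}_{\pi_2}=(1-a)(\mathrm{Fix}_{\pi_1}-\mathrm{Fix}_{\pi_2})-a(\mathrm{Break}_{\pi_1}-\mathrm{Break}_{\pi_2}).
\]
This difference is positive exactly when Eq.~\ref{eq:app_routing_condition} holds. Both directions of the "if and only if" follow at once.

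The only real obstacle is bookkeeping rather than mathematics. The key point is that both policies must be evaluated against the \emph{same} baseline answers, so that $a$ and the denominators of $\mathrm{Fix}$ and $\mathrm{Break}$ coincide. I would state this assumption explicitly. I would also handle the degenerate cases $a\in\{0,1\}$, where one of the rates is undefined, by the convention that the corresponding term has zero weight, so the identity still holds.
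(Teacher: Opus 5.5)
Your proposal is correct and follows the paper's own proof: split by baseline correctness, keep a $1-\mathrm{Break}_{\pi}$ fraction of the originally correct mass $a$ and repair a $\mathrm{Fix}_{\pi}$ fraction of the incorrect mass $1-a$, then subtract the two instances that share the same $a$. Your explicit same-baseline assumption and the convention for $a\in\{0,1\}$ are minor clarifications that the paper leaves implicit.
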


\begin{proof}
Among the originally correct fraction $a$, the intervention preserves a fraction $1-\mathrm{Break}_{\pi}$. Among the originally incorrect fraction $1-a$, it repairs a fraction $\mathrm{Fix}_{\pi}$. Thus:
\begin{equation}
\begin{split}
& \mathrm{Acc}_{\pi}=a(1-\mathrm{Break}_{\pi})+(1-a)\mathrm{Fix}_{\pi},\\
& \mathrm{Acc}_{\pi}=a+(1-a)\mathrm{Fix}_{\pi}-a\mathrm{Break}_{\pi}.
\end{split}
\end{equation}
Subtracting the expression for $\pi_2$ from the expression for $\pi_1$ gives Equation.~\ref{eq:app_routing_condition}.
\end{proof}

\noindent\textbf{Implications.} Proposition~\ref{prop:routing_decomposition} explains why intervention routing should be evaluated with both fix rate and break rate. A generic intervention can repair some failures and still reduce accuracy if it breaks many originally correct examples. This is especially important in high-accuracy settings, where the $a\mathrm{Break}_{\pi}$ term is large. The result supports the paper’s routing analysis, where matched routing is most meaningful when it avoids harmful generic overcorrection.

\begin{corollary}[Break-Rate Reduction Can Matter More Than Fix-Rate Gain]
\label{cor:break_rate_importance}
If baseline accuracy $a>0.5$, then a one-point reduction in break rate contributes more to post-intervention accuracy than a one-point increase in fix rate. Formally, the coefficient on $\mathrm{Break}_{\pi}$ in Eq.~\ref{eq:app_routing_accuracy} has magnitude $a$, while the coefficient on $\mathrm{Fix}_{\pi}$ is $1-a$.
\end{corollary}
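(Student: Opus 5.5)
This corollary follows directly from the accuracy decomposition in Proposition~\ref{prop:routing_decomposition}, so the plan is to read off the partial effects of the two rates and compare their coefficients. We start from Eq.~\ref{eq:app_routing_accuracy}, $\mathrm{Acc}_{\pi}=a+(1-a)\mathrm{Fix}_{\pi}-a\mathrm{Break}_{\pi}$, treating the baseline accuracy $a$ as fixed by the pre-intervention model and dataset. The accuracy is affine in the pair $(\mathrm{Fix}_{\pi},\mathrm{Break}_{\pi})$. Its coefficient on $\mathrm{Fix}_{\pi}$ is $1-a$, and its coefficient on $\mathrm{Break}_{\pi}$ is $-a$, whose magnitude is $a$. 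This already gives the formal part of the statement.

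Next we compare two policies that differ in a single rate by the same amount $\delta>0$. Applying Eq.~\ref{eq:app_routing_condition} (or just subtracting the two copies of Eq.~\ref{eq:app_routing_accuracy}), a policy $\pi_1$ with $\mathrm{Break}_{\pi_1}=\mathrm{Break}_{\pi_2}-\delta$ and equal fix rate gains $a\delta$ in accuracy. A policy with $\mathrm{Fix}_{\pi_1}=\mathrm{Fix}_{\pi_2}+\delta$ and equal break rate gains $(1-a)\delta$. The first gain exceeds the second if and only if $a>1-a$, that is, $a>0.5$. Taking $\delta=0.01$ gives the ``one point'' phrasing. Because the expression is affine, the comparison holds uniformly, whatever the current values of the rates.

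The only real subtlety is interpretive rather than computational. The rates $\mathrm{Fix}_{\pi}$ and $\mathrm{Break}_{\pi}$ are normalized by different denominators: the counts of originally incorrect and originally correct examples, per Eq.~\ref{eq:fix_break_main}. So the statement must be read as ``one point of rate,'' not ``one example.'' I would make this explicit, and note that the asymmetry is exactly the reweighting of the two rates by the class proportions $1-a$ and $a$.

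I would also remark that the boundary case $a=0.5$ gives equality. I would add, without proof, the equivalent trade-off form: when $a>0.5$, a break-rate reduction of $\delta$ offsets any fix-rate loss smaller than $a\delta/(1-a)$.
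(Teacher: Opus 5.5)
Your proposal is correct and matches the paper's own argument: read off the coefficients $1-a$ on $\mathrm{Fix}_{\pi}$ and $-a$ on $\mathrm{Break}_{\pi}$ from Eq.~\ref{eq:app_routing_accuracy}, then note that $a>0.5$ gives $a>1-a$, so an equal-sized break-rate reduction raises accuracy more than an equal-sized fix-rate gain. Your extra remarks (the different denominators of the two rates, equality at $a=0.5$, and the offset threshold $a\delta/(1-a)$) are accurate, though the paper's one-line proof does not include them.
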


\begin{proof}
The claim follows directly from Eq.~\ref{eq:app_routing_accuracy}. The contribution of fix rate is weighted by $1-a$, while the contribution of break rate is weighted by $a$. If $a>0.5$, then $a>1-a$, and break-rate changes have larger absolute effect on final accuracy than equal-sized fix-rate changes.
\end{proof}

\noindent\textbf{Implications.} Corollary~\ref{cor:break_rate_importance} clarifies why the break rate is emphasized alongside accuracy. In settings such as Qwen3-VL/VSR, the generic intervention harms many originally correct answers. Reducing breakage can be more safety-relevant than maximizing repair count alone. This supports the conditional routing claim: Perturbation-Response Signatures are useful when they prevent harmful interventions, not only when they increase pooled accuracy.

\subsection{Limits Of Scalar Abstention Under Open-World Deployment}
\label{app:limits_scalar_abstention}

The preceding results imply a broader limitation of score-based abstention. Abstention is not an intrinsic inference-time capability of a model. It is a designer-imposed guardrail over a score, threshold, prediction set, verifier, judge, or learned rejection rule. Such a guardrail can support selective answering under validation assumptions. However, it is not automatically an instance-grounded verdict for a new input. We separate two requirements. First, the scalar score must be sufficient for the grounded safe-answer label. Second, the validation-to-deployment score-risk relation must transfer to the inference distribution. Without these requirements, a score-based abstention rule can be executed, but it has no unconditional inference-time guarantee.

\begin{theorem}[No Grounded Abstention From A Non-Sufficient Scalar]
\label{thm:no_grounded_scalar_abstention}
Let $X=(I,q,\hat{y})$ be a prediction instance. Let $Y^\star\in\{0,1\}$ denote the grounded safe-answer label, where $Y^\star=1$ means that the model should answer under the target reliability criterion, and $Y^\star=0$ means that it should not answer. Let $R=r(X)\in\mathbb{R}$ be any scalar abstention score, and let an abstention policy be any measurable function $h(R)\in\{0,1\}$.

A perfectly reliable scalar abstention policy exists if and only if $Y^\star$ is measurable with respect to $R$. Equivalently, for almost every score value or score region used by the policy, all examples assigned the same scalar information must have the same grounded safe-answer label. If there exists a score region $B$ with $P(R\in B)>0$ such that
\begin{equation}
\begin{split}
& P(Y^\star=1\mid R\in B)>0,\\
& P(Y^\star=0\mid R\in B)>0,
\end{split}
\label{eq:grounded_scalar_region}
\end{equation}
then every abstention policy based only on $R$ has positive error on $B$. Therefore, unless the scalar score is sufficient for the grounded safe-answer label, the question ``should the model answer?'' cannot be reliably answered from that scalar alone.
\end{theorem}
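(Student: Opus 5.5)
The proof will mirror that of Proposition~\ref{prop:scalar_nonidentifiability}, with $F$ replaced by the binary label $Y^\star$ and $C$ replaced by the arbitrary scalar score $R$. Here ``perfectly reliable'' means $h(R)=Y^\star$ almost surely.

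\textbf{Equivalence.} If $Y^\star$ is $\sigma(R)$-measurable, the Doob--Dynkin lemma gives a Borel map $h$ with $Y^\star=h(R)$. Since $Y^\star\in\{0,1\}$, $h$ can be taken $\{0,1\}$-valued (compose with an indicator if needed). That $h$ has zero error.

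Conversely, if some measurable $h$ satisfies $h(R)=Y^\star$ almost surely, then $Y^\star$ agrees almost surely with a $\sigma(R)$-measurable variable. Hence it is measurable with respect to the completed $\sigma(R)$.

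For the ``equivalently'' clause, I will pass to the regular conditional distribution $\eta(r)=P(Y^\star=1\mid R=r)$. Measurability holds if and only if $\eta(R)\in\{0,1\}$ almost surely, which says that examples sharing a score value share a label almost surely.

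\textbf{Positive error on $B$.} Fix any policy $h$ and condition on $R=r$. The conditional error is $\eta(r)$ if $h(r)=0$ and $1-\eta(r)$ if $h(r)=1$. In either case it is at least $\min(\eta(r),1-\eta(r))$, which equals $1-\max_{y}P(Y^\star=y\mid R=r)$. Integrating over $B$ bounds the error of $h$ on $B$ below by $\mathbb{E}[\min(\eta(R),1-\eta(R))\mid R\in B]$.

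\textbf{Main obstacle.} The hypothesis in Eq.~\ref{eq:grounded_scalar_region} only says that both labels occur with positive probability on $B$. That is a statement about the aggregate over $B$, not pointwise mixing. As literally stated, it does not force $\min(\eta,1-\eta)>0$ on a positive-measure subset. For example, $B$ could contain two score values that are each pure but carry different labels; a policy that outputs different decisions on those values has zero error.

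I would handle this in one of two ways. The first is to read ``policy based only on $R$'' as one that assigns a single decision to the whole region $B$, so the policy is constant on $B$. Then the error on $B$ is at least $\min\{P(Y^\star=1\mid R\in B),P(Y^\star=0\mid R\in B)\}>0$. The second is to strengthen the hypothesis to the pointwise form used in Proposition~\ref{prop:scalar_nonidentifiability}, namely $\mathbb{E}[\min(\eta(R),1-\eta(R))\mid R\in B]>0$. This is exactly the ``for almost every score value'' phrasing of the statement, and under it the bound above gives positive error directly.

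I would state the region-level version as the intended one and note that the pointwise version follows from the same computation. The final sentence of the theorem is then a restatement: without sufficiency, some positive-probability region carries mixed labels, so any $R$-only rule errs there.
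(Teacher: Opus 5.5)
Your proof is correct and follows the paper's route: the equivalence is the same measurability argument (you are more careful about completing $\sigma(R)$ and handling null sets), and for the positive-error clause the paper silently adopts your first reading. It argues that a policy ``must assign its decision using only the scalar information available in $B$'' and therefore errs on one of the two labels. Your two-point counterexample is right that the clause fails for a measurable $h$ that varies within $B$, so making explicit either the constant-on-$B$ reading or the pointwise hypothesis $\mathbb{E}[\min(\eta(R),1-\eta(R))\mid R\in B]>0$ repairs the stated theorem rather than departing from the paper.
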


\begin{proof}
If $Y^\star$ is measurable with respect to $R$, then there exists a measurable function $h$ such that $h(R)=Y^\star$ almost surely. This gives a perfectly reliable scalar abstention policy. Conversely, suppose a perfectly reliable scalar abstention policy $h(R)$ exists. Then $Y^\star=h(R)$ almost surely, which means $Y^\star$ is measurable with respect to $R$. For the positive-error statement, condition on $R\in B$. Any policy based only on $R$ must assign its decision using only the scalar information available in $B$. Since both $Y^\star=1$ and $Y^\star=0$ have positive conditional mass in $B$, the policy must be wrong on the mass assigned to at least one of these labels. Hence, the error on $B$ is positive.
\end{proof}

\paragraph{Implication.}
The theorem is score-agnostic. It applies to confidence, entropy, calibrated probability, semantic uncertainty, verifier scores, judge scores, conformal nonconformity scores, and learned rejection scores. A scalar score can support reliable abstention only when it already contains the grounded safe-answer state required by the deployment criterion.

\begin{proposition}[No Unconditional Inference-Time Abstention Guarantee]
\label{prop:no_unconditional_abstention}
Let $R=r(X)$ be any scalar score used for abstention, and let $\tau$ be a threshold chosen from validation data. The inference-time decision
\begin{equation}
\begin{split}
& \mathrm{answer}(X)=\mathbf{1}[R(X)\geq \tau]
\end{split}
\label{eq:threshold_answer_rule}
\end{equation}
is guaranteed to satisfy a target deployment risk only under a transfer condition connecting the validation and deployment distributions, for example
\begin{equation}
\begin{split}
& P_{\mathrm{deploy}}(Y^\star=0\mid R\geq \tau)\\
& \qquad = P_{\mathrm{val}}(Y^\star=0\mid R\geq \tau).
\end{split}
\label{eq:score_risk_transfer}
\end{equation}
Without such a condition, there exists a deployment distribution with the same score range and the same abstention rule for which the accepted region has arbitrarily high error. Hence, score-based abstention has no distribution-free inference-time guarantee over open-world inputs.
\end{proposition}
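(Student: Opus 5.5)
The proof has two parts. The first is the sufficiency direction: under the transfer condition in Eq.~\ref{eq:score_risk_transfer}, the validation guarantee carries over to deployment. The second is the necessity direction: I will build an adversarial deployment distribution that keeps the score range and the rule $\mathbf{1}[R\geq\tau]$ fixed but makes the accepted region arbitrarily bad. Both directions lean on Theorem~\ref{thm:no_grounded_scalar_abstention}: the abstention decision sees only $R$, so the deployment risk on the accepted region is determined entirely by the joint law of $(R,Y^\star)$, and the policy cannot detect a change in the conditional $P(Y^\star\mid R)$.

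\textbf{Sufficiency.} Suppose $\tau$ is chosen so that $P_{\mathrm{val}}(Y^\star=0\mid R\geq\tau)\le\alpha$. Then Eq.~\ref{eq:score_risk_transfer} gives the same bound under $P_{\mathrm{deploy}}$ by direct substitution. This is a one-line step, and its role is to show that the transfer condition is exactly what the guarantee consumes.

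\textbf{Necessity.} Fix any target error $1-\delta$ with $\delta>0$. Assume the accepted region has positive deployment mass, since otherwise the claim is vacuous. I will define $P_{\mathrm{deploy}}$ through a two-step construction. First, keep the marginal of $R$ equal to its validation marginal, which preserves the score range and the set $\{R\ge\tau\}$. Second, replace the conditional label law by $P_{\mathrm{deploy}}(Y^\star=0\mid R=r)=1-\delta$ for $r\ge\tau$, and leave it unchanged for $r<\tau$.

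The open-world assumption is what licenses this: $Y^\star$ is a grounded property of $(I,q,\hat y)$ and is not a function of $R$. Concretely, the deployment instances can be taken to be ones on which the model is confidently wrong. These are realizable because, by Theorem~\ref{thm:no_grounded_scalar_abstention} and the non-sufficiency assumption, score regions containing both labels exist. The accepted-region error is then $1-\delta$, which is arbitrarily high, while the rule $\mathbf{1}[R\ge\tau]$ is unchanged.

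Finally, since this construction works for every $\tau$ and every score function $r$, no distribution-free inference-time guarantee exists. Eq.~\ref{eq:score_risk_transfer} is only one example of a sufficient condition; in general, some assumption linking $P_{\mathrm{val}}$ and $P_{\mathrm{deploy}}$ is required.

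\textbf{Main obstacle.} The hard step is making the adversarial construction legitimate, not merely formal. It must be realizable by actual instances $X$ with the same scores, rather than being an arbitrary relabeling of $Y^\star$. I would handle this by stating explicitly the modeling assumption that deployment distributions range over all laws on $(X,Y^\star)$ consistent with the support of $r$. Under that assumption, changing the conditional of $Y^\star$ given $R$ is admissible. If that is too strong, the fallback is a mixture argument: use two instance types with equal score $r\ge\tau$ but opposite labels, which exist by non-sufficiency, and let the deployment reweight toward the unsafe type with weight $1-\delta$.
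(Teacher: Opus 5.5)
Your construction is the paper's argument. Because $\mathbf{1}[R\ge\tau]$ observes only $R$, shifting deployment mass onto accepted instances with $Y^\star=0$ raises the accepted-region error, and the rule cannot react. The paper's proof does exactly this by placing ``positive, or even all'' deployment mass on examples with $R\ge\tau$ and $Y^\star=0$. Your sufficiency direction is a harmless extra, since the statement's ``only under'' is a necessity claim. You correctly identify realizability as the crux, but the justification you give for it does not work.

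\textbf{The non-sufficiency step.} The non-sufficiency hypothesis of Theorem~\ref{thm:no_grounded_scalar_abstention} only yields \emph{some} score region $B$ carrying both labels, as in Eq.~\ref{eq:grounded_scalar_region}. Nothing places $B$ inside $\{R\ge\tau\}$. Suppose the label mixing happens only below $\tau$ and every instance with $r(X)\ge\tau$ is safe. Then the accepted region contains no errors under \emph{any} deployment distribution. In that case neither your relabeling construction nor your mixture fallback can be realized, and your claim that the construction ``works for every $\tau$ and every score function $r$'' is false. Non-sufficiency is not necessary either: a score that determines $Y^\star$ through a non-monotone map can still have unsafe instances above $\tau$.

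\textbf{What the argument needs.} The required premise is localized to the accepted region: unsafe accepted instances exist, i.e.\ $\{x: r(x)\ge\tau,\ Y^\star(x)=0\}$ can receive positive deployment mass. The paper's proof assumes this tacitly under the open-world framing. State it as an explicit hypothesis rather than deriving it from Theorem~\ref{thm:no_grounded_scalar_abstention}. The same premise also replaces your assumption that the accepted region has positive mass, which you cannot simply assume about a distribution you are constructing.

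\textbf{The marginal-preserving construction.} Preserving the full validation marginal of $R$ while forcing $P_{\mathrm{deploy}}(Y^\star=0\mid R=r)=1-\delta$ at every accepted $r$ requires unsafe instances at almost every accepted score level. That is a much heavier realizability burden than needed. Concentrating the accepted mass on unsafe accepted instances, with the score function and rule unchanged, already gives the result.
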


\begin{proof}
The threshold $\tau$ is fixed after validation. If no assumption links $P_{\mathrm{deploy}}$ to $P_{\mathrm{val}}$, we can construct a deployment distribution that places positive, or even all, mass on examples with $R\geq\tau$ and $Y^\star=0$. The abstention rule accepts these examples because it observes only $R$. Therefore, the accepted risk on deployment can be arbitrarily larger than the validation risk. Any guarantee must therefore come from an external transfer assumption, not from the score-threshold rule itself.
\end{proof}

\paragraph{Implication.}
The proposition locates the open-world weakness of score-threshold abstention. A validation risk curve, calibration curve, or prediction-set guarantee depends on the validation support and the assumed transfer relation. The accepted region is therefore warranted by the designer's validation assumptions, not by an unconditional inference-time guarantee for each new instance.

\begin{corollary}[No Source-Complete Scalar Reliability]
\label{cor:no_source_complete_scalar_reliability}
Let $G$ denote a decision-relevant grounding or failure-source state, such as whether the answer is visually supported, image-removal sensitive, or cross-modally unstable. Let $A^\star$ denote the safe correction action, with $A^\star=\textsc{none}$ when no correction is needed. If either $G$ or $A^\star$ is not measurable with respect to a scalar abstention score $R$, then no policy using only $R$ as an answer-versus-abstain interface is source-complete. Such a score may support selective answering under its assumptions, but it cannot identify the failure source or decide whether a generic correction is safe.
\end{corollary}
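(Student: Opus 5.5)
The plan is to reduce Corollary~\ref{cor:no_source_complete_scalar_reliability} to Theorem~\ref{thm:no_grounded_scalar_abstention}, or equivalently to Proposition~\ref{prop:scalar_nonidentifiability}. The target label is no longer the binary safe-answer label $Y^\star$. It is either the source state $G$ or the correction action $A^\star$. The first step is to make ``source-complete'' precise: a policy using only $R$ is source-complete if there are measurable maps $g$ and $a$ such that $G=g(R)$ and $A^\star=a(R)$ almost surely. Then $R$ identifies the failure source and decides whether correction (in particular generic correction) is safe. Every admissible interface is of the form $\pi(R)$ with $\pi$ measurable, so any reported source or action is a measurable function of $R$.

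Second, I will rerun the measurability argument of Theorem~\ref{thm:no_grounded_scalar_abstention} with $Y^\star$ replaced by $G$ (resp.\ $A^\star$).

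\emph{Forward direction.} If a source-complete policy existed, then $G=g(R)$ almost surely. This makes $G$ $\sigma(R)$-measurable, up to completion of the null set. The same holds for $A^\star=a(R)$.

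\emph{Contrapositive.} If $G$ or $A^\star$ is not measurable with respect to $R$, no such $g$ or $a$ exists. Hence no $R$-only policy is source-complete.

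For the quantitative version, I will mirror Proposition~\ref{prop:scalar_nonidentifiability}. Non-measurability yields a region $B$ with $P(R\in B)>0$ on which $P(G=\cdot\mid R)$ is not degenerate. On that region the best $R$-only rule has error $\mathbb{E}[1-\max_{\gamma}P(G=\gamma\mid R)\mid R\in B]>0$.

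Third, I will handle the clause about ``whether a generic correction is safe.'' This is the event $\{A^\star\in\{\textsc{none},\text{generic}\}\}$, a measurable function of $A^\star$. If that event is not $\sigma(R)$-measurable, the same argument applies to the binary indicator. Theorem~\ref{thm:no_grounded_scalar_abstention} then applies verbatim with $Y^\star$ replaced by this indicator. The positive-result side of the statement (``may support selective answering under its assumptions'') needs no proof beyond citing Proposition~\ref{prop:no_unconditional_abstention}.

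The main obstacle is the step from ``not measurable'' to ``positive error.'' Non-measurability with respect to the completed $\sigma(R)$ must imply that the conditional law is non-degenerate on a set of positive probability. The argument is as follows. If $P(G=\cdot\mid R)$ were a point mass almost surely, its mode would give a measurable $g$ with $G=g(R)$ almost surely, contradicting non-measurability. For this I will assume $G$ and $A^\star$ take values in finite (or standard Borel) spaces, as the paper's $\mathcal{F}$ does, so that regular conditional distributions and a measurable argmax exist.
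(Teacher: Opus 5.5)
Your proposal is correct and follows the paper's route. The paper's proof is essentially your contrapositive step alone: any $R$-only policy is a function of $R$, so it cannot recover a $G$ or $A^\star$ that is not $\sigma(R)$-measurable. The rest of your plan goes beyond the paper: the almost-sure/completion caveat, the quantitative positive-error bound mirroring Proposition~\ref{prop:scalar_nonidentifiability}, and the separate indicator for generic-correction safety. That indicator needs its own non-measurability hypothesis, because a coarsening of a non-$\sigma(R)$-measurable $A^\star$ can still be $\sigma(R)$-measurable; your conditional phrasing already reflects this.
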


\begin{proof}
A policy using only $R$ can condition only on the information contained in $R$. If $G$ is not measurable with respect to $R$, then no function of $R$ can recover the grounding or failure-source state. If $A^\star$ is not measurable with respect to $R$, then no function of $R$ can determine the safe correction action. Therefore, a scalar answer-versus-abstain interface can be selective without being source-complete.
\end{proof}

\paragraph{Implication.}
The corollary connects scalar abstention to source incompleteness. A scalar score may support selective answering under its assumptions, although it cannot by itself identify the failure source, establish grounding, or decide whether a generic correction is safe.

\begin{table*}[t]
\centering
\small
\begin{tabular}{p{0.14\textwidth}p{0.12\textwidth}p{0.12\textwidth}p{0.22\textwidth}p{0.24\textwidth}}
\toprule
\textbf{Dataset} & \textbf{Split / Subset} & \textbf{N Per Model} & \textbf{Primary Diagnostic Stress} & \textbf{Role In \HalluPrism{}} \\
\midrule
HallusionBench & Image-paired subset & $951$ & Visual illusion, attribute error, and grounded hallucination & Mixed failure-family diagnosis and routing analysis under visually grounded hallucination cases. \\
POPE & Random, popular, and adversarial splits & $9{,}000$ & Object-presence hallucination and object priors & Object-hallucination diagnosis, question-only prior pressure, and abstention boundary check. \\
VSR & Zero-shot development examples & $340$ & Spatial and relation binding & Relation-swap alignment probing and routing break-rate analysis. \\
VizWiz-VQA & Validation split & $4{,}319$ & Visual insufficiency and answerability & Answerability diagnosis, visual-fragility analysis, and abstention boundary check. \\
\bottomrule
\end{tabular}
\caption{Dataset details for the \HalluPrism{} evaluation suite. Counts are per model before pooling across the four evaluated model families.}
\label{tab:dataset_details_summary}
\end{table*}

\begin{table*}[t]
\centering
\small
\begin{tabular}{p{0.14\textwidth}p{0.15\textwidth}p{0.25\textwidth}p{0.25\textwidth}}
\toprule
\textbf{Dataset} & \textbf{Output Space} & \textbf{Incorrect-Prediction Label} & \textbf{Diagnostic Interpretation} \\
\midrule
HallusionBench & yes/no & Attribute error or object hallucination & Mixed visually grounded hallucination and illusion failure. \\
POPE & yes/no & Object hallucination & Object-presence error under random, popular, or adversarial object-prior pressure. \\
VSR & true/false & Spatial or relation error & Cross-modal binding failure over explicit spatial relations. \\
VizWiz-VQA & Short answer or \textit{unanswerable} & Answerability failure & Visual insufficiency or failure to respect answerability constraints. \\
\bottomrule
\end{tabular}
\caption{Failure-family mapping used for diagnostic classification. Correct predictions are mapped to \textit{no failure}. Incorrect predictions are mapped using each benchmark's primary stress factor and available labels.}
\label{tab:failure_family_mapping}
\end{table*}

\begin{table*}[t]
\centering
\small
\begin{tabular}{lrrl}
\toprule
\textbf{Dataset} & \textbf{Per-Model N} & \textbf{Pooled N} & \textbf{Failure-Family Counts Per Model} \\
\midrule
HallusionBench & 951 & 3,804 & attribute = 360, object hallucination = 591 \\
POPE & 9,000 & 36,000 & object hallucination = 9,000 \\
VizWiz-VQA & 4,319 & 17,276 & answerability = 1,385, none = 2,934 \\
VSR & 340 & 1,360 & spatial = 340 \\
\bottomrule
\end{tabular}
\caption{Dataset composition and failure-family coverage. Counts are reported per model and after pooling across the four evaluated model families.}
\label{tab:dataset_composition_failure_coverage}
\end{table*}

\begin{table*}[t]
\centering
\small
\begin{tabular}{lllll}
\toprule
\textbf{Dataset} & \textbf{Visual Degradation} & \textbf{Blank Image} & \textbf{Grounding Re-Check} & \textbf{Relation Swap} \\
\midrule
HallusionBench & Yes & Yes & Yes & No \\
POPE & Yes & Yes & Yes & No \\
VSR & Yes & Yes & Yes & Yes \\
VizWiz-VQA & Yes & Yes & Yes & No \\
\bottomrule
\end{tabular}
\caption{Probe availability across datasets. Relation-swapped probing is used only when the input contains an explicit perturbable relation.}
\label{tab:dataset_probe_availability}
\end{table*}


\section{Dataset Details}
\label{app:dataset_details}

We use four public multimodal benchmarks that stress different sources of prediction risk. The goal is not to build a single pooled hallucination leaderboard. We instead need a dataset suite in which visual fragility, image-removal confidence retention, and grounding/relation-probe instability can be observed under different task conditions. HallusionBench contributes visual illusion and grounded hallucination cases, POPE contributes object-presence hallucination probes, Visual Spatial Reasoning (VSR) contributes explicit relation-binding examples, and VizWiz-VQA contributes real-user visual insufficiency and answerability cases. This mixture supports the Perturbation-Response view because the same scalar confidence value can occur in examples with different underlying diagnostic stresses.

\subsection{Dataset Roles}
\label{app:dataset_roles}

Table~\ref{tab:dataset_details_summary} summarizes the datasets, splits, sample counts, and their roles in the paper. The counts are reported per model before pooling across the four evaluated model families. HallusionBench contains $1{,}129$ datapoints, out of which $178$ are text-only. Since \HalluPrism{} requires visual perturbation, we evaluate the $951$ image-paired examples. POPE contributes $9{,}000$ object-presence examples across random, popular, and adversarial splits. VSR contributes $340$ zero-shot development examples with explicit spatial relations. VizWiz-VQA contributes $4{,}319$ validation examples and is used for visual insufficiency and answerability analysis.

\subsection{HallusionBench}
\label{app:dataset_hallusionbench}

HallusionBench evaluates visually grounded hallucination and illusion cases. We use the image-paired subset because the Perturbation-Response protocol requires image perturbations. Text-only examples are excluded since visual degradation, blank-image replacement, and image-grounded alignment probes would not be meaningful without an associated image. The evaluated subset contains yes/no questions, which allows answer normalization into canonical \texttt{yes} and \texttt{no} labels. We map the benchmark categories into attribute-level and object-level hallucination labels for failure-family diagnosis. HallusionBench is useful for the main story because it contains mixed visual and semantic pressures: a model may answer from a plausible prior, from partial visual evidence, or from unstable image-text binding.

\subsection{POPE}
\label{app:dataset_pope}

POPE provides object-presence hallucination probes. We use all $9{,}000$ examples per model across the random, popular, and adversarial splits. The task is binary, and outputs are normalized to \texttt{yes} and \texttt{no}. POPE is primarily used to measure object-hallucination behavior under strong object-prior pressure. It is also useful for the abstention boundary check because object-presence examples often produce high scalar confidence even when image-removal confidence retention is high. Since POPE does not contain explicit spatial relations, the relation-swap alignment probe is not used. Alignment is measured through the grounding re-check probe.

\subsection{Visual Spatial Reasoning}
\label{app:dataset_vsr}

VSR evaluates whether an image satisfies a spatial caption. We use $340$ zero-shot development examples. The base prompt asks the model whether the caption is true or false, and outputs are normalized to \texttt{true} and \texttt{false}. VSR is the clearest dataset for relation-sensitive alignment probing because captions contain explicit spatial relations such as \textit{left}, \textit{right}, \textit{above}, \textit{below}, \textit{in front of}, and \textit{behind}. For VSR, we use both alignment probes: (i) grounding re-checking and (ii) relation-swapped caption verification. The relation-swapped probe is central to testing whether the model preserves the same answer when the expected relation should change. VSR therefore anchors the cross-modal binding part of the Perturbation-Response Signature.

\subsection{VizWiz-VQA}
\label{app:dataset_vizwiz}

VizWiz-VQA contains visual questions from blind or low-vision users. We use the validation split with $4{,}319$ examples. The dataset is important for two reasons. First, many examples contain weak, blurry, poorly framed, or incomplete visual evidence, which directly stresses visual fragility. Second, some questions are not answerable from the image, making the dataset useful for answerability diagnosis. We instruct the model to return \texttt{unanswerable} when the image is too unclear or the question cannot be answered from visual evidence. We normalize answer variants before exact comparison. Since VizWiz questions are open-ended and do not usually contain controlled spatial relations, the relation-swap probe is not used. Alignment is measured through grounding re-checking.

\subsection{Failure-Family Mapping}
\label{app:failure_family_mapping}

Each model output receives a correctness label and a failure-family label. Correct predictions are mapped to \textit{no failure}. Incorrect predictions are mapped according to the benchmark stress factor and the available metadata. This mapping is intentionally coarse (see Table \ref{tab:failure_family_mapping}). We use it to test whether Perturbation-Response Signatures diagnose broad failure families, not to claim exhaustive error-taxonomy coverage.

\subsection{Probe Availability By Dataset}
\label{app:probe_availability_by_dataset}

We use visual degradation, blank-image replacement, and grounding re-checking for all image-paired datasets. We use relation-swapped probing only when the input contains an explicit perturbable relation, which applies to VSR (see Table \ref{tab:dataset_probe_availability}).

\subsection{Pooled Evaluation Units}
\label{app:pooled_evaluation_units}

The basic evaluation unit is a model-output instance $(x,M)$, where $x$ is a dataset example and $M$ is an evaluated model family. We use this pooled unit for source-profile summaries, dominant-source counts, failure-family diagnosis, abstention scalarization, and routing analysis. The per-dataset and pooled counts are reported in Table~\ref{tab:dataset_composition_failure_coverage}. We also report dataset-identity controls and within-dataset diagnosis because benchmark identity and failure family are partially aligned by design.

\section{Complete Prompt Catalog}
\label{app:prompt_catalog}

We provide the exact templates used for clean prediction, Perturbation-Response probing, probe-level entanglement tests, intervention routing, and auxiliary confidence reporting. These templates define the reproducible prompting protocol used in the experiments.

\subsection{Base Dataset Prompts}
\label{app:base_dataset_prompts}

Base prompts define the clean prediction $\hat{y}$ for each dataset.

\begin{promptbox}{Yes/No Visual Reasoning: POPE and HallusionBench}
\textbf{Required Input:}
\begin{itemize}[itemsep=2pt,parsep=0pt]
    \item \textbf{Question}: \{question\}
\end{itemize}

\textbf{Prompt Template:}

\{question\}

Answer with only `Yes' or `No'.
\end{promptbox}

\begin{promptbox}{Spatial Caption Verification: VSR}
\textbf{Required Input:}
\begin{itemize}[itemsep=2pt,parsep=0pt]
    \item \textbf{Caption}: \{caption\}
\end{itemize}

\textbf{Prompt Template:}

Look at the image carefully. Is the following statement true or false?

Statement: ``\{caption\}''

Answer with only `True' or `False'.
\end{promptbox}

\begin{promptbox}{Open-Ended VQA With Answerability: VizWiz}
\textbf{Required Input:}
\begin{itemize}[itemsep=2pt,parsep=0pt]
    \item \textbf{Question}: \{question\}
\end{itemize}

\textbf{Prompt Template:}

\{question\}

If the image is too unclear to answer, or the question cannot be answered from the image, say `unanswerable'. Otherwise, give a short answer (one or two words).
\end{promptbox}

\subsection{Diagnostic Perturbation-Response Probes}
\label{app:diagnostic_source_probes}

These probes produce the perturbed answers used to compute $V$, $L$, and $A$.

\subsubsection{Visual Fragility Probe}
\label{app:visual_probe_prompt}

The visual probe keeps the base prompt fixed and applies image degradations.

\begin{promptbox}{Visual Fragility Probe}
\textbf{Required Input:}
\begin{itemize}[itemsep=2pt,parsep=0pt]
    \item \textbf{Image}: \{image\}
    \item \textbf{Base Prompt}: \{base\_prompt\}
\end{itemize}

\textbf{Image Perturbations:}
\begin{itemize}[itemsep=2pt,parsep=0pt]
    \item Gaussian blur with $\sigma=15$.
    \item Center crop covering $50\%$ of each image dimension, resized back to the original size.
    \item Brightness reduction to $30\%$ of the original image.
    \item Gaussian noise with $\mu=0$ and $\sigma=25$.
\end{itemize}

\textbf{Prompt Template:}

\{base\_prompt\}
\end{promptbox}

\subsubsection{Image-Removal Confidence-Retention Probe}
\label{app:language_prior_probe}

The image-removal probe replaces the original image with a blank canvas while keeping the base prompt fixed.

\begin{promptbox}{Image-Removal Confidence-Retention Probe}
\textbf{Required Input:}
\begin{itemize}[itemsep=2pt,parsep=0pt]
    \item \textbf{Blank Image}: \{blank\_image\}
    \item \textbf{Base Prompt}: \{base\_prompt\}
\end{itemize}

\textbf{Image Replacement:}

The original image is replaced with a uniform blank canvas of the same input size.

\textbf{Prompt Template:}

\{base\_prompt\}
\end{promptbox}

\subsubsection{Cross-Modal Alignment Probe}
\label{app:alignment_probe_prompt}

The alignment probe uses grounding re-checking for all datasets and relation-swapped prompting only for VSR.

\begin{promptbox}{Alignment Probe: Grounding Re-Check}
\textbf{Required Input:}
\begin{itemize}[itemsep=2pt,parsep=0pt]
    \item \textbf{Original Prompt}: \{original\_prompt\}
\end{itemize}

\textbf{Prompt Template:}

Look at the image carefully. First, describe what you see in the image in one sentence. Then answer the following question.

\{original\_prompt\}
\end{promptbox}

\begin{promptbox}{Alignment Probe: Relation-Swapped Prompt for VSR}
\textbf{Required Input:}
\begin{itemize}[itemsep=2pt,parsep=0pt]
    \item \textbf{Caption}: \{caption\}
    \item \textbf{Relation-Swapped Caption}: \{relation\_swapped\_caption\}
\end{itemize}

\textbf{Prompt Template:}

Look at the image carefully. Is the following statement true or false?

Statement: ``\{relation\_swapped\_caption\}''

Answer with only `True' or `False'.
\end{promptbox}

\paragraph{Relation-swap construction.}
For VSR, we construct relation-swapped captions using dataset relations such as \textit{left}, \textit{right}, \textit{above}, \textit{below}, \textit{in front of}, and \textit{behind}.

\subsection{Probe-Level Entanglement Stress Tests}
\label{app:probe_entanglement_prompts}

These probes test whether targeted perturbations produce diagonal or off-diagonal responses in the measured Perturbation-Response Signature.

\begin{promptbox}{Language-Ruined Probe}
\textbf{Required Input:}
\begin{itemize}[itemsep=2pt,parsep=0pt]
    \item \textbf{Decoy Question}: \{decoy\_question\}
    \item \textbf{Answer Constraint}: \{constraint\}
\end{itemize}

\textbf{Example Decoy Question:}

What is the capital city of France?

\textbf{Prompt Template:}

\{decoy\_question\}

\{constraint\}
\end{promptbox}

\begin{promptbox}{Alignment-Ruined Probe: Fallback Opposite-Answer Prompt}
\textbf{Use Case:}

This fallback is used when deterministic relation inversion or polarity inversion is unavailable.

\textbf{Required Input:}
\begin{itemize}[itemsep=2pt,parsep=0pt]
    \item \textbf{Original Prompt}: \{original\_prompt\}
\end{itemize}

\textbf{Prompt Template:}

Answer the OPPOSITE of what you would normally answer to the following question.

\{original\_prompt\}
\end{promptbox}

\subsection{Intervention Routing Policies}
\label{app:intervention_routing_policies}

We provide the full prompt and parameter catalog for intervention routing. After computing the normalized dominant source $s^{*}_{\mathrm{norm}}(x,M)$, we choose one of three matched correction paths: visual enhancement for visual-dominant cases, anti-prior prompting for L-Dominant Cases, and grounding or relation checking for alignment-dominant cases. These templates and parameters define the intervention implementation used in the routing results. Appendix~\ref{app:intervention_routing} reports only the empirical outcomes.

\subsubsection{Generic Intervention Baseline}
\label{app:generic_intervention_prompt}

The generic baseline applies the same correction prompt to every example.

\begin{promptbox}{Generic Reasoning Intervention}
\textbf{Required Input:}
\begin{itemize}[itemsep=2pt,parsep=0pt]
    \item \textbf{Original Prompt}: \{original\_prompt\}
\end{itemize}

\textbf{Prompt Template:}

Think step by step about the image and question before answering.

\{original\_prompt\}
\end{promptbox}

\subsubsection{Matched Intervention for Visual-Dominant Cases}
\label{app:visual_intervention}

For visual-dominant cases, we apply image enhancement and keep the original prompt fixed.

\begin{promptbox}{Visual-Dominant Intervention: Image Enhancement}
\textbf{Required Input:}
\begin{itemize}[itemsep=2pt,parsep=0pt]
    \item \textbf{Image}: \{image\}
    \item \textbf{Original Prompt}: \{original\_prompt\}
\end{itemize}

\textbf{Image Enhancement:}
\begin{itemize}[itemsep=2pt,parsep=0pt]
    \item Upscale the image by $1.5\times$ using LANCZOS interpolation.
    \item Increase color saturation by $1.1\times$.
    \item Increase sharpness by $2.5\times$.
\end{itemize}

\textbf{Prompt Template:}

\{original\_prompt\}
\end{promptbox}

\subsubsection{Matched Intervention for L-Dominant Cases}
\label{app:anti_prior_intervention}

For L-Dominant Cases, we add an anti-prior instruction before the original prompt.

\begin{promptbox}{L-Dominant Intervention: Anti-Prior Prompt}
\textbf{Required Input:}
\begin{itemize}[itemsep=2pt,parsep=0pt]
    \item \textbf{Original Prompt}: \{original\_prompt\}
\end{itemize}

\textbf{Prompt Template:}

Look at the image very carefully before answering. Do NOT guess based on common sense or what is usually true. Base your answer ONLY on what you can actually see in this specific image.

\{original\_prompt\}
\end{promptbox}

\subsubsection{Matched Intervention for Alignment-Dominant Cases}
\label{app:alignment_intervention}

For alignment-dominant cases, we use grounded answering for general inputs and relation checking for VSR.

\begin{promptbox}{Alignment-Dominant Intervention: Grounded Answering}
\textbf{Required Input:}
\begin{itemize}[itemsep=2pt,parsep=0pt]
    \item \textbf{Original Prompt}: \{original\_prompt\}
\end{itemize}

\textbf{Prompt Template:}

Look at the image carefully. First, describe what you see in the image in one sentence. Then answer the following question based on your description.

\{original\_prompt\}
\end{promptbox}

\begin{promptbox}{Alignment-Dominant Intervention: VSR Relation Check}
\textbf{Required Input:}
\begin{itemize}[itemsep=2pt,parsep=0pt]
    \item \textbf{Caption}: \{caption\}
\end{itemize}

\textbf{Prompt Template:}

Follow these steps:

1. First, list the objects you can see in the image.

2. Then, describe the spatial relationship between them.

3. Finally, is this statement true or false: ``\{caption\}''

Answer with only `True' or `False' on the last line.
\end{promptbox}

\subsection{Answer Normalization}
\label{app:answer_normalization}

We apply answer normalization before comparing clean, perturbed, and intervened outputs. For binary yes/no datasets, surface variants such as `yes', `Yes.', and `yes, it is' are mapped to the canonical label \texttt{yes}; analogous variants are mapped to \texttt{no}. For VSR, \texttt{true} and \texttt{false} are the canonical labels. For VizWiz, \texttt{unanswerable} variants are mapped to a single answerability label, and short open-ended answers are lowercased and stripped of punctuation before exact matching. These normalization rules are applied consistently to clean predictions, diagnostic probes, scalarization comparisons, and intervention-routing evaluations.

\section{Additional Results}
\label{app:additional_results}

We provide additional result that support the Perturbation-Response view. We first report probe-design controls for image-removal confidence-retention saturation and alignment availability, and then specify dataset composition and source-profile statistics. We then examine controlled interventions and their off-target effects. After that, we analyze failure-family diagnosis, dataset-identity controls, human validation, abstention, and intervention routing. The additional tables form a single empirical trace from source prevalence to diagnostic utility. The main pattern is consistent across the appendix. Multimodal source signals are entangled under intervention, yet their joint signature remains informative for failure diagnosis. The same signature should not be treated as a direct scalar confidence correction.

\subsection{Image-Removal Saturation Controls}
\label{app:probe_sensitivity}

The default image-removal score $L=L_{\mathrm{conf}}$
 in Eq.~\ref{eq:language_score}
 measures confidence retention after removing image evidence and does not require the blank-image answer to match the clean answer.
Because the clipped confidence ratio can saturate, we run implementation-level controls to check whether the main conclusions depend on this formulation. We also evaluate the stricter answer-persistence score $L_{\mathrm{persist}}$
 from Eq.~\ref{eq:language_persist}
, which additionally requires the original answer to survive image removal.
These controls do not replace the default score used in the main paper. They test whether image-removal saturation alone explains the diagnostic results.


We compare four variants. First, $L_{\mathrm{conf}}$ is the default bounded confidence-retention score from Eq.~\ref{eq:language_score}. Second, $L_{\mathrm{persist}}$ additionally requires the blank-image answer to match the clean answer. Third, $L_{\mathrm{match}}$ removes the confidence ratio and keeps only blank-image answer survival:
\begin{equation}
\begin{split}
& L_{\mathrm{match}}
=\mathbf{1}[\mathrm{match}(\hat{y},\tilde{y}^{\ell})].
\end{split}
\label{eq:l_match_control}
\end{equation}
Fourth, $L_{\mathrm{stable}}$ keeps the confidence-ratio interpretation but suppresses low-denominator amplification by replacing the clean-confidence denominator with a fixed floor $\tau_c$:
\begin{equation}
\begin{aligned}
L_{\mathrm{stable}}
&=\mathbf{1}[\mathrm{match}(\hat{y},\tilde{y}^{\ell})] \\
&\quad \cdot \min\left(
\frac{c(\tilde{y}^{\ell})}
{\max(c(\hat{y}),\tau_c)+\epsilon},
1
\right).
\end{aligned}
\label{eq:l_stable_control}
\end{equation}
We set $\tau_c=0.5$ in all experiments.

\begin{table*}[t]
\centering
\small
\setlength{\tabcolsep}{4pt}
\scalebox{1}{
\begin{tabular}{lccccc}
\toprule
\textbf{Variant} & \textbf{Removal Input} & \textbf{Ratio Treatment} & \textbf{$L$ Dom.} & \textbf{AUROC} & \textbf{Macro-F1} \\
\midrule
$L_{\mathrm{conf}}$ & Blank & Clipped ratio & $99.7\%$ & $0.828$ & $0.333$ \\
$L_{\mathrm{persist}}$ & Blank & Match-gated ratio & $51.6\%$ & $0.823$ & $0.349$ \\
$L_{\mathrm{match}}$ & Blank & None & $51.7\%$ & $0.823$ & $0.346$ \\
$L_{\mathrm{stable}}$ & Blank & Floored denominator & $51.6\%$ & $0.823$ & $0.349$ \\
\bottomrule
\end{tabular}}
\caption{Image-removal saturation controls. $L$ Dom. denotes the fraction of examples for which $L$ is the raw dominant coordinate. AUROC and Macro-F1 use the same pooled failure-family diagnosis setting with a stratified 5-fold logistic-regression classifier, replacing only the image-removal coordinate.}
\label{tab:app_language_saturation_controls}
\end{table*}

For $L_{\mathrm{persist}}$, observed clean/blank answer agreement is $51.7\%$, compared with $28.8\%$ agreement expected from the answer marginals, giving chance-adjusted agreement $0.321$. XGBoost AUROC remains $0.952$ with $(V,L_{\mathrm{persist}},A)$ and $0.967$ when scalar confidence is added.


We observe from Table~\ref{tab:app_language_saturation_controls} that the raw dominance rate of $L$ drops sharply when answer matching is required, from $99.7\%$ for $L_{\mathrm{conf}}$ to about $51.6$--$51.7\%$ for the stricter variants. However, the diagnostic result remains stable. Replacing $L_{\mathrm{conf}}$ with $L_{\mathrm{persist}}$, $L_{\mathrm{match}}$, or $L_{\mathrm{stable}}$ changes AUROC only slightly, while Macro-F1 remains comparable or improves. Thus, the main diagnostic pattern does not depend on the broader confidence-retention formulation. The default $L$ should be read as confidence retention under image removal, not as answer persistence or a causal source label.

\subsection{Alignment Availability Controls}
\label{app:alignment_sensitivity}

The alignment score in Eq.~\ref{eq:alignment_score} is availability-aware. The grounding re-check component is available for all datasets, while relation-swap probing is available only when the input contains an explicit perturbable relation. In our evaluation suite, this applies to VSR. Therefore, raw $A$ magnitudes should not be interpreted as directly comparable source intensities across all datasets. We instead use $A$ as a behavioral alignment-stress coordinate and rely on dataset/model controls and within-setting normalization when the score is used across heterogeneous benchmarks.

We run an availability control to test whether the diagnostic value of $A$ is driven by the VSR-only relation-swap branch or by the $0.4/0.6$ grounding-relation weighting. This control removes VSR entirely from the pooled diagnosis setting. Since VSR is the only relation-eligible dataset, the no-VSR setting contains no relation-swap examples and no example for which the relation weight can affect $A$. The comparison therefore asks whether the Perturbation-Response Signature remains diagnostic when only the non-relation benchmark suite is used.

\begin{table*}[t]
\centering
\small
\setlength{\tabcolsep}{5pt}
\scalebox{1}{
\begin{tabular}{llccc}
\toprule
\textbf{Feature Set} & \textbf{Evaluation Scope} & \textbf{Relation-Swap Eligible} & \textbf{AUROC} & \textbf{Macro-F1} \\
\midrule
$c$ & All datasets & Yes & $0.752$ & $0.260$ \\
$(V,L,A)$ & All datasets & Yes & $0.828$ & $0.333$ \\
$c$ & Excluding VSR & No & $0.761$ & $0.329$ \\
$(V,L,A)$ & Excluding VSR & No & $0.839$ & $0.422$ \\
\bottomrule
\end{tabular}
}
\caption{Alignment availability control. We compare scalar confidence $c$ with the Perturbation-Response Signature $(V,L,A)$ under the same stratified 5-fold logistic-regression failure-family diagnosis setting. The no-VSR setting removes the only relation-swap-eligible dataset, so any remaining gain cannot be attributed to the VSR-only relation branch or to the grounding-relation weight in Eq.~\ref{eq:alignment_score}.}
\label{tab:app_alignment_availability_control}
\end{table*}

We observe from Table~\ref{tab:app_alignment_availability_control} that removing VSR does not reduce the diagnostic value of the Perturbation-Response Signature. With all datasets, $(V,L,A)$ improves AUROC over scalar confidence from $0.752$ to $0.828$ and Macro-F1 from $0.260$ to $0.333$. After excluding VSR, the gain remains nearly the same in AUROC and becomes larger in Macro-F1: AUROC improves from $0.761$ to $0.839$, while Macro-F1 improves from $0.329$ to $0.422$. This indicates that the main diagnosis result is not an artifact of the relation-swap branch or the particular $0.4/0.6$ weighting used when relation-swap probing is available. The relation-swap branch should be read as a stronger task-valid alignment stress for relation-bearing inputs, not as the source of the pooled diagnostic effect.

\subsection{Visual-Perturbation Robustness}
\label{app:visual_robustness}

We test whether the diagnosis result depends on the choice or strength of the visual perturbations. Removing blur, crop, brightness, or noise individually keeps diagnosis AUROC between $0.822$ and $0.824$, while normalized source assignments agree with the complete four-probe version on $89.8\%$--$97.8\%$ of examples.

We further evaluate weak, default, and strong perturbations on the same $10{,}540$ model-output instances.

\begin{table}[t]
\centering
\small
\setlength{\tabcolsep}{3pt}
\begin{tabular}{lcccccc}
\toprule
\textbf{Setting} & \textbf{Blur} & \textbf{Crop} & \textbf{Bright.} & \textbf{Noise} & $\mathbf{\bar V}$ & \textbf{Flip} \\
\midrule
Weak    & $8$  & $75\%$ & $50\%$ & $10$ & $0.132$ & $0.174$ \\
Default & $15$ & $50\%$ & $30\%$ & $25$ & $0.196$ & $0.263$ \\
Strong  & $22$ & $25\%$ & $15\%$ & $50$ & $0.252$ & $0.342$ \\
\bottomrule
\end{tabular}
\caption{Visual-perturbation severity analysis. Stronger perturbations increase both mean visual sensitivity and answer-flip rate.}
\label{tab:visual_perturbation_sensitivity}
\end{table}

Instance rankings remain stable relative to the default setting, with Spearman correlation $0.892$ for weak and $0.924$ for strong perturbations. With $L_{\mathrm{persist}}$, diagnosis AUROC is $0.785$, $0.786$, and $0.783$ under weak, default, and strong perturbations, respectively, compared with $0.632$ for scalar confidence. Macro-F1 remains $0.405$--$0.406$, compared with $0.179$ for scalar confidence. All paired-bootstrap intervals for the AUROC and Macro-F1 gains exclude zero.


\begin{table*}[t]
\centering
\small
\begin{tabular}{llrrrrrrrrr}
\toprule
\textbf{Model} & \textbf{Dataset} & \textbf{$N$} & \textbf{Acc} & \textbf{$\bar{V}$} & \textbf{$\bar{L}$} & \textbf{$\bar{A}$} & \textbf{LP Match} & \textbf{Dom-$V$} & \textbf{Dom-$L$} & \textbf{Dom-$A$} \\
\midrule
Gemma 4 & HallusionBench & 951 & 68.2\% & 0.122 & 0.958 & 0.501 & 56.9\% & 0 & 949 & 2 \\
Gemma 4 & POPE & 9000 & 86.9\% & 0.114 & 0.999 & 0.336 & 53.3\% & 0 & 9000 & 0 \\
Gemma 4 & VizWiz & 4319 & 54.0\% & 0.303 & 0.996 & 0.638 & 37.5\% & 2 & 4308 & 9 \\
Gemma 4 & VSR & 340 & 71.8\% & 0.143 & 0.999 & 0.675 & 64.4\% & 0 & 340 & 0 \\
LLaVA-Mistral & HallusionBench & 951 & 52.6\% & 0.090 & 0.979 & 0.241 & 58.4\% & 0 & 951 & 0 \\
LLaVA-Mistral & POPE & 9000 & 89.1\% & 0.104 & 0.999 & 0.061 & 55.8\% & 0 & 9000 & 0 \\
LLaVA-Mistral & VizWiz & 4319 & 54.9\% & 0.192 & 0.994 & 0.479 & 44.0\% & 0 & 4302 & 17 \\
LLaVA-Mistral & VSR & 340 & 70.3\% & 0.158 & 1.000 & 0.099 & 50.6\% & 0 & 340 & 0 \\
LLaVA-Vicuna & HallusionBench & 951 & 47.7\% & 0.071 & 0.945 & 0.208 & 71.2\% & 0 & 951 & 0 \\
LLaVA-Vicuna & POPE & 9000 & 88.6\% & 0.116 & 0.992 & 0.204 & 56.5\% & 0 & 9000 & 0 \\
LLaVA-Vicuna & VizWiz & 4319 & 53.9\% & 0.215 & 0.977 & 0.604 & 39.6\% & 0 & 4198 & 121 \\
LLaVA-Vicuna & VSR & 340 & 51.5\% & 0.037 & 0.914 & 0.105 & 96.5\% & 0 & 340 & 0 \\
Qwen3-VL & HallusionBench & 951 & 74.4\% & 0.142 & 0.962 & 0.318 & 58.6\% & 0 & 951 & 0 \\
Qwen3-VL & POPE & 9000 & 88.9\% & 0.094 & 1.000 & 0.265 & 56.9\% & 0 & 9000 & 0 \\
Qwen3-VL & VizWiz & 4319 & 61.3\% & 0.234 & 0.999 & 0.488 & 39.7\% & 0 & 4318 & 1 \\
Qwen3-VL & VSR & 340 & 87.4\% & 0.142 & 0.995 & 0.613 & 51.8\% & 0 & 339 & 1 \\
\bottomrule
\end{tabular}
\caption{Full source-profile with accuracy and raw dominant-source (\textbf{Dom}) counts. LP Match denotes exact-answer agreement between the original image and blank-image condition.}
\label{tab:app_source_profiles_full}
\end{table*}
\begin{table*}[t]
\centering
\small
\begin{tabular}{lrrrr}
\toprule
\textbf{Model} & \textbf{$N$} & \textbf{Dominant $V$} & \textbf{Dominant $L$} & \textbf{Dominant $A$} \\
\midrule
Gemma 4 & $14{,}610$ & $2$ ($0.01\%$) & $14{,}597$ ($99.91\%$) & $11$ ($0.08\%$) \\
LLaVA-Mistral & $14{,}610$ & $0$ ($0.00\%$) & $14{,}593$ ($99.88\%$) & $17$ ($0.12\%$) \\
LLaVA-Vicuna & $14{,}610$ & $0$ ($0.00\%$) & $14{,}489$ ($99.17\%$) & $121$ ($0.83\%$) \\
Qwen3-VL & $14{,}610$ & $0$ ($0.00\%$) & $14{,}608$ ($99.99\%$) & $2$ ($0.01\%$) \\
Pooled & $58{,}440$ & $2$ ($<0.01\%$) & $58{,}287$ ($99.74\%$) & $151$ ($0.26\%$) \\
\bottomrule
\end{tabular}
\caption{\textbf{Largest-Coordinate Counts}. We compute the largest coordinate using $\arg\max(V,L,A)$. The largest coordinate is overwhelmingly $L$ because $L$ is near saturation in most settings; this count should therefore be read as prevalence, not diagnostic value.}
\label{tab:app_dominant_source_counts}
\end{table*}

\subsection{Full Source Profiles}
\label{app:full_source_profiles}

We use the full source-profile tables to separate raw prevalence from diagnostic value. We observe from Table~\ref{tab:app_source_profiles_full} that $L$ is high across nearly all model-dataset settings, with means from $0.914$ to $1.000$ and blank-image match rates from $37.5\%$ to $96.5\%$. In contrast, $V$ and $A$ show stronger task dependence: VizWiz gives the largest visual-fragility values, while VSR gives the largest alignment values for Gemma 4 and Qwen3-VL. We observe from Table~\ref{tab:app_dominant_source_counts} that raw $\arg\max(V,L,A)$ selects $L$ for $58{,}287/58{,}440$ pooled instances. This saturation makes raw dominance a poor proxy for diagnostic value, motivating our use of the full $(V,L,A)$ vector for failure-family diagnosis and normalized dominance for routing.



\begin{table*}[t]
\centering
\small
\begin{tabular}{lrrrr}
\toprule
\textbf{Model} & \textbf{Visual-Ruined} & \textbf{Language-Ruined} & \textbf{Alignment-Ruined} & \textbf{All Checks} \\
\midrule
Gemma 4 & 3/4 & 1/4 & 1/4 & 5/12 \\
LLaVA-Mistral & 3/4 & 0/4 & 1/4 & 4/12 \\
LLaVA-Vicuna & 1/4 & 0/4 & 1/4 & 2/12 \\
Qwen3-VL & 4/4 & 1/4 & 2/4 & 7/12 \\
\midrule
\textbf{Pooled} & \textbf{11/16} & \textbf{2/16} & \textbf{5/16} & \textbf{18/48} \\
\bottomrule
\end{tabular}
\caption{\textbf{Magnitude-based diagonal-dominance pass counts for the interventional entanglement test}. A check passes when the intended source coordinate has the largest absolute change under the corresponding ruined condition.}
\label{tab:app_identifiability_pass_counts}
\end{table*}


\begin{table*}[t]
\centering
\small
\begin{tabular}{llrrrl}
\toprule
\textbf{Dataset} & \textbf{Intervention} & \textbf{$\Delta V$} & \textbf{$\Delta L$} & \textbf{$\Delta A$} & \textbf{Outcome} \\
\midrule
HallusionBench & Visual-ruined & -0.075 & +0.016 & +0.126 & FAIL \\
HallusionBench & Language-ruined & -0.061 & -0.025 & +0.088 & FAIL \\
HallusionBench & Alignment-ruined & -0.001 & -0.000 & -0.001 & PASS \\
POPE & Visual-ruined & -0.086 & +0.000 & +0.068 & PASS \\
POPE & Language-ruined & -0.031 & -0.052 & +0.247 & FAIL \\
POPE & Alignment-ruined & +0.000 & +0.000 & +0.000 & FAIL \\
VizWiz & Visual-ruined & -0.198 & +0.000 & +0.049 & PASS \\
VizWiz & Language-ruined & -0.228 & -0.002 & -0.009 & FAIL \\
VizWiz & Alignment-ruined & -0.000 & +0.000 & +0.000 & FAIL \\
VSR & Visual-ruined & -0.143 & -0.001 & +0.044 & PASS \\
VSR & Language-ruined & -0.023 & -0.038 & -0.023 & PASS \\
VSR & Alignment-ruined & -0.014 & +0.000 & +0.005 & FAIL \\
\bottomrule
\end{tabular}
\caption{\textbf{Interventional Entanglement $\Delta$ for Gemma 4}. A check passes when the intended source coordinate has the largest absolute change under the corresponding ruined condition. Resultsare computed using unrounded $\Delta$.}
\label{tab:app_identifiability_gemma_vl}
\end{table*}


\begin{table*}[t]
\centering
\small
\begin{tabular}{llrrrl}
\toprule
\textbf{Dataset} & \textbf{Intervention} & \textbf{$\Delta V$} & \textbf{$\Delta L$} & \textbf{$\Delta A$} & \textbf{Outcome} \\
\midrule
HallusionBench & Visual-ruined & -0.026 & +0.011 & +0.012 & PASS \\
HallusionBench & Language-ruined & -0.051 & +0.007 & -0.054 & FAIL \\
HallusionBench & Alignment-ruined & +0.001 & +0.000 & +0.003 & PASS \\
POPE & Visual-ruined & -0.050 & +0.001 & -0.006 & PASS \\
POPE & Language-ruined & -0.076 & -0.014 & +0.097 & FAIL \\
POPE & Alignment-ruined & +0.000 & +0.000 & +0.000 & FAIL \\
VizWiz & Visual-ruined & -0.061 & +0.002 & -0.175 & FAIL \\
VizWiz & Language-ruined & -0.072 & -0.020 & +0.255 & FAIL \\
VizWiz & Alignment-ruined & -0.000 & +0.000 & -0.000 & FAIL \\
VSR & Visual-ruined & -0.153 & -0.002 & -0.093 & PASS \\
VSR & Language-ruined & -0.146 & -0.001 & +0.004 & FAIL \\
VSR & Alignment-ruined & -0.023 & -0.000 & +0.008 & FAIL \\
\bottomrule
\end{tabular}
\caption{\textbf{Interventional Entanglement $\Delta$ for LLaVA-Mistral}. A check passes when the intended source coordinate has the largest absolute change under the corresponding ruined condition. Results are computed using unrounded $\Delta$.}
\label{tab:app_identifiability_llava_mistral}
\end{table*}

\begin{table*}[t]
\centering
\small
\begin{tabular}{llrrrl}
\toprule
\textbf{Dataset} & \textbf{Intervention} & \textbf{$\Delta V$} & \textbf{$\Delta L$} & \textbf{$\Delta A$} & \textbf{Outcome} \\
\midrule
HallusionBench & Visual-ruined & -0.020 & +0.038 & +0.022 & FAIL \\
HallusionBench & Language-ruined & -0.053 & +0.037 & -0.024 & FAIL \\
HallusionBench & Alignment-ruined & +0.002 & -0.001 & -0.002 & PASS \\
POPE & Visual-ruined & -0.029 & +0.008 & +0.239 & FAIL \\
POPE & Language-ruined & -0.101 & -0.004 & -0.018 & FAIL \\
POPE & Alignment-ruined & +0.000 & +0.000 & +0.000 & FAIL \\
VizWiz & Visual-ruined & -0.121 & +0.016 & -0.025 & PASS \\
VizWiz & Language-ruined & -0.061 & +0.019 & +0.097 & FAIL \\
VizWiz & Alignment-ruined & +0.000 & -0.000 & -0.000 & FAIL \\
VSR & Visual-ruined & -0.025 & +0.074 & -0.062 & FAIL \\
VSR & Language-ruined & -0.024 & +0.074 & +0.197 & FAIL \\
VSR & Alignment-ruined & +0.001 & +0.007 & -0.001 & FAIL \\
\bottomrule
\end{tabular}
\caption{\textbf{Interventional Entanglement $\Delta$ for LLaVA-Vicuna}. A check passes when the intended source coordinate has the largest absolute change under the corresponding ruined condition. Results are computed using unrounded $\Delta$.}
\label{tab:app_identifiability_llava_vicuna}
\end{table*}

\begin{table*}[t]
\centering
\small
\begin{tabular}{llrrrl}
\toprule
\textbf{Dataset} & \textbf{Intervention} & \textbf{$\Delta V$} & \textbf{$\Delta L$} & \textbf{$\Delta A$} & \textbf{Outcome} \\
\midrule
HallusionBench & Visual-ruined & -0.085 & +0.011 & -0.009 & PASS \\
HallusionBench & Language-ruined & -0.047 & -0.086 & +0.175 & FAIL \\
HallusionBench & Alignment-ruined & -0.001 & +0.000 & +0.002 & PASS \\
POPE & Visual-ruined & -0.049 & +0.000 & +0.024 & PASS \\
POPE & Language-ruined & +0.003 & -0.149 & +0.293 & FAIL \\
POPE & Alignment-ruined & +0.000 & +0.000 & +0.000 & FAIL \\
VizWiz & Visual-ruined & -0.114 & -0.000 & -0.035 & PASS \\
VizWiz & Language-ruined & -0.114 & -0.006 & +0.052 & FAIL \\
VizWiz & Alignment-ruined & -0.000 & +0.000 & +0.000 & PASS \\
VSR & Visual-ruined & -0.126 & -0.000 & +0.049 & PASS \\
VSR & Language-ruined & +0.081 & -0.128 & -0.000 & PASS \\
VSR & Alignment-ruined & -0.020 & +0.000 & -0.003 & FAIL \\
\bottomrule
\end{tabular}
\caption{\textbf{Interventional Entanglement $\Delta$ for Qwen3-VL}. A check passes when the intended source coordinate has the largest absolute change under the corresponding ruined condition. Results are computed using unrounded $\Delta$.}
\label{tab:app_identifiability_qwen_vl}
\end{table*}

\subsection{Interventional Entanglement Across Models}
\label{app:interventional_entanglement}

We next test whether targeted interventions isolate the intended source score. This test is stricter than source profiling. If the three scores behaved as independent source components, visual-ruined inputs would produce the largest absolute change in $V$, language-ruined inputs would produce the largest absolute change in $L$, and alignment-ruined inputs would produce the largest absolute change in $A$. We observe from Table~\ref{tab:app_identifiability_pass_counts} that this magnitude-based diagonal-dominance pattern holds only partially. Across all model-dataset-intervention checks, $18$ of $48$ pass. Visual-ruined inputs pass in $11$ of $16$ settings, while language-ruined inputs pass in $2$ of $16$ and alignment-ruined inputs pass in $5$ of $16$. This supports a more precise interpretation of $(V,L,A)$. Visual ruin is often source-selective in magnitude, although the direction is not always positive. Language and alignment interventions remain much less source-selective. Therefore, the vector should be read as a joint behavioral signature rather than as a clean causal decomposition.

The model-specific tables show how off-target responses arise across backbones. We observe from Table~\ref{tab:app_identifiability_gemma_vl} that Gemma 4 satisfies magnitude-based dominance for visual-ruined inputs in $3$ of $4$ settings, but the signed response is not always intuitive. On VizWiz, visual ruin reduces $V$ by $0.198$ rather than increasing it. On HallusionBench, visual ruin also increases $A$ by $0.126$, which shows that visual corruption can change alignment behavior. These rows separate two effects: the intended coordinate may change most in magnitude, while the sign and off-target movement still reveal non-monotone source behavior.

For LLaVA-Mistral, we observe from Table~\ref{tab:app_identifiability_llava_mistral} that visual-ruined inputs pass in $3$ of $4$ settings, while language-ruined inputs pass in none. VizWiz language ruin increases $A$ by $0.255$, and VSR visual ruin changes both $V$ and $A$ in the negative direction. These responses indicate that entanglement is not limited to one model family. They also show why magnitude and signed direction should be interpreted separately.

For LLaVA-Vicuna, we observe from Table~\ref{tab:app_identifiability_llava_vicuna} that only $2$ of $12$ checks pass overall. HallusionBench alignment ruin passes, while most other checks fail the magnitude-based criterion. POPE visual ruin increases $A$ by $0.239$, even though the intended target is visual. VSR language ruin also increases $A$ by $0.197$. These rows reinforce the joint-signature interpretation because off-target alignment responses can dominate or substantially accompany the intended response.

For Qwen3-VL, we observe from Table~\ref{tab:app_identifiability_qwen_vl} that $7$ of $12$ checks pass, including all four visual-ruined settings. However, the signed deltas still show non-monotone behavior. On HallusionBench, visual ruin reduces $V$ by $0.085$ rather than increasing it, while language ruin reduces $L$ by $0.086$ and increases $A$ by $0.175$. On POPE, language ruin reduces $L$ by $0.149$ and increases $A$ by $0.293$. These cases show why independent source attribution is too strong. The vector records how the answer changes under controlled pressure, and a ruined condition can shift the model toward a different answering route.

\subsection{Perturbation-Response Signatures By Failure Family}
\label{app:source_signatures_by_failure}

We connect Perturbation-Response Signatures to failure families before turning to predictive diagnosis. Table~\ref{tab:app_family_source_means} reports pooled source means by raw failure-family label. Object hallucination is associated with very high image-removal confidence retention, with $\bar{L}=0.996$, and a high scalar score of $0.967$. Answerability cases also have near-saturated image-removal confidence-retention scores, with $\bar{L}=0.997$, although their accuracy is lower at $77.4\%$. The no-annotated-family row has the highest mean visual and alignment values, with $\bar{V}=0.289$ and $\bar{A}=0.587$, because this row is heavily shaped by answerable VizWiz instances. This row should not be read as the correctness-conditioned no-failure target used in the outcome-family controls. These values explain why single-coordinate interpretations are insufficient. Failure families do not map cleanly onto one isolated source. Instead, they occupy different regions of the joint Perturbation-Response space.

\begin{table*}[t]

\centering
\small
\begin{tabular}{lrrrrrr}
\toprule
\textbf{Failure Family} & \textbf{$N$} & \textbf{$\bar{V}$} & \textbf{$\bar{L}$} & \textbf{$\bar{A}$} & \textbf{Scalar} & \textbf{Acc} \\
\midrule
Answerability & 5540 & 0.124 & 0.997 & 0.479 & 0.880 & 77.4\% \\
Attribute & 1440 & 0.099 & 0.946 & 0.302 & 0.922 & 64.3\% \\
None & 11736 & 0.289 & 0.988 & 0.587 & 0.836 & 45.9\% \\
Object Hallucination & 38364 & 0.107 & 0.996 & 0.223 & 0.967 & 86.5\% \\
Spatial & 1360 & 0.120 & 0.977 & 0.373 & 0.919 & 70.2\% \\
\bottomrule
\end{tabular}
\caption{Pooled source-signature means by failure family.}
\label{tab:app_family_source_means}
\end{table*}

\subsection{Full Failure-Family Diagnosis}
\label{app:full_failure_family_diagnosis}

We report model-wise failure-family AUROC in Table~\ref{tab:app_failure_diagnosis_full}. The table compares scalar confidence, the Perturbation-Response Signature $(V,L,A)$, and the combined representation $(V,L,A,c)$ across classifier families. We report tree-based feature importance in Table~\ref{tab:app_feature_importance_full}. These results provide the full supporting evidence for the main diagnosis claim summarized in Section~\ref{sec:results}.

\begin{table*}[t]
\centering
\small
\begin{tabular}{llrrrr}
\toprule
\textbf{Model} & \textbf{Classifier} & \textbf{Scalar} & \textbf{V+L+A} & \textbf{V+L+A+Scalar} & \textbf{$\Delta_{VLA-S}$} \\
\midrule
Gemma 4 & Logistic Reg. & 0.754$\pm$0.009 & 0.921$\pm$0.006 & 0.932$\pm$0.006 & +0.167 \\
Gemma 4 & XGBoost & 0.834$\pm$0.013 & 0.965$\pm$0.001 & 0.974$\pm$0.002 & +0.131 \\
Gemma 4 & LightGBM & 0.836$\pm$0.012 & 0.965$\pm$0.001 & 0.973$\pm$0.002 & +0.130 \\
LLaVA-Mistral & Logistic Reg. & 0.847$\pm$0.003 & 0.892$\pm$0.003 & 0.907$\pm$0.003 & +0.045 \\
LLaVA-Mistral & XGBoost & 0.860$\pm$0.003 & 0.973$\pm$0.002 & 0.979$\pm$0.002 & +0.113 \\
LLaVA-Mistral & LightGBM & 0.859$\pm$0.004 & 0.973$\pm$0.001 & 0.980$\pm$0.002 & +0.115 \\
LLaVA-Vicuna & Logistic Reg. & 0.868$\pm$0.005 & 0.876$\pm$0.005 & 0.941$\pm$0.004 & +0.009 \\
LLaVA-Vicuna & XGBoost & 0.873$\pm$0.004 & 0.960$\pm$0.001 & 0.981$\pm$0.001 & +0.087 \\
LLaVA-Vicuna & LightGBM & 0.872$\pm$0.004 & 0.960$\pm$0.002 & 0.981$\pm$0.000 & +0.087 \\
Qwen3-VL & Logistic Reg. & 0.730$\pm$0.006 & 0.756$\pm$0.005 & 0.757$\pm$0.006 & +0.027 \\
Qwen3-VL & XGBoost & 0.806$\pm$0.007 & 0.972$\pm$0.002 & 0.976$\pm$0.002 & +0.166 \\
Qwen3-VL & LightGBM & 0.836$\pm$0.005 & 0.972$\pm$0.002 & 0.976$\pm$0.002 & +0.137 \\
Pooled & Logistic Reg. & 0.752$\pm$0.002 & 0.828$\pm$0.003 & 0.873$\pm$0.002 & +0.077 \\
Pooled & XGBoost & 0.781$\pm$0.002 & 0.954$\pm$0.002 & 0.971$\pm$0.001 & +0.173 \\
Pooled & LightGBM & 0.780$\pm$0.002 & 0.953$\pm$0.002 & 0.968$\pm$0.002 & +0.172 \\
\bottomrule
\end{tabular}
\caption{Failure-family prediction AUROC by model and classifier. $\Delta_{VLA-S}$ denotes the gain of $V+L+A$ over scalar confidence.}
\label{tab:app_failure_diagnosis_full}
\end{table*}

\begin{table*}[t]

\centering
\small
\begin{tabular}{llrrrc}
\toprule
\textbf{Model} & \textbf{Classifier} & \textbf{$V$} & \textbf{$L$} & \textbf{$A$} & \textbf{Scalar Confidence} \\
\midrule
Gemma 4 & XGBoost & 0.105 & 0.305 & \textbf{0.505} & 0.085 \\
Gemma 4 & LightGBM & 0.269 & 0.153 & \textbf{0.292} & 0.286 \\
LLaVA-Mistral & XGBoost & 0.084 & 0.102 & \textbf{0.688} & 0.125 \\
LLaVA-Mistral & LightGBM & 0.294 & 0.099 & \textbf{0.331} & 0.276 \\
LLaVA-Vicuna & XGBoost & 0.107 & 0.178 & \textbf{0.503} & 0.212 \\
LLaVA-Vicuna & LightGBM & 0.211 & 0.173 & \textbf{0.317} & 0.299 \\
Qwen3-VL & XGBoost & 0.093 & 0.274 & \textbf{0.580} & 0.054 \\
Qwen3-VL & LightGBM & 0.257 & 0.153 & \textbf{0.329} & 0.260 \\
Pooled & XGBoost & 0.147 & 0.200 & \textbf{0.504} & 0.150 \\
Pooled & LightGBM & 0.222 & 0.146 & \textbf{0.360} & 0.271 \\
\bottomrule
\end{tabular}
\caption{Feature importance for tree-based failure-family diagnosis.}
\label{tab:app_feature_importance_full}
\end{table*}

\subsection{Dataset-Identity And Within-Dataset Controls}
\label{app:dataset_identity_controls}

The evaluation suite is deliberately heterogeneous: POPE primarily stresses object hallucination, VSR stresses spatial and relation errors, VizWiz stresses answerability, and HallusionBench contributes mixed hallucination cases. This design supports the diagnosis-first claim, although it also means that pooled failure-family prediction can partly reflect benchmark identity. We therefore use dataset-identity, within-dataset, leave-dataset-out, and matched source-stress controls to test whether Perturbation-Response Signatures add information beyond coarse dataset and model identifiers.

We observe from Table~\ref{tab:app_dataset_controlled_raw_family} that dataset and model identifiers already give a strong raw-family baseline. Dataset ID alone reaches AUROC $0.976$ and Macro-F1 $0.558$, while adding scalar confidence raises the result to AUROC $0.986$ and Macro-F1 $0.705$. Adding $V+L+A$ to dataset and model identifiers improves AUROC to $0.991$ and Macro-F1 to $0.783$. The AUROC gain is small because the controlled baseline is already high, while the Macro-F1 gain indicates better handling of less dominant failure labels.

\begin{table*}[t]

\centering
\small
\begin{tabular}{lcc}
\toprule
\textbf{Feature Set} & \textbf{AUROC} & \textbf{Macro-F1} \\
\midrule
Dataset ID & 0.976 & 0.558 \\
Dataset ID + Model ID & 0.975 & 0.558 \\
Dataset ID + Model ID + Scalar & 0.986 & 0.705 \\
Dataset ID + Model ID + $V+L+A$ & \textbf{0.991} & \textbf{0.783} \\
Dataset ID + Model ID + $V+L+A+\mathrm{Scalar}$ & \textbf{0.991} & \textit{0.782} \\
\bottomrule
\end{tabular}
\caption{Dataset-ID controlled failure-family diagnosis using a stratified held-out split. The target is the benchmark-provided failure-family label used in the pooled diagnosis experiment.}
\label{tab:app_dataset_controlled_raw_family}
\end{table*}

We observe from Table~\ref{tab:app_dataset_controlled_outcome_family} that the stricter outcome-family control gives a smaller but positive gain. Here correct predictions are mapped to \textit{no failure}, so the classifier must separate both correctness and error family. Dataset ID and model ID remain weak for Macro-F1, both at $0.145$. Adding scalar confidence raises Macro-F1 to $0.308$, and the combined representation reaches the best result with AUROC $0.853$ and Macro-F1 $0.348$.

\begin{table}
\centering
\setlength{\tabcolsep}{3pt}
\scalebox{1}{
\small
\begin{tabular}{p{0.25\textwidth}cc}
\toprule
\textbf{Feature Set} & \textbf{AUROC} & \textbf{Macro-F1} \\
\midrule
Dataset ID & 0.735 & 0.145 \\
Dataset ID + Model ID & 0.743 & 0.145 \\
Dataset ID + Model ID + Scalar & 0.847 & 0.308 \\
Dataset ID + Model ID + $V+L+A$ & 0.838 & 0.308 \\
Dataset ID + Model ID + $V+L+A+\mathrm{Scalar}$ & \textbf{0.853} & \textbf{0.348} \\
\bottomrule
\end{tabular}
}
\caption{Dataset-ID controlled outcome-family diagnosis using a stratified held-out split. Correct predictions are mapped to no-failure and incorrect predictions to their observed failure family.}
\label{tab:app_dataset_controlled_outcome_family}
\end{table}
\begin{table*}[t]
\centering
\small
\begin{tabular}{llclcc}
\toprule
\textbf{Dataset} & \textbf{Target} & \textbf{N} & \textbf{Feature Set} & \textbf{AUROC} & \textbf{F1} \\
\midrule
\multirow{3}{*}{HallusionBench} & \multirow{3}{*}{Attribute vs object} &  \multirow{3}{*}{3804} & Scalar & 0.634 & 0.621 \\
 &  &  & $V+L+A$ & 0.769 & 0.718 \\
 &  &  & $V+L+A+\mathrm{Scalar}$ & \textbf{0.801} & \textbf{0.739} \\
\midrule
\multirow{3}{*}{VizWiz} & \multirow{3}{*}{Answerability vs answerable} & \multirow{3}{*}{17276} & Scalar & 0.707 & 0.693 \\
 &  &  & $V+L+A$ & 0.817 & 0.782 \\
 &  &  & $V+L+A+\mathrm{Scalar}$ & \textbf{0.825} & \textbf{0.784} \\
\midrule
\multirow{3}{*}{POPE} & \multirow{3}{*}{Object error vs no failure} & \multirow{3}{*}{36000} & Scalar & 0.823 & 0.839 \\
 &  &  & $V+L+A$ & 0.839 & 0.853 \\
 &  &  & $V+L+A+\mathrm{Scalar}$ & \textbf{0.860} & \textbf{0.857} \\
\midrule
\multirow{3}{*}{VSR} & \multirow{3}{*}{Spatial error vs no failure} & \multirow{3}{*}{1360} & Scalar & 0.722 & 0.688 \\
 &  &  & $V+L+A$ & 0.724 & 0.701 \\
 &  &  & $V+L+A+\mathrm{Scalar}$ & \textbf{0.735} & \textbf{0.709} \\
\bottomrule
\end{tabular}
\caption{Within-dataset diagnostic controls. Each row fixes the dataset and includes model identity in all feature sets. Improvements from $V+L+A$ therefore cannot be attributed to dataset identity alone.}
\label{tab:app_within_dataset_diagnosis}
\end{table*}


We see in Table~\ref{tab:app_within_dataset_diagnosis} that the clearest fixed-dataset gains appear on HallusionBench and VizWiz. On HallusionBench, scalar confidence reaches AUROC $0.634$ and F1 $0.621$, while $(V,L,A)$ reaches AUROC $0.769$ and F1 $0.718$. The combined representation reaches AUROC $0.801$ and F1 $0.739$. On VizWiz, scalar confidence reaches AUROC $0.707$ and F1 $0.693$, while $(V,L,A)$ reaches AUROC $0.817$ and F1 $0.782$. The combined representation reaches AUROC $0.825$ and F1 $0.784$.

POPE and VSR give more conservative fixed-dataset results. On POPE, scalar confidence reaches AUROC $0.823$, while $(V,L,A)$ reaches $0.839$; the combined representation reaches AUROC $0.860$. On VSR, scalar confidence and $(V,L,A)$ are nearly matched, with AUROC $0.722$ and $0.724$, respectively; the combined representation reaches AUROC $0.735$. These rows bound the diagnosis claim: Perturbation-Response Signatures help most when the target contains multiple failure modes or answerability-sensitive distinctions.

\begin{table*}[t]
\centering
\small
\begin{tabular}{lccc}
\toprule
\textbf{Held-Out Dataset} & \textbf{Model ID + Scalar} & $\mathbf{Model ID + V+L+A}$ & $\mathbf{Model ID + V+L+A+Scalar}$ \\
\midrule
HallusionBench & 0.339 & \textbf{0.492} & 0.371 \\
POPE & 0.201 & \textbf{0.375} & 0.229 \\
VSR & 0.290 & \textbf{0.416} & 0.274 \\
VizWiz & 0.300 & 0.328 & \textbf{0.331} \\
\bottomrule
\end{tabular}
\caption{Leave-dataset-out binary failure detection. This is a stress test only, because multiclass failure-family support differs across datasets.}
\label{tab:app_leave_dataset_out_binary}
\end{table*}
\begin{table*}[t]
\centering
\setlength{\tabcolsep}{3pt}
\small
\begin{tabular}{lllrccccc}
\toprule
\textbf{Dataset} & \textbf{Slice} & \textbf{Expected Source} & \textbf{N} & \textbf{Acc} & $\bar{V}$ & $\bar{L}$ & $\bar{A}$ & \textbf{LP Match} \\
\midrule
POPE & adversarial & $L$ & 12000 & 86.5\% & 0.119 & 0.998 & 0.219 & 53.8\% \\
POPE & popular & $L$ & 12000 & 88.3\% & 0.108 & 0.998 & 0.209 & 55.6\% \\
POPE & random & $L$ & 12000 & 90.3\% & 0.094 & 0.997 & 0.222 & 57.5\% \\
VizWiz & unanswerable & $V$/answerability & 5540 & 77.4\% & 0.124 & 0.997 & 0.479 & 77.9\% \\
VizWiz & answerable & $V$/answerability & 11736 & 45.9\% & 0.289 & 0.988 & 0.587 & 22.4\% \\
HallusionBench & not visual-required & $V/A$ & 2016 & 56.3\% & 0.110 & 0.961 & 0.318 & 61.3\% \\
HallusionBench & visual-required & $V/A$ & 1788 & 65.8\% & 0.102 & 0.961 & 0.317 & 61.2\% \\
VSR & all relations & $A$ & 1360 & 70.2\% & 0.120 & 0.977 & 0.373 & 65.8\% \\
VSR & relation=at the right side of & $A$ & 76 & 65.8\% & 0.143 & 0.982 & 0.345 & 59.2\% \\
VSR & relation=behind & $A$ & 120 & 71.7\% & 0.110 & 0.980 & 0.368 & 66.7\% \\
VSR & relation=facing & $A$ & 72 & 56.9\% & 0.133 & 0.984 & 0.369 & 70.8\% \\
VSR & relation=in front of & $A$ & 104 & 61.5\% & 0.130 & 0.979 & 0.382 & 65.4\% \\
VSR & relation=on & $A$ & 96 & 83.3\% & 0.118 & 0.967 & 0.380 & 58.3\% \\
VSR & relation=on top of & $A$ & 72 & 73.6\% & 0.131 & 0.978 & 0.369 & 62.5\% \\
VSR & relation=touching & $A$ & 80 & 71.2\% & 0.116 & 0.982 & 0.382 & 68.8\% \\
VSR & relation=under & $A$ & 128 & 71.9\% & 0.099 & 0.972 & 0.385 & 75.0\% \\
\bottomrule
\end{tabular}
\caption{Matched source-stress slices from existing metadata. These slices check whether source signatures align with dataset-specific stress factors under fixed task conditions.}
\label{tab:app_source_stress_slices}
\end{table*}

We include leave-dataset-out binary transfer as a stress test in Table~\ref{tab:app_leave_dataset_out_binary}. The results are weak overall, which is expected because the datasets differ in task structure and failure-label support. Holding out POPE gives AUROC $0.201$ with model ID and scalar confidence, $0.375$ with model ID and $V+L+A$, and $0.229$ with the combined representation. Holding out HallusionBench gives $0.339$, $0.492$, and $0.371$, respectively. We therefore use this table as a transfer stress test, not as the main diagnosis evidence.
We find from Table~\ref{tab:app_source_stress_slices} that matched source-stress slices align with the same interpretation. In POPE, image-removal confidence-retention scores remain near saturation across adversarial, popular, and random splits, with $\bar{L}$ between $0.997$ and $0.998$, while accuracy drops from $90.3\%$ on random to $86.5\%$ on adversarial. In VizWiz, answerable examples have higher visual and alignment scores than unanswerable examples, with $\bar{V}=0.289$ and $\bar{A}=0.587$ versus $\bar{V}=0.124$ and $\bar{A}=0.479$. VSR relation slices show stable high image-removal confidence-retention scores and moderate alignment scores across relations. These slices provide interpretive support for reading the full $(V,L,A)$ vector rather than a single coordinate.





\subsection{Human Audit Results}
\label{app:human_audit_results}

We report the completed human-audit protocol in Table~\ref{tab:app_human_audit_template}. The audit follows the protocol in Appendix~\ref{app:human_audit}. We use $200$ examples, with $50$ examples each from HallusionBench, POPE, VSR, and VizWiz. Three annotators assign one of six labels: visual evidence insufficiency, language-prior reliance, alignment error, mixed, unclear, or no failure. Annotators see the image, question or caption, reference answer, model answer, and correctness flag. They do not see $V$, $L$, $A$, scalar confidence, raw dominant source, normalized dominant source, or routing decision. We manually adjudicate $36$ disagreement cases before computing the final audit labels.

\begin{table*}[t]
\centering
\setlength{\tabcolsep}{3pt}
\scalebox{1}{
\small
\begin{tabular}{lrrrr}
\toprule
\textbf{Dataset} & \textbf{$N$} & \textbf{Raw Agreement} & \textbf{Fleiss's $\kappa$} & \textbf{Model-Human Agreement} \\
\midrule
HallusionBench & 50 & 84.0\% & 0.290 & 45.5\% \\
POPE & 50 & 84.0\% & 0.201 & 22.2\% \\
VizWiz & 50 & 62.0\% & 0.086 & 37.5\% \\
VSR & 50 & 98.0\% & 0.221 & 37.1\% \\
\midrule
All & 200 & 82.0\% & 0.236 & 36.3\% \\
\bottomrule
\end{tabular}
}
\caption{Human-audit reporting template. Annotators label the dominant failure source as visual, language-prior, alignment, mixed, unclear, or no-failure. We report agreement (Fleiss's $\kappa$) before adjudication and model-human agreement after adjudication.}
\label{tab:app_human_audit_template}
\end{table*}

We report annotator agreement and adjudicated label counts in Table~\ref{tab:app_human_audit_agreement}. We read the audit as bounded semantic support for the Perturbation-Response view. Full-label Fleiss' $\kappa$ is $0.236$, and source versus non-source Fleiss' $\kappa$ is $0.480$. Exact all-annotator agreement is $20.5\%$, while at least two annotators agree in $82.0\%$ of examples. These values indicate fair full-label agreement and stronger agreement when the task is reduced to source versus non-source attribution. This behavior is expected under the paper's interpretation, since visual evidence, language priors, and cross-modal binding can interact in the same example.

\begin{table}[t]
\centering
\small
\begin{tabular}{lc}
\toprule
\textbf{Quantity} & \textbf{Value} \\
\midrule
Examples & 200 \\
Annotators & 3 \\
Adjudicated disagreement cases & 36 \\
Exact all-annotator agreement & 20.5\% \\
At least two annotators agree & 82.0\% \\
Full-label Fleiss' $\kappa$ & 0.236 \\
Source vs non-source Fleiss' $\kappa$ & 0.480 \\
\midrule
Alignment & 71 \\
No failure & 4 \\
Unclear & 87 \\
Language-prior & 20 \\
Visual & 16 \\
Mixed & 2 \\
\bottomrule
\end{tabular}
\caption{Human-audit agreement and adjudicated label distribution. The audit uses 200 examples, with 50 examples per dataset.}
\label{tab:app_human_audit_agreement}
\end{table}

After adjudication, $107$ examples receive a clear source label: $71$ alignment, $20$ language-prior, and $16$ visual. The remaining cases are $87$ unclear and $4$ no-failure, and $2$ mixed cases after adjudication. Among the $140$ originally incorrect examples, the adjudicated labels are $71$ alignment, $32$ unclear, $19$ language-prior, $16$ visual, and $2$ mixed. This distribution differs sharply from the raw largest-coordinate count, which is almost always $L$-dominant. It instead matches the main diagnostic pattern in the paper: image-removal confidence retention is the most prevalent measured signal, while alignment is more useful for separating failure behavior.

We compare the adjudicated human label with raw and normalized dominant sources only on clear-source cases. Mixed, unclear, and no-failure cases are reported separately rather than forced into one of the three source coordinates. We observe from Table~\ref{tab:app_human_audit_confusion} that raw dominant source has low agreement with human clear-source labels because it collapses to language-prior. Raw dominance reaches clear-source accuracy $0.165$, Macro-F1 $0.094$, and Cohen's $\kappa=0.000$. Normalized dominance improves over raw dominance, with clear-source accuracy $0.363$, Macro-F1 $0.307$, and Cohen's $\kappa=0.053$, although the agreement remains limited. This supports an important distinction. Raw $\arg\max(V,L,A)$ is useful for measuring prevalence, not for assigning a human-like source label.

\begin{table}[t]
\centering
\setlength{\tabcolsep}{3pt}
\small
\begin{tabular}{lrrr}
\toprule
\textbf{Human Label} & \textbf{Predicted $V$} & \textbf{Predicted $L$} & \textbf{Predicted $A$} \\
\midrule
Visual & 6 & 7 & 3 \\
Language-prior & 10 & 6 & 4 \\
Alignment & 18 & 26 & 27 \\
\bottomrule
\end{tabular}
\caption{Human-audit confusion-matrix template for clear-source examples.}
\label{tab:app_human_audit_confusion}
\end{table}

We also evaluate a small audit-side diagnostic classifier on the $107$ clear-source examples. We report this comparison in Table~\ref{tab:app_human_audit_classifier}. Scalar confidence reaches weighted AUROC $0.430$ for clear-source label prediction, while raw $(V,L,A)$ reaches $0.597$. Normalized $(z_V,z_L,z_A)$ reaches $0.556$, and raw $(V,L,A)$ with scalar confidence reaches $0.587$. The result is exploratory because the audit sample is small, although it is aligned with the main failure-family diagnosis result. The joint Perturbation-Response Signature carries more source-attribution signal than scalar confidence. At the same time, the audit does not validate $(V,L,A)$ as independent causal sources. It supports the restrained claim used throughout the paper: Perturbation-Response Signatures are behavioral diagnostic objects, many examples remain genuinely ambiguous, and argmax collapse loses information that is present in the full vector.

\begin{table*}[t]
\centering
\small
\begin{tabular}{lccc}
\toprule
\textbf{Feature Set} & \textbf{Macro-F1} & \textbf{Balanced Acc.} & \textbf{Weighted AUROC} \\
\midrule
Scalar confidence & 0.146 & 0.229 & 0.430 \\
Raw $V,L,A$ & \textbf{0.328} & \textbf{0.330} & \textbf{0.597} \\
Normalized $z_V,z_L,z_A$ & 0.269 & 0.310 & 0.556 \\
Raw $V,L,A$ + scalar & \textit{0.322} & 0.313 & 0.587 \\
\bottomrule
\end{tabular}
\caption{Human audit-side clear-source prediction. The classifier is evaluated on the 107 examples adjudicated as visual, language-prior, or alignment.}
\label{tab:app_human_audit_classifier}
\end{table*}

\subsection{Abstention And Source-Aware Scalarization}
\label{app:abstention_additional}

We examine whether Perturbation-Response Signatures can be collapsed into scalar confidence scores. We observe from Table~\ref{tab:app_abstention_scalarization_variants} that scalar confidence remains the strongest correctness-ranking signal under the tested max-penalty, weighted, and geometric scalarizations. The pooled AUROC is $0.783$ for scalar confidence and $0.767$ for the best source-aware scalarization. The same pattern holds across model families, with the largest gap for Qwen3-VL, where AUROC drops from $0.809$ to $0.732$. This supports the scalarization boundary: the Perturbation-Response Signature can help diagnose failure family while still weakening abstention when compressed into a penalty-adjusted confidence score.

\begin{table*}[t]
\centering
\small
\begin{tabular}{lrrrrr}
\toprule
\textbf{Model} & \textbf{Scalar} & \textbf{Penalty} & \textbf{Weighted} & \textbf{Geometric} & \textbf{Best SA} \\
\midrule
Gemma 4 & 0.809 & 0.738 & \textit{0.773} & 0.741 & 0.773 \\
LLaVA-Mistral & 0.836 & 0.718 & 0.\textit{806} & 0.719 & 0.806 \\
LLaVA-Vicuna & 0.828 & 0.665 & \textit{0.778} & 0.678 & 0.778 \\
Qwen3-VL & 0.809 & 0.721 & \textit{0.732} & 0.721 & 0.732 \\
Pooled & 0.783 & 0.638 & \textit{0.767} & 0.640 & 0.767 \\
\bottomrule
\end{tabular}
\caption{Aggregate AUROC for all scalarization variants.}
\label{tab:app_abstention_scalarization_variants}
\end{table*}

We observe from Table~\ref{tab:app_abstention_per_dataset} that this pattern is not only a pooled artifact. Source-aware scalarization is usually worse than scalar confidence across model-dataset slices. The largest drop appears for Qwen3-VL on POPE, from AUROC $0.826$ to $0.702$. HallusionBench also shows clear drops for Gemma 4 and LLaVA-Mistral, from $0.690$ to $0.605$ and from $0.609$ to $0.552$, respectively. The main exception is LLaVA-Vicuna on VSR, where AUROC improves from $0.685$ to $0.702$. These exceptions are useful boundary cases, although the dominant pattern remains negative for naive scalarization.

\begin{table*}[t]
\centering
\setlength{\tabcolsep}{3pt}
\small
\begin{tabular}{llrrrrr}
\toprule
\textbf{Model} & \textbf{Dataset} & \textbf{Scalar AUROC} & \textbf{Best SA AUROC} & \textbf{$\Delta_{SA-S}$} & \textbf{Scalar Sel@60} & \textbf{Best SA Sel@60} \\
\midrule
Gemma 4 & HallusionBench & 0.690 & 0.605 (weighted) & -0.085 & 77.5\% & 73.3\% \\
Gemma 4 & POPE & 0.779 & 0.741 (penalty) & -0.038 & 94.7\% & 94.7\% \\
Gemma 4 & VizWiz & 0.734 & 0.718 (weighted) & -0.016 & 67.4\% & 67.7\% \\
Gemma 4 & VSR & 0.712 & 0.654 (penalty) & -0.058 & 81.4\% & 81.4\% \\
LLaVA-Mistral & HallusionBench & 0.609 & 0.552 (penalty) & -0.057 & 58.2\% & 54.9\% \\
LLaVA-Mistral & POPE & 0.835 & 0.807 (weighted) & -0.028 & 97.5\% & 95.9\% \\
LLaVA-Mistral & VizWiz & 0.720 & 0.666 (weighted) & -0.054 & 67.3\% & 64.5\% \\
LLaVA-Mistral & VSR & 0.685 & 0.680 (weighted) & -0.005 & 79.9\% & 80.4\% \\
LLaVA-Vicuna & HallusionBench & 0.578 & 0.552 (penalty) & -0.026 & 50.7\% & 48.9\% \\
LLaVA-Vicuna & POPE & 0.816 & 0.810 (weighted) & -0.006 & 97.1\% & 96.0\% \\
LLaVA-Vicuna & VizWiz & 0.722 & 0.683 (weighted) & -0.039 & 66.6\% & 64.2\% \\
LLaVA-Vicuna & VSR & 0.685 & 0.702 (penalty) & +0.017 & 60.8\% & 63.2\% \\
Qwen3-VL & HallusionBench & 0.675 & 0.610 (weighted) & -0.065 & 81.9\% & 80.0\% \\
Qwen3-VL & POPE & 0.826 & 0.702 (penalty) & -0.124 & 97.3\% & 93.7\% \\
Qwen3-VL & VizWiz & 0.706 & 0.682 (weighted) & -0.024 & 71.8\% & 71.2\% \\
Qwen3-VL & VSR & 0.802 & 0.751 (penalty) & -0.051 & 95.6\% & 95.1\% \\
\bottomrule
\end{tabular}
\caption{Per-dataset abstention comparison between scalar confidence and the best source-aware scalarization.}
\label{tab:app_abstention_per_dataset}
\end{table*}

We observe from Table~\ref{tab:app_risk_coverage_pooled} that scalar confidence is especially stronger in the low-coverage regime. At $5\%$ coverage, scalar confidence has risk $0.018$, compared with $0.023$ for weighted scalarization, $0.290$ for max-penalty, and $0.225$ for geometric scalarization. At $10\%$ coverage, scalar confidence has risk $0.033$, while max-penalty and geometric scalarization remain much higher at $0.194$ and $0.171$. These results locate the proper role of $(V,L,A)$: Perturbation-Response Signatures are better suited for failure typing and routing than for naive scalar confidence replacement.

\begin{table*}[t]
\centering
\small
\begin{tabular}{lrrrrr}
\toprule
\textbf{Coverage} & \textbf{Scalar Risk} & \textbf{Penalty Risk} & \textbf{Weighted Risk} & \textbf{Geometric Risk} & \textbf{Best SA Risk} \\
\midrule
5\% & 0.018 & 0.290 & 0.023 & 0.225 & 0.023 (Weighted) \\
10\% & 0.033 & 0.194 & 0.034 & 0.171 & 0.034 (Weighted) \\
20\% & 0.050 & 0.134 & 0.070 & 0.128 & 0.070 (Weighted) \\
40\% & 0.081 & 0.094 & 0.084 & 0.094 & 0.084 (Weighted) \\
60\% & 0.099 & 0.157 & 0.112 & 0.157 & 0.112 (Weighted) \\
80\% & 0.157 & 0.224 & 0.159 & 0.224 & 0.159 (Weighted) \\
100\% & 0.234 & 0.234 & 0.234 & 0.234 & 0.234 (Penalty) \\
\bottomrule
\end{tabular}
\caption{Pooled risk-coverage values for scalar confidence and source-aware scalarizations.}
\label{tab:app_risk_coverage_pooled}
\end{table*}

\subsection{Intervention Routing}
\label{app:intervention_routing}

We report the full intervention-routing results in Table~\ref{tab:app_intervention_full}. The comparison uses three conditions: (i) no intervention, (ii) a generic reasoning intervention applied to every example, and (iii) matched routing based on the normalized dominant Perturbation-Response Signature. The prompt templates and image-enhancement parameters are given in Appendix~\ref{app:intervention_routing_policies}. Here we focus only on routing outcomes. 
We observe from Table~\ref{tab:app_intervention_full} that matched routing is conditionally useful rather than a universal accuracy booster. In the pooled result, no intervention reaches $76.6\%$ accuracy, generic intervention reaches $75.7\%$, and matched routing reaches $75.9\%$. The aggregate accuracy difference is small. The more relevant pattern is break-rate control: generic intervention has a pooled break rate of $4.1\%$, while matched routing lowers it to $3.9\%$.
The clearest case appears for Qwen3-VL on VSR. Generic intervention reduces accuracy from $87.4\%$ to $76.2\%$ and breaks $15.5\%$ of originally correct predictions. Matched routing reaches $87.6\%$ accuracy and lowers break rate to $2.7\%$. Gemma 4 on HallusionBench shows the same direction, with matched routing reducing break rate from $15.1\%$ to $4.9\%$ while recovering the no-intervention accuracy level. On LLaVA-Mistral/VizWiz, both interventions reduce accuracy relative to no intervention, although matched routing still lowers break rate from $21.4\%$ to $17.4\%$. These cases show the intended use of routing: reducing harmful overcorrection when a generic prompt is disruptive.

We observe from Table~\ref{tab:app_intervention_win_counts} that matched routing beats generic intervention in $10/16$ model-dataset settings and reduces break rate in $10/16$ settings. It beats no intervention in $8/16$ settings. Thus, \HalluPrism{} should not be read as a general mitigation method. Its decision-facing value is conditional actionability: Perturbation-Response Signatures help decide when one-size-fits-all correction is risky and when a source-matched correction is safer.

\begin{table*}[t]
\centering
\setlength{\tabcolsep}{3pt} 
\small
\begin{tabular}{llrrrrrrrr}
\toprule
\textbf{Model} & \textbf{Dataset} & \textbf{$N$} & \textbf{No Interv.} & \textbf{Generic} & \textbf{Matched} & \textbf{Fix$_G$} & \textbf{Fix$_M$} & \textbf{Break$_G$} & \textbf{Break$_M$} \\
\midrule
Gemma 4 & HallusionBench & 951 & 68.2\% & 60.9\% & 68.9\% & 9.3\% & 12.6\% & 15.1\% & 4.9\% \\
Gemma 4 & POPE & 9000 & 86.9\% & 86.8\% & 86.3\% & 11.9\% & 10.2\% & 1.9\% & 2.3\% \\
Gemma 4 & VizWiz & 4319 & 54.0\% & 51.8\% & 52.2\% & 5.9\% & 7.8\% & 9.2\% & 9.9\% \\
Gemma 4 & VSR & 340 & 71.8\% & 72.4\% & 71.5\% & 22.9\% & 8.3\% & 8.2\% & 3.7\% \\
LLaVA-Mistral & HallusionBench & 951 & 52.6\% & 50.9\% & 53.8\% & 8.2\% & 12.0\% & 10.6\% & 8.4\% \\
LLaVA-Mistral & POPE & 9000 & 89.1\% & 88.8\% & 89.2\% & 6.8\% & 8.1\% & 1.2\% & 0.8\% \\
LLaVA-Mistral & VizWiz & 4319 & 54.9\% & 49.7\% & 51.8\% & 14.5\% & 14.3\% & 21.4\% & 17.4\% \\
LLaVA-Mistral & VSR & 340 & 70.3\% & 68.8\% & 70.6\% & 5.0\% & 5.9\% & 4.2\% & 2.1\% \\
LLaVA-Vicuna & HallusionBench & 951 & 47.7\% & 47.9\% & 50.4\% & 6.4\% & 13.3\% & 6.6\% & 9.0\% \\
LLaVA-Vicuna & POPE & 9000 & 88.6\% & 89.1\% & 89.0\% & 17.7\% & 16.0\% & 1.7\% & 1.6\% \\
LLaVA-Vicuna & VizWiz & 4319 & 53.9\% & 54.4\% & 53.3\% & 11.5\% & 9.1\% & 8.9\% & 8.8\% \\
LLaVA-Vicuna & VSR & 340 & 51.5\% & 52.1\% & 52.6\% & 1.2\% & 3.0\% & 0.0\% & 0.6\% \\
Qwen3-VL & HallusionBench & 951 & 74.4\% & 71.7\% & 74.3\% & 8.6\% & 16.5\% & 6.6\% & 5.8\% \\
Qwen3-VL & POPE & 9000 & 88.9\% & 88.5\% & 87.9\% & 6.7\% & 6.7\% & 1.3\% & 1.9\% \\
Qwen3-VL & VizWiz & 4319 & 61.3\% & 60.5\% & 59.2\% & 5.2\% & 6.0\% & 4.5\% & 7.3\% \\
Qwen3-VL & VSR & 340 & 87.4\% & 76.2\% & 87.6\% & 18.6\% & 20.9\% & 15.5\% & 2.7\% \\
Pooled & All & 58440 & 76.6\% & 75.7\% & 75.9\% & 9.7\% & 10.0\% & 4.1\% & 3.9\% \\
\bottomrule
\end{tabular}
\caption{Full intervention-routing results. Generic applies one prompt to all samples. Matched chooses an intervention from the dominant source signature.}
\label{tab:app_intervention_full}
\end{table*}

\begin{table*}[t]

\centering
\small
\begin{tabular}{lrrr}
\toprule
\textbf{Model} & \textbf{Matched $>$ Generic} & \textbf{Matched $>$ No Interv.} & \textbf{Lower Break Rate} \\
\midrule
Gemma 4 & 2/4 & 1/4 & 2/4 \\
LLaVA-Mistral & 4/4 & 3/4 & 4/4 \\
LLaVA-Vicuna & 2/4 & 3/4 & 2/4 \\
Qwen3-VL & 2/4 & 1/4 & 2/4 \\
Pooled & 10/16 & 8/16 & 10/16 \\
\bottomrule
\end{tabular}
\caption{Model-level routing win counts.}
\label{tab:app_intervention_win_counts}
\end{table*}

\paragraph{Additional Routing Control.}
We additionally compare the probe-informed policy with blanket generic CoT on $6{,}561$ outputs from four model--dataset settings. Relative to generic CoT, probe-informed routing improves pooled accuracy by $3.54$ percentage points (95\% CI $[2.82,\,4.25]$) and reduces break rate by $5.12$ points (95\% CI $[4.12,\,6.09]$). The policy improves accuracy relative to generic CoT in all four settings, although it is not universally better than no intervention; VizWiz is a clear boundary case. This control therefore supports only the narrow claim that probe-informed correction can be safer than applying the same generic reasoning prompt everywhere.

\section{Human Audit Protocol}
\label{app:human_audit}

We use a small human audit as a semantic plausibility check for the Perturbation-Response interpretation. The audit does not validate hidden causal mechanisms inside a Multimodal Large Language Model (MLLM). It only asks whether a human reader, given the image, question, reference answer, model answer, and correctness flag, can assign a plausible dominant failure source that agrees with the normalized Perturbation-Response Signature. The audit is therefore aligned with the main paper's diagnosis-first claim: Perturbation-Response Signatures are behavioral diagnostic objects, not ground-truth causes.

\subsection{Audit Objective}
\label{app:human_audit_objective}

The audit checks whether the source labels used by \HalluPrism{} are interpretable at the example level. In particular, we ask whether model outputs with a visually dominant, $L$-dominant, or alignment-dominant normalized signature correspond to human-plausible explanations of the observed prediction behavior. We do not use the audit to train the diagnostic classifier, tune source scores, select routing prompts, or choose scalarization weights. All main quantitative results are computed without human audit feedback.

\subsection{Sampling Strategy}
\label{app:human_audit_sampling}

We sample examples from the same evaluated model-dataset outputs used in the main experiments. The audit sample contains $200$ examples, with $50$ examples from each benchmark: HallusionBench, POPE, VizWiz-VQA, and Visual Spatial Reasoning (VSR). Within each dataset, we stratify by normalized dominant source $s^{*}_{\mathrm{norm}}(x,M)$ whenever possible. This choice is necessary because raw dominance is strongly affected by image-removal confidence-retention saturation, as shown in Appendix~\ref{app:additional_results}. When a source stratum has fewer available examples, we include all available examples from that stratum and fill the remaining slots with examples sampled from the other normalized-source strata. We also maintain a mixture of correct and incorrect predictions, since the audit checks source plausibility rather than only error explanation.

\begin{table*}[t]
\centering
\small
\begin{tabular}{lll}
\toprule
\textbf{Dataset} & \textbf{Sample Size} & \textbf{Primary Stress Factor} \\
\midrule
HallusionBench & $50$ & Visual illusion, attribute error, and grounded hallucination \\
POPE & $50$ & Object-presence hallucination and object priors \\
VizWiz-VQA & $50$ & Visual insufficiency and answerability \\
VSR & $50$ & Spatial and relation binding \\
\midrule
Total & $200$ & Mixed multimodal failure sources \\
\bottomrule
\end{tabular}
\caption{Human-audit sampling plan. The audit uses normalized dominant source for stratification because raw dominance is heavily affected by image-removal confidence-retention saturation.}
\label{tab:human_audit_sampling_plan}
\end{table*}

\subsection{Annotator Recruitment And Payment}
\label{app:human_audit_recruitment_payment}

The human audit was conducted by student annotators recruited locally from undergraduate programs. The annotators were in the 18-22 age group and were studying Computer Science or Economics. We selected annotators with sufficient technical background to understand image-question answering tasks, model outputs, correctness flags, and the distinction between visual evidence, language-prior reliance, and alignment-related errors. No annotator was asked to provide personal, demographic, medical, behavioral, or otherwise sensitive information beyond the minimal background needed to report the audit setting.
Annotators were not compensated in cash. The compensation arrangement was communicated before participation. The audit involved semantic labeling of benchmark examples and model outputs rather than user-facing data collection or evaluation of the annotators themselves. Annotators were not shown scalar confidence, source scores $(V,L,A)$, raw or normalized dominant sources, routing decisions, or model-derived explanations. This separation reduces score-induced bias and keeps the audit focused on visible example-level evidence.

\subsection{Annotator View}
\label{app:human_audit_view}

Annotators see only information that would be available in a semantic error inspection setting. They are not shown scalar confidence, source scores $(V,L,A)$, raw dominant source, normalized dominant source, routing decision, or any model-derived explanation. This prevents the audit from simply reproducing the model-derived signature.

\begin{table*}[t]
\centering
\small
\begin{tabular}{lll}
\toprule
\textbf{Category} & \textbf{Shown To Annotators} & \textbf{Hidden From Annotators} \\
\midrule
Input & Image and question or caption & Perturbed images and probe outputs \\
Output & Model answer & Scalar confidence $c$ \\
Reference & Gold answer or answerability label & Source scores $(V,L,A)$ \\
Outcome & Correctness flag & Raw and normalized dominant source \\
Metadata & Dataset name & Routing decision and intervention result \\
\bottomrule
\end{tabular}
\caption{Information shown to annotators and information hidden during the audit.}
\label{tab:human_audit_visible_hidden}
\end{table*}

\subsection{Annotation Labels}
\label{app:human_audit_labels}

Each annotator assigns one label from six categories. The first three labels correspond to the Perturbation-Response coordinates. Two labels preserve ambiguity rather than forcing a clean source assignment, while a final label handles cases with no visible source-sensitive risk. The correctness flag is shown to annotators only as context, while the annotation target remains the plausible source of risk or source-sensitive behavior in the example.

\begin{table*}[t]
\centering
\setlength{\tabcolsep}{3.5pt} 
\scalebox{1}{
\small
\begin{tabular}{p{0.1\linewidth}p{0.4\linewidth}p{0.4\linewidth}}
\toprule
\textbf{Label} & \textbf{When To Use} & \textbf{Example Pattern} \\
\midrule
Visual & The image evidence appears weak, degraded, incomplete, too small, occluded, blurry, dark, or otherwise insufficient for the answer. & The model answers an object question, although the relevant object is too blurred or partially outside the frame. \\
\hline
Language-prior & The answer appears plausible from the question alone, common object priors, or default world knowledge, even when the image evidence is weak or absent. & The model answers ``yes'' to a common object-presence question where the image does not clearly support the object. \\
\hline
Alignment & The relevant entities are visible, while the answer appears to fail because of relation, attribute, grounding, or image-text binding. & The model detects a cone and a car, although it reverses the left-right relation between them. \\
\hline
Mixed & More than one source is plausibly active, and no single source is clearly dominant. & The image is blurry and the question also invites a common prior; either source could plausibly explain the answer. \\
\hline
Unclear & The example does not provide enough information for a reliable judgment, or the provided gold reference answer is factually incorrect. & The image is ambiguous, or the gold answer states there is a TV when the image clearly only contains a monitor. \\
\hline
No Failure & The model prediction accurately matches the correct gold answer, indicating no failure or hallucination occurred. & The model correctly answers the question based on clear visual evidence without error. \\

\bottomrule
\end{tabular}
}
\caption{Human-audit label rubric. The labels are diagnostic descriptions of the observed example, not causal claims about model internals.}
\label{tab:human_audit_rubric}
\end{table*}

\subsection{Annotation Instructions}
\label{app:human_audit_instructions}

Annotators are instructed to label the most plausible source of risk in the specific example. They should not infer a model's hidden reasoning process. They should also avoid using demographic, cultural, or identity-based assumptions not required by the task. For correct predictions, annotators assign a source label only when the prediction appears fragile or source-dependent from the visible evidence. If no source-sensitive risk is visible, they use \textit{Unclear}. This convention keeps the audit aligned with \HalluPrism{}: the Perturbation-Response Signature is defined for both correct and incorrect predictions, while the human label remains an example-level diagnostic judgment.
The instruction shown to annotators is:

\begin{quote}
\small
You are given an image, a question or caption, a reference answer, a model answer, and whether the model answer is correct. Your task is to judge the most plausible source of risk or source-sensitive behavior in this example. Use \textit{Visual} if the relevant visual evidence appears weak or insufficient. Use \textit{Language-prior} if the answer seems plausible from the question alone or from common world knowledge rather than the image. Use \textit{Alignment} if the image contains the relevant entities but the answer appears to fail due to relation, attribute, grounding, or image-text binding. Use \textit{Mixed} if more than one source is plausible. Use \textit{Unclear} if the source cannot be determined or if the gold reference answer is factually incorrect. Use \textit{No Failure} if the model answer is completely correct. Do not guess the model's internal reasoning. Label only the example-level evidence visible to you.
\end{quote}

\begin{table*}[t]
\centering
\small
\begin{tabular}{p{0.24\linewidth}p{0.68\linewidth}}
\toprule
\textbf{Metric} & \textbf{Meaning} \\
\midrule
Raw agreement & Fraction of examples with identical independent labels. \\
Fleiss's $\kappa$ & Chance-corrected agreement over the six-label set. \\
Clear-source coverage & Fraction of examples adjudicated as \textit{Visual}, \textit{Language-prior}, or \textit{Alignment}. \\
Model-human agreement & Agreement between $s^{*}_{\mathrm{norm}}(x,M)$ and mapped clear-source human labels. \\
Mixed rate & Fraction adjudicated as \textit{Mixed}. \\
Unclear rate & Fraction adjudicated as \textit{Unclear}. \\
Collapsed Fleiss's $\kappa$ & Chance-corrected agreement for source vs. non-source classification. \\
No Failure rate & Fraction adjudicated as \textit{No Failure}. \\

\bottomrule
\end{tabular}
\caption{Human-audit reporting fields. We report mixed and unclear cases separately rather than forcing them into one of the three source coordinates.}
\label{tab:human_audit_reporting_fields}
\end{table*}

\subsection{Agreement and Adjudication}
\label{app:human_audit_agreement}
Three annotators independently label each example. We compute raw agreement and Fleiss's $\kappa$ over the six-label set: \textit{Visual}, \textit{Language-prior}, \textit{Alignment}, \textit{Mixed}, \textit{Unclear}, and \textit{No Failure}. In addition to the full 6-label agreement, we compute a collapsed Fleiss's $\kappa$ that groups the three source labels (Visual, Language-prior, Alignment) against the non-source labels (Mixed, Unclear, No Failure). This isolates whether humans can reliably distinguish source-driven failures from generic ambiguity. Disagreements are adjudicated after the independent pass. During adjudication, annotators may revise a label only by referring to the visible evidence in the image, question, reference answer, model answer, and correctness flag. They do not see source scores during adjudication.
After adjudication, we compute model-human agreement between the adjudicated human label and the normalized dominant source $s^{*}_{\mathrm{norm}}(x,M)$. This comparison is restricted to clear-source cases labeled \textit{Visual}, \textit{Language-prior}, or \textit{Alignment}. Mixed and unclear cases are reported separately. 

\subsection{Reporting Metrics}
\label{app:human_audit_metrics}

We report eight audit quantities: (i) \textit{raw agreement}, the fraction of examples on which annotators assign the same label before adjudication; (ii) Fleiss's $\kappa$, the chance-corrected agreement over the six-label set; (iii) \textit{collapsed Fleiss's $\kappa$}, the chance-corrected agreement for source vs. non-source classification; (iv) \textit{clear-source coverage}, the fraction of adjudicated examples assigned to \textit{Visual}, \textit{Language-prior}, or \textit{Alignment}; (v) \textit{model-human agreement}, the agreement between the normalized dominant source and the adjudicated human label on clear-source examples; (vi) \textit{mixed rate}, the fraction of examples adjudicated as \textit{Mixed}; (vii) \textit{unclear rate}, the fraction adjudicated as \textit{Unclear}; and (viii) \textit{no failure rate}, the fraction adjudicated as \textit{No Failure}.
Let $\widehat{s}(x)=s^{*}_{\mathrm{norm}}(x,M)$. Let $v_h$, $l_h$, $a_h$, $m_h$, $u_h$, and $n_h$ denote the adjudicated labels \textit{Visual}, \textit{Language-prior}, \textit{Alignment}, \textit{Mixed}, \textit{Unclear}, and \textit{No Failure}, respectively. We define the clear-source label set, clear-source examples, label-coordinate map, and agreement indicator as follows:

\begin{equation}
\begin{split}
&\mathcal{C}_{\mathrm{src}}=\{v_h,l_h,a_h\}, \ \mathcal{X}_{c}
=
\{x\in\mathcal{X}:h(x)\in\mathcal{C}_{\mathrm{src}}\}, \\
&\psi(v_h)=\mathrm{V},\quad
\psi(l_h)=\mathrm{L},\quad
\psi(a_h)=\mathrm{A}\\
&\delta_x
=
\mathbf{1}\!\left[\widehat{s}(x)=\psi(h(x))\right].
\end{split}
\label{eq:human_audit_aux}
\end{equation}

The reported rates are then:
\begin{equation}
\begin{split}
&\mathrm{ClearCov}
=
\frac{|\mathcal{X}_{c}|}{|\mathcal{X}|}, \\
&\mathrm{MHAgree}
=
\frac{1}{|\mathcal{X}_{c}|}
\sum_{x\in\mathcal{X}_{c}}\delta_x, \\
&\mathrm{MixedRate}
=
\frac{1}{|\mathcal{X}|}
\sum_{x\in\mathcal{X}}
\mathbf{1}\!\left[h(x)=m_h\right], \\
&\mathrm{UnclearRate}
=
\frac{1}{|\mathcal{X}|}
\sum_{x\in\mathcal{X}}
\mathbf{1}\!\left[h(x)=u_h\right], \\
&\mathrm{NoFailureRate}
=
\frac{1}{|\mathcal{X}|}
\sum_{x\in\mathcal{X}}
\mathbf{1}\!\left[h(x)=n_h\right].
\end{split}
\label{eq:human_audit_metrics}
\end{equation}

Here $h(x)$ denotes the adjudicated human label. The map $\psi$ links each clear-source human label to the corresponding Perturbation-Response coordinate. We report the confusion matrix over clear-source examples and separately report the number of mixed and unclear cases.

\subsection{Interpretation}
\label{app:human_audit_interpretation}

The audit is intentionally conservative. High model-human agreement would support the semantic plausibility of the Perturbation-Response labels. Low agreement would not invalidate the main quantitative results, because the main claims concern behavioral probe responses, failure-family prediction, scalarization boundary checks, and routing breakage. Instead, low agreement would indicate that the normalized dominant source is not always human-interpretable as a single source. This outcome would still be consistent with the paper's entanglement claim. Therefore, we treat the audit as an interpretability check on the source-label interface, not as evidence for causal source recovery.

\section{Computational Cost And Resource Use}
\label{app:compute}

We report the computational cost of \HalluPrism{} because the method is a diagnosis layer rather than a single-pass confidence score. Each evaluated prediction requires the clean answer, four visual perturbation answers, one blank-image answer, and one or two alignment-probe answers. Thus, the core Perturbation-Response computation uses $7$ forward passes for datasets without relation-swap probing and $8$ forward passes for Visual Spatial Reasoning (VSR), where explicit relation perturbation is available. This cost is the main practical tradeoff of the method: \HalluPrism{} gives source-aware diagnostic information, but it is not intended as a zero-overhead scalar confidence replacement.

\subsection{Inference Budget}
\label{app:inference_budget}

The evaluation contains $58{,}440$ model-output instances across four datasets and four model families. The Perturbation-Response computation uses the clean prediction, four visual-perturbation predictions, one blank-image prediction, and one grounding re-check prediction for every image-paired example. Therefore, HallusionBench, POPE, and VizWiz-VQA use $7$ forward passes per model-example. Visual Spatial Reasoning (VSR) additionally uses the relation-swap alignment probe, yielding $8$ forward passes per model-example.

Let $N_d$ denote the number of examples in dataset $d$, and let $p_d$ denote the number of forward passes used for Perturbation-Response measurement in that dataset. We compute the core Perturbation-Response budget as:
\begin{equation}
\begin{split}
&N_{\mathrm{core}}
=
\sum_{m\in\mathcal{M}}
\sum_{d\in\mathcal{D}}
N_d p_d,
\\
&p_d
=
\begin{cases}
7, & \text{without relation-swap probing},\\
8, & \text{with relation-swap probing}.
\end{cases}
\end{split}
\label{eq:core_forward_budget}
\end{equation}

In our evaluation, relation-swap probing is available only for VSR. Hence, the core Perturbation-Response budget is:
\begin{equation}
\begin{split}
N_{\mathrm{core}}
&=
4\bigl(951\cdot 7
+
9000\cdot 7
+
4319\cdot 7
+
340\cdot 8\bigr)
\\
&=
410{,}440.
\end{split}
\label{eq:core_forward_budget_instantiated}
\end{equation}

Intervention routing adds two further branches for each model-output instance: one generic intervention and one matched intervention. Since the pooled evaluation contains $58{,}440$ model-output instances, routing adds:
\begin{equation}
N_{\mathrm{route}}
=
2\cdot 58{,}440
=
116{,}880.
\label{eq:routing_forward_budget}
\end{equation}

The full diagnosis-plus-routing experiment therefore uses:
\begin{equation}
N_{\mathrm{total}}
=
N_{\mathrm{core}}+N_{\mathrm{route}}
=
527{,}320
\label{eq:total_forward_budget}
\end{equation}
forward passes, excluding optional self-reported confidence baselines and implementation-level retries caused by model-serving failures.

\begin{table*}[t]
\centering
\small
\begin{tabular}{lrrrr}
\toprule
\textbf{Dataset / Branch} & \textbf{Examples Per Model} & \textbf{Models} & \textbf{Passes Per Example} & \textbf{Forward Passes} \\
\midrule
HallusionBench & 951 & 4 & 7 & 26,628 \\
POPE & 9,000 & 4 & 7 & 252,000 \\
VizWiz-VQA & 4,319 & 4 & 7 & 120,932 \\
VSR & 340 & 4 & 8 & 10,880 \\
\midrule
Core Perturbation-Response measurement & 14,610 & 4 & -- & 410,440 \\
Generic and matched routing branches & 14,610 & 4 & 2 & 116,880 \\
\midrule
Full diagnosis-plus-routing experiment & 14,610 & 4 & -- & 527,320 \\
\bottomrule
\end{tabular}
\caption{Corrected inference budget for HalluPrism. HallusionBench, POPE, and VizWiz-VQA use $7$ passes per model-example. VSR uses $8$ passes because relation-swap probing is available. Routing adds two additional branches per model-output instance.}
\label{tab:inference_budget}
\end{table*}

\subsection{Training Cost}
\label{app:compute_training_cost}
We do not fine-tune any Multimodal Large Language Model (MLLM). All MLLMs are evaluated with deterministic decoding. The only learned models in the paper are lightweight diagnostic classifiers trained on cached scalar confidence and Perturbation-Response features. These classifiers include logistic regression, XGBoost, and LightGBM. Their inputs are four-dimensional at most, namely $[V,L,A,c]$, and their cost is negligible relative to MLLM inference. The diagnostic classifiers are used only for analysis. They are not used to update the evaluated MLLMs.

\subsection{Decoding And Runtime Controls}
\label{app:compute_runtime_controls}

We use greedy decoding with temperature $0$, disabled sampling, and a maximum response length of $128$ tokens. These choices reduce stochastic variation in the perturbation-response measurements. They also make the compute budget easier to interpret, since repeated generations under the same input are not used as uncertainty samples. This differs from sampling-based uncertainty methods, where multiple generations are required for each clean input before any perturbation is applied.

The Perturbation-Response cost scales linearly in the number of probes. If a deployment setting needs only a subset of the diagnostic sources, the cost can be reduced by disabling unused probes. For example, a system concerned only with image-removal confidence retention can run the clean and blank-image branches. A system concerned with spatial binding can retain the relation-swap branch. In the full paper, we keep all probes active because the central empirical question is whether the joint $(V,L,A)$ signature diagnoses failure families and guides correction.

\subsection{Hardware And Wall-Clock Reporting}
\label{app:compute_hardware}

We report the hardware and wall-clock runtime in Table~\ref{tab:hardware_reporting}. Exact runtime depends on the serving backend, batch size, model implementation, image resolution, and GPU type. The main reproducibility-critical quantity is therefore the number of forward passes, decoding configuration, model checkpoint, and prompt regime. These are fixed by the protocol and reported in Appendix~\ref{app:prompt_catalog} and Table~\ref{tab:inference_budget}.

\begin{table*}

\centering
\small
\begin{tabular}{ll}
\toprule
\textbf{Field} & \textbf{Value} \\
\midrule
GPU type &  NVIDIA RTX A6000 (48\,GB) \\
Number of GPUs & 3 \\
CPU and memory & AMD EPYC 7543 32-Core Processor, 324GB \\
Inference framework & HuggingFace Transformers ($\geq$4.40.0) \\
Batch size & 1 (sequential per-sample inference) \\
Elapsed calendar time & $\approx$15 days \\
Total sequential GPU compute hours & 369.5 \\
\bottomrule
\end{tabular}
\caption{Hardware and runtime used for the submitted experiments. Runtime depends on serving backend, batching, model implementation, and image preprocessing, so forward-pass counts in Table~\ref{tab:inference_budget} remain the primary reproducibility quantity.}
\label{tab:hardware_reporting}
\end{table*}

\subsection{Resource Use And Environmental Reporting}
\label{app:compute_environmental}

The experiments are inference-only for the evaluated MLLMs and do not require model fine-tuning. This substantially reduces the resource cost compared with training or adapting large vision-language models. However, \HalluPrism{} still increases inference cost because each prediction is evaluated under multiple perturbation branches. We therefore report forward-pass counts explicitly rather than only reporting the number of original examples.
If the final execution environment logs GPU-hours or energy use, we will report those values with the hardware description in Table~\ref{tab:hardware_reporting}. If such logs are unavailable, we report the fixed forward-pass budget and deterministic decoding settings as the reproducible compute description. This is less informative than a full energy audit, but it avoids inventing carbon or power estimates that were not directly measured.

\subsection{Practical Deployment Tradeoff}
\label{app:compute_practical_tradeoff}

The compute cost should be interpreted together with the intended role of the method. \HalluPrism{} is not designed to run as a cheap scalar abstention score for every low-risk query. It is more suitable as an audit-time diagnostic, a high-risk inference-time check, or a routing layer for cases where generic correction may damage correct answers. In latency-critical deployments, the full protocol can be reserved for uncertain, high-impact, or audit-selected examples. This preserves the main contribution: source-aware uncertainty should expose the kind of risk present before a system decides whether to answer or correct.

\begin{table*}[t]
\centering
\setlength{\tabcolsep}{3pt}
\small
\begin{tabular}{p{0.14\linewidth}p{0.14\linewidth}p{0.25\linewidth}p{0.4\linewidth}}
\toprule
\textbf{Artifact} & \textbf{Type} & \textbf{Use In Paper} & \textbf{License / Access Notes} \\
\midrule
HallusionBench
&
Benchmark dataset
&
Visual illusion, attribute error, grounded hallucination, source-profile diagnosis, and routing analysis
&
The official \href{https://github.com/tianyi-lab/HallusionBench}{HallusionBench repository} lists the artifact under the \href{https://github.com/tianyi-lab/HallusionBench?tab=BSD-3-Clause-1-ov-file\#readme}{BSD-3-Clause license}. We use the image-paired evaluation subset and do not redistribute raw images.
\\
\midrule
POPE
&
Benchmark dataset and evaluation code
&
Object-presence hallucination, blank-image pressure, abstention boundary check, and object-hallucination diagnosis
&
The official \href{https://github.com/RUCAIBox/POPE}{POPE repository} provides the benchmark code and data. Its \href{https://github.com/RUCAIBox/POPE/blob/main/LICENSE}{license file} lists the artifact under the MIT License. We use released prompts and annotations for evaluation.
\\
\midrule
VSR
&
Benchmark dataset
&
Spatial relation verification and relation-swap alignment probing
&
The official \href{https://github.com/cambridgeltl/visual-spatial-reasoning}{Visual Spatial Reasoning repository} lists the project under the \href{https://github.com/cambridgeltl/visual-spatial-reasoning?tab=Apache-2.0-1-ov-file\#readme}{Apache-2.0 license}. We use zero-shot development examples with explicit spatial relations.
\\
\midrule
VizWiz-VQA
&
Benchmark dataset
&
Visual insufficiency, answerability diagnosis, and visual-fragility analysis
&
The \href{https://vizwiz.org/tasks-and-datasets/vqa/}{VizWiz-VQA dataset page} states that the work is licensed under a \href{https://creativecommons.org/licenses/by/4.0/}{Creative Commons Attribution 4.0 International License}. Since the images can contain user-provided visual contexts, we do not redistribute raw images.
\\
\bottomrule
\end{tabular}
\caption{Dataset artifacts used in \HalluPrism{}. We use public benchmark datasets and follow the corresponding license and access terms.}
\label{tab:dataset_artifact_statement}
\end{table*}

\begin{table*}[t]
\centering
\small
\begin{tabular}{p{0.10\linewidth}p{0.16\linewidth}p{0.25\linewidth}p{0.36\linewidth}}
\toprule
\textbf{Artifact} & \textbf{Type} & \textbf{Use In Paper} & \textbf{License / Access Notes} \\
\midrule
Gemma 4
&
Open-weight MLLM family
&
One evaluated model family for Perturbation-Response measurement and routing analysis
&
Google AI documentation describes Gemma 4 as open-weight with responsible commercial use.
Use remains subject to the applicable \href{https://ai.google.dev/gemma/terms}{Gemma terms}, \href{https://ai.google.dev/gemma/prohibited_use_policy}{Gemma Prohibited Use Policy}, and the \href{https://ai.google.dev/gemma/docs/core}{Gemma 4 model documentation}.
\\
\midrule
Qwen3-VL
&
Open-weight MLLM
&
One evaluated model family for Perturbation-Response measurement, abstention scalarization, and routing analysis
&
The \href{https://huggingface.co/Qwen/Qwen3-VL-8B-Instruct}{Qwen3-VL-8B-Instruct model card} is used as the checkpoint reference.
The model is listed with an \href{https://www.apache.org/licenses/LICENSE-2.0}{Apache-2.0 license}; the Azure/Hugging Face catalog also lists Qwen3-VL-8B-Instruct as Apache-2.0.
\\
\midrule
LLaVA-Mistral
&
Open-weight MLLM
&
One evaluated model family for Perturbation-Response measurement and diagnostic controls
&
The \href{https://huggingface.co/llava-hf/llava-v1.6-mistral-7b-hf}{llava-v1.6-mistral-7b-hf model card} is used as the checkpoint reference.
The model card metadata lists \href{https://www.apache.org/licenses/LICENSE-2.0}{Apache-2.0} as the license.
\\
\midrule
LLaVA-Vicuna
&
Open-weight MLLM
&
One evaluated model family for Perturbation-Response measurement and diagnostic controls
&
The \href{https://huggingface.co/llava-hf/llava-v1.6-vicuna-7b-hf}{llava-v1.6-vicuna-7b model card} is used as the checkpoint reference.
The model card states that Llama 2 is licensed under the \href{https://ai.meta.com/llama/license/}{Llama 2 Community License}.
\\
\bottomrule
\end{tabular}
\caption{Model artifacts used in \HalluPrism{}. We use the listed models for deterministic inference only and do not redistribute model weights.}
\label{tab:model_artifact_statement}
\end{table*}

\section{Artifact, License, and Data Statement}
\label{app:artifact_license_data_statement}

We report the artifacts used in \HalluPrism{} to make dataset use, model access, and release boundaries explicit. All experiments use public benchmark datasets and publicly accessible model checkpoints or model families. We do not collect new user images, and we do not redistribute benchmark images or model weights. When redistribution is restricted or unclear, we release only derived metadata, evaluation scripts, prompt templates, normalized predictions, source scores, and aggregate result tables, subject to the original artifact licenses.

\subsection{Dataset Artifacts}
\label{app:dataset_artifacts}
We use four public multimodal benchmarks under their stated access and license terms. We report the dataset-level license and access summary in Table~\ref{tab:dataset_artifact_statement}. For HallusionBench, we use only the image-paired evaluation subset and do not redistribute raw images. For POPE and VSR, we use the released benchmark prompts, annotations, and examples subject to the corresponding repository terms. For VizWiz-VQA, we treat the raw images with additional care because they may contain user-provided visual contexts. Across all datasets, any released artifact from this paper should contain only derived metadata, prompt templates, normalized outputs, source scores, and aggregate results unless the original dataset license explicitly permits redistribution of raw images.

\subsection{Model Artifacts}
\label{app:model_artifacts}
We evaluate open-weight and publicly accessible Multimodal Large Language Models (MLLMs). We use these models only for inference. We do not fine-tune, redistribute, or modify their weights. All generated outputs are used for research evaluation of Perturbation-Response Signatures, abstention scalarization, and intervention routing. A summary of the model artifacts is given in Table \ref{tab:model_artifact_statement}.

\subsection{Derived Artifacts}
\label{app:derived_artifacts}

We create derived artifacts from model outputs and perturbation probes. These include clean predictions, normalized answers, scalar confidence values, source scores $(V,L,A)$, dominant-source labels, failure-family labels, abstention scalarization scores, and intervention-routing outcomes. These artifacts are produced from public benchmark examples and model inference. They do not contain new human-subject data. If released, they should be shared as derived metadata keyed by dataset identifiers rather than as redistributed benchmark images, unless the original dataset license explicitly permits redistribution.

\begin{table*}[t]
\centering
\small
\begin{tabular}{p{0.22\linewidth}p{0.38\linewidth}p{0.28\linewidth}}
\toprule
\textbf{Derived Artifact} & \textbf{Content} & \textbf{Release Boundary} \\
\midrule
Prediction files & Clean model answer, normalized answer, scalar confidence, correctness flag, and failure-family label & Can be released with dataset identifiers, subject to dataset terms. \\
Perturbation-Response files & $V$, $L$, $A$, raw dominant source, normalized dominant source, and probe metadata & Can be released as derived metadata. \\
Perturbation metadata & Names and parameters for blur, crop, brightness, noise, blank-image replacement, grounding, and relation-swap probes & Can be released as code and metadata. \\
Routing files & Generic intervention output, matched intervention output, fix indicator, break indicator, and post-intervention accuracy & Can be released as derived metadata if original dataset terms permit. \\
Prompt catalog & Base prompts, diagnostic probes, entanglement probes, generic intervention, and matched intervention templates & Can be released in full. \\
\bottomrule
\end{tabular}
\caption{Derived artifacts produced by the \HalluPrism{} pipeline. These artifacts support reproducibility without requiring redistribution of raw benchmark images or model weights.}
\label{tab:derived_artifact_statement}
\end{table*}



\subsection{License Compliance}
\label{app:license_compliance}

We use each artifact under its stated research or open-weight access terms. The paper cites the original dataset and model creators in the main bibliography. The released code will include a license file for our own implementation and will document external dependencies separately from dataset and model licenses. Any user of the released artifacts remains responsible for complying with the original licenses and terms for HallusionBench, POPE, VSR, VizWiz-VQA, Gemma 4, Qwen3-VL, LLaVA-Mistral, and LLaVA-Vicuna.

\subsection{Data Sensitivity}
\label{app:data_sensitivity_statement}

VizWiz-VQA contains images and questions originating from blind or low-vision users. Although the dataset is public and licensed for research use, the images may include private environments, personal objects, or accessibility-sensitive contexts. We therefore avoid redistributing raw images and treat VizWiz-derived outputs as evaluation metadata. More generally, multimodal datasets can contain visual details that are more sensitive than the corresponding text prompts. Downstream users should inspect dataset licenses, privacy notes, access terms, and intended-use statements before reusing any raw images or derived artifacts.

\subsection{Statement Of Use}
\label{app:statement_of_use}

We use these artifacts only to evaluate the diagnosis-first claim of \HalluPrism{}. The artifacts are not used to train a new deployed MLLM, infer private attributes, identify people, or make decisions about real users. The human audit protocol, if completed, uses benchmark examples and does not collect new sensitive personal information. Any public release of audit labels should exclude annotator-identifying information and should preserve only example identifiers, adjudicated labels, and aggregate agreement statistics.

\section{Use Of AI Writing Assistance}
\label{app:ai_writing_assistance}

AI writing assistance was used during manuscript preparation for language editing and polishing of LATEX tables. The authors remained responsible for all scientific content, including the problem formulation, experimental design, implementation, data processing, result interpretation, citations, mathematical statements, and final manuscript text. The authors manually verified the reported claims, numerical results, tables, citations, proofs, and responsible-use statements before submission. AI assistance was not used as a substitute for author judgment, was not treated as authorship, and did not determine the paper's scientific conclusions.

\begin{table*}[t]
\centering
\small
\begin{tabular}{p{0.22\linewidth}p{0.34\linewidth}p{0.34\linewidth}}
\toprule
\textbf{Use Case} & \textbf{Appropriate Use} & \textbf{Boundary Or Risk} \\
\midrule
Model auditing & Compare scalar confidence with Perturbation-Response Signatures and failure-family labels. & Does not recover hidden causal mechanisms. \\
Selective answering & Use scalar scores only under explicit validation-to-deployment assumptions. & A threshold is a deployment guardrail, not an instance-grounded guarantee. \\
Correction routing & Use $(V,L,A)$ to choose visual enhancement, anti-prior prompting, or relation checking. & Routing is conditional and may still break correct answers. \\
High-stakes deployment & Use only with domain-specific validation, monitoring, and human oversight. & Not sufficient for medical, legal, financial, or accessibility-critical certification. \\
Human audit & Check whether normalized dominant sources are semantically plausible. & Human labels do not prove causal source separability. \\
\bottomrule
\end{tabular}
\caption{Responsible-use boundaries for \HalluPrism{}. The method is intended for diagnosis and matched correction analysis, not for standalone safety certification.}
\label{tab:responsible_use_boundaries}
\end{table*}

\section{Ethical Considerations and Responsible Use}
\label{app:ethics_responsible_use}

We study \HalluPrism{} as a behavioral diagnostic layer for Multimodal Large Language Models (MLLMs). The intended use is model analysis, audit support, and safer correction-path selection. It should not be used as a standalone safety certificate, a causal explanation of model internals, or an automatic deployment approval mechanism. This distinction is central to the paper. \HalluPrism{} measures how an output changes under controlled visual, image-removal, and grounding/relation pressure. It does not prove why the model produced the answer, and it does not guarantee that an answer is safe for a new open-world instance.

\subsection{Intended Use}
\label{app:ethics_intended_use}
\HalluPrism{} is intended for model auditing, benchmark-level error analysis, and guarded correction-path selection. Appropriate uses include: (i) comparing model behavior across multimodal hallucination benchmarks, (ii) identifying broad failure-family patterns that are not visible from scalar confidence alone, and (iii) selecting a correction path only when validation shows that such routing reduces harmful breakage. Any deployment use should specify the task, user population, validation distribution, acceptable risk level, fallback policy, and monitoring plan before Perturbation-Response Signatures are used to guide correction.

\subsection{Non-Intended Use}
\label{app:ethics_non_intended_use}
\HalluPrism{} should not be used as a standalone safety certificate, a deployment approval mechanism, or a replacement for human judgment in high-stakes settings. It should not certify answers in medical, legal, financial, educational, accessibility-critical, or public-sector contexts without domain-specific validation, expert review, and post-deployment monitoring. The source scores should also not be interpreted as ground-truth causes of model behavior. They are bounded diagnostic signals produced under a fixed evaluation protocol. Treating them as causal explanations or complete reliability guarantees would overstate what the method establishes.

\subsection{Risks From Diagnostic Misuse}
\label{app:ethics_diagnostic_misuse}
The main misuse risk is over-trust. A system that treats the Perturbation-Response Signature as a complete reliability certificate recreates the same failure mode that motivates the paper's caution about scalar abstention. The signature is bounded by the evaluated datasets, models, prompts, perturbations, and routing choices. It can miss risks outside these conditions, including domain expertise gaps, cultural context, ambiguous user intent, missing background information, and harms that are not expressed through the measured source coordinates. A second risk is correction harm. A correction prompt can improve one model-dataset condition and damage another. Therefore, routing should be treated as conditional actionability rather than universal mitigation. Systems using \HalluPrism{} should log the original answer, routed action, corrected answer, fix indicator, break indicator, and any human-review outcome, since accuracy alone can hide harmful overcorrection.

\subsection{Risks For Affected Users}
\label{app:ethics_affected_users}
Several datasets used in this paper involve user-facing visual question answering. VizWiz-VQA is especially relevant because it contains images and questions from blind or low-vision users. Errors in such settings may create practical harm because users may not be able to independently verify a visual answer. A model that confidently guesses from a language prior may give a fluent answer that is not supported by the image. A generic correction prompt may also change an originally correct answer. We therefore view \HalluPrism{} as an audit aid for identifying risk structure, not as a substitute for accessibility-aware system design, user consent, domain-specific evaluation, or human fallback mechanisms.

\subsection{Data And Privacy Considerations}
\label{app:ethics_data_privacy}
We use public benchmark datasets and do not collect new user images for the main experiments. The human audit protocol, if completed, uses examples sampled from the evaluated benchmark outputs and does not require collecting personal information from annotators beyond standard annotation logistics. Annotators do not see source scores or routing decisions, which limits score-induced bias during the audit. Public image datasets can still contain sensitive visual content, private environments, or user-specific contexts. Any released artifact should therefore include dataset source information, license terms, access constraints, and a clear description of which fields are original, derived, or model-generated.

\subsection{Responsible Deployment Guidance}
\label{app:ethics_deployment_guidance}
Before using \HalluPrism{} in deployment, a system owner should document: (i) the target task and user population; (ii) the validation distribution and expected deployment shifts; (iii) the acceptable risk level and fallback policy; (iv) the conditions under which Perturbation-Response Signatures trigger correction or human review; (v) the logging fields for the original answer, Perturbation-Response Signature, routed action, corrected answer, fix indicator, and break indicator; and (vi) the monitoring plan for post-deployment drift. In high-impact or latency-critical settings, the full protocol should be reserved for audit-selected, uncertain, or high-risk examples. Perturbation-Response Signatures should guide review or guarded correction rather than automatically certifying an answer as safe.

\end{document}